\newcommand{\hathat}[1]{%
\begingroup%
  \let\macc@kerna\z@%
  \let\macc@kernb\z@%
  \let\macc@nucleus\@empty%
  \hat{\raisebox{.2ex}{\vphantom{\ensuremath{#1}}}\smash{\hat{#1}}}%
\endgroup%
}
\newcommand{\neurips}[1]{{}}
\newcommand{\revised}[1]{{}}
\def\attn{\mathrm{attn}}
\def\mlp{\mathrm{MLP}}
\def\tf{\mathrm{TF}}
\newcommand{\calI}{\mathcal{I}}       
\newcommand{\calD}{\mathcal{D}}       
\newcommand{\Dlabel}{\calD_{\text{label}}}
\newcommand{\Dunlabel}{\calD_{\text{unlabel}}}
\definecolor{darkblue}{RGB}{0, 0, 210}
\title{Unlabeled Data Can Provably Enhance In‑Context Learning of Transformers}
\author{%
  Renpu Liu \\
  University of Virginia\\
  Charlottesville, VA 22903 \\
  \texttt{renpu@virginia.edu} \\
  \And
  Jing Yang \\
  University of Virginia\\
  Charlottesville, VA 22903 \\
  \texttt{yangjing@virginia.edu} 
}
\begin{document}

\maketitle

\begin{abstract}

Large language models (LLMs) exhibit impressive in‑context learning (ICL) capabilities, yet the quality of their predictions is fundamentally limited by the few costly {\it labeled} demonstrations that can fit into a prompt. Meanwhile, there exist vast and continuously growing amounts of {\it unlabeled} data that may be closely related to the ICL task. How to utilize such unlabeled data to provably enhance the performance of ICL thus becomes an emerging fundamental question.  
In this work, we propose a novel {\it augmented ICL} framework, in which the prompt includes a small set of labeled examples alongside a block of unlabeled inputs. 
We focus on the multi-class linear classification setting and demonstrate that, with chain-of-thought (CoT) prompting, a multi-layer transformer can effectively emulate an expectation-maximization (EM) algorithm. This enables the transformer to implicitly extract useful information from both labeled and unlabeled data, leading to provable improvements in ICL accuracy. Moreover, we show that such a transformer can be trained via teacher forcing, with its parameters converging to the desired solution at a linear rate.
Experiments demonstrate that the augmented ICL framework consistently outperforms conventional few-shot ICL, providing empirical support for our theoretical findings. {\it To the best of our knowledge, this is the first theoretical study on the impact of unlabeled data on the ICL performance of transformers.}
\end{abstract}

\section{Introduction}

Since the introduction~\citep{vaswani2017attention}, transformers have become foundational models in diverse fields such as natural language processing~\citep{radford2018improving, devlin2019bert}, computer vision~\citep{dosovitskiy2021image}, and reinforcement learning~\citep{chen2021decision}. A key driver of their impact is the remarkable capability for In-Context Learning (ICL)~\citep{brown2020language}. Without requiring parameter updates, transformers performing ICL can adapt to new tasks based solely on contextual examples provided within the prompt. This enables state-of-the-art few-shot performance across a multitude of applications, including reasoning and language understanding~\citep{chowdhery2022palm}, dialog generation~\citep{thoppilan2022lamda}, and linear regression~\citep{garg2023transformers,fu2023Transformers}, etc.

Despite the power of ICL, its reliance on labeled examples presents a significant bottleneck for large language models (LLMs). Acquiring high-quality labeled data is in general expensive and time-consuming~\citep{zhou2023lima, chung2024scaling,sun2023principle,wang2022self}. For example, creating the instruction-tuning and RLHF datasets for models like GPT-3.5 and GPT-4 involved thousands of expert annotator hours, yet constituted less than $0.1\%$ of the tokens encountered during pre-training~\citep{ouyang2022training,openai2023gpt4}. 

Some existing approaches attempt to mitigate labeled data scarcity in ICL. For instance, \citet{wan2023universal,chenmaple} use an LLM to automatically generate pseudo-demonstrations at inference time by pairing unlabeled queries with the model’s own predictions as pseudo labels. 
However, model-generated pseudo-labels inevitably inherit the biases and error patterns of the teacher model, resulting in noisy demonstrations that may limit potential performance gains. 

In this work, instead of synthesizing examples with pseudo-labels, we explore a different approach by directly utilizing abundant and continuously growing~\citep{raffel2020exploring,touvron2023llama} unlabeled data during ICL. The fundamental question we aim to answer is:

\begin{center}
 \emph{Can we provably enhance the ICL performance of transformers by effectively leveraging plentiful unlabeled data alongside limited labeled examples?}
\end{center}

We answer this question affirmatively from a new \emph{augmented in‑context learning} perspective. This paradigm involves prompting a transformer with a mixture of a few labeled examples and numerous unlabeled examples, aiming to infer the missing labels within a single forward pass. By reasoning over unlabeled examples directly in the prompt, it bypasses the need for potentially costly, time-consuming, and bias-introducing labeling or pseudo-label generation steps in conventional ICL.  %
In this work, we focus on augmented ICL for {\it multi-class linear classification}. Our main contributions are as follows.

\begin{itemize}[leftmargin=*]\itemsep=0pt %

\item \textbf{Expressiveness with CoT Prompting.} First, we show that through Chain-of-Thought (CoT) prompting, a multi-layer transformer can leverage both labeled and unlabeled data to effectively solve the multi-class linear classification problem during ICL. Essentially, the transformer is able to obtain an initial estimation of the mean vectors of classes using the {\it labeled} data, and then iteratively refine the estimates by clustering the {\it unlabeled} data in an Expectation–Maximization (EM) fashion. We explicitly characterize the design of the transformer and theoretically prove that the class mean estimation will converge to the ground truth as the CoT steps increase. For a prompt consisting of $N$ labeled and $M$ unlabeled samples, the excess risk of our approach scales in $\mathcal{O}\,(1\!/\!\sqrt{N\!+\!\mathrm{poly}(M)})$, strictly improving the excess risk lower bound of $\mathcal{O}\,(1\!/\!\sqrt{N})$ for any classifier that utilizes $N$ labeled samples only. Our results indicate that the augmented ICL can effectively utilize the information from the unlabeled data, enabling steady performance improvement as unlabeled data increases.

\item \textbf{Training Convergence under Teacher Forcing.} Second, we prove that, with proper initialization, when applying gradient descent on the population loss defined through teacher forcing, the tunable parameters of the transformer converge to the desired solution linearly. Thus, the trained transformer can mimic the EM algorithm through CoT prompting during inference, theoretically demonstrating that the expressive solution for augmented ICL is identifiable and learnable.
Our proof involves a novel decomposition of the gradient of the CoT training loss into two analytically tractable terms. For each of them, we leverage the inherent isotropy of the involved quantities to simplify the analysis, which enables us to derive a tight upper bound on the critical inner-product term and obtain the linear convergence rate.

\item \textbf{Empirical Results.} Finally, we evaluate the performance of augmented ICL in transformers trained via teacher forcing. Our experimental results show that the augmented ICL approach significantly outperforms conventional ICL in both class mean estimation and label prediction, with the advantage becoming more pronounced as the number of unlabeled data samples increases. Moreover, augmented ICL surpasses the Bayes-optimal classifier that relies solely on labeled data. These empirical observations are consistent with our theoretical findings.
\end{itemize}

\section{Related Works}

\paragraph{ICL with Transformers.}

\citet{brown2020language} first shows that GPT-3, a transformer-based LLM, can perform new tasks from input-output pairs without parameter updates, suggesting its ICL ability. This intriguing phenomenon of transformers has attracted much attention, leading to various interpretations and hypotheses about its underlying mechanism. Research on ICL often demonstrates how transformers can emulate learning algorithms. 
For instance, several studies have designed transformers that execute gradient descent for linear and non-linear regression tasks \citep{akyurek2022learning, vonoswald2023transformers}. 
Recent works demonstrate that transformers can implement more advanced optimization algorithms other than vanilla gradient descent on various ICL tasks \citep{bai2023transformers, von2023uncovering, zhang2023trained, ahn2024transformers, liu2024learn}.
Another line of research adopts a statistical perspective: ICL can be viewed as an implicit form of Bayesian updating based on the examples provided in the prompt, with the diversity of pretraining data shaping the prior
\citep{xie2021explanation,raventos2023pretraining,garg2023transformers}.

Several studies \citep{gupta2023contextenvironment, agarwal2024many} investigate ``unsupervised ICL'', in which the prompt consists solely of unlabeled inputs. 
Another line of work leverages LLMs to generate pseudo-labels for unlabeled data, which are then used as demonstrations during ICL~\citep{chen2023self, wan2023universal, yang2023auto, chenmaple}. Our work leverages both labeled and unlabeled examples within the prompt to enhance ICL performance in a {\it semi-supervised learning} manner, which stands in sharp contrast to the aforementioned studies.

Notably, a recent concurrent work \citep{li2025and} also investigates the impact of the semi-supervised data model on the ICL performance of transformers. Specifically, \citet{li2025and} focus on a linear transformer without nonlinear activations in a binary classification setting, and characterize the asymptotic ICL performance as the number of unlabeled samples approaches infinity. In contrast, we study a more realistic architecture that incorporates the softmax attention mechanism and establish a {\it non-asymptotic} convergence guarantee
in the general {\it multi-class} setting.

\paragraph{Training Dynamics of Transformers.} A number of recent works aim to provide theoretical characterizations of the training dynamics of transformers.
\citet{ahn2024transformers,mahankali2023one,zhang2023trained,huang2023incontext} investigate the training dynamics of transformers with a single attention layer and a single head for in-context linear regression tasks. 
\citet{cui2024superiority} prove that transformers with multi-head attention layers outperform those with single-head attention. \citet{cheng2023Transformers} show that local optimal solutions in transformers can perform gradient descent in-context for non-linear functions. \citet{kim2024Transformers} study the non-convex mean-field dynamics of transformers, and \citet{nichani2024Transformers} characterize the convergence rate for the training loss in learning a causal graph. Additionally, \citet{chen2024training} investigate the gradient flow in training multi-head single-layer transformers for multi-task linear regression. \citet{chen2024provably} propose a supervised training algorithm for multi-head transformers.
The training dynamics of transformers for binary classification \citep{tarzanagh2023max, tarzanagh2023Transformers,vasudeva2024implicit,li2023theoretical,deora2023optimization,li2024training}, multi-class classification \citep{shen2024training} and next-token prediction  \citep{tian2023scan,tian2023joma,li2024mechanics,huang2024non} have also been studied recently. 

\paragraph{Transformers with CoT.} In language modeling tasks, transformers have been proven to be powerful across various downstream tasks. However, transformers struggle to solve mathematical or scientific problems with a single generation, particularly when multiple reasoning steps are required. CoT prompting is introduced to enable transformers to generate intermediate results autoregressively before reaching the final answer, and has been shown to boost performance on arithmetic, commonsense, and scientific tasks~\citep{wei2022chain,kojima2022large}.

Recently, the training dynamics of transformers with CoT have been studied in \citet{huang2025transformers} for weight prediction in linear regression, in \citet{li2024training} for in-context supervised learning,
in \citet{kim2024transformersp, wen2024sparse} for the parity problems, 
and in \citet{huang2025transformerslearnregularlanguage} for the even pairs problem. None of these studies, however, address whether the multi‑step reasoning capacity through CoT can be utilized to extract information from \emph{unlabeled} inputs.

\section{Preliminaries}

\textbf{Notations.} For matrix $\Xv$, we use $[\Xv]_{p:q,r:s}$ to denote the submatrix that contains rows $p$ to $q$ and columns $r$ to $s$, and we use $[\Xv]_{:,i}$ and $[\Xv]_{j,:}$ to denote the $i$-th column and $j$-th row of $\Xv$, respectively. For convenience, we occasionally denote the $i$-th column $\Xv$ by $[\Xv]_i$ when no ambiguity arises. $[\Xv]_{:,-C:-1}$ means the last $C$ columns of matrix $\Xv$. 
We use $\|\Xv\|_F$ to denote its Frobenius norm.
For vector $\xv$, we use $\|\xv\|_1$, $\|\xv\|$ and $\|\xv\|_{\infty}$ to denote its $\ell_1$, $\ell_2$ and $\ell_\infty$ norms, respectively. We denote by $\mathbbm{1}_d$ and $\mathbf{0}_d$ the $d$-dimensional all-$1$ and all-$0$ column vectors, respectively.
$\mathbbm{1}_{a\times b}$ and $\mathbf{0}_{a\times b}$ denote the all-$1$ and all-$0$ matrices of size $a\times b$, respectively. We denote the indicator function as \(\mathbf{1}_{\{A\}}\), which equals 1 if event $A$ is true.

\subsection{Transformer Architecture}

In this work, we consider the encoder-based transformer architecture~\citep{vaswani2017attention}, where each transformer layer consists of an attention layer followed by a multi-layer perception (MLP) layer. 

\begin{definition}[Attention layer]\label{def:def-attn}
Denote an $M$-head attention layer parameterized by $\{(\Vv_m,\Qv_m,\Kv_m)_{m\in[M]}\}$  as $\attn_{\{(\Vv_m,\Qv_m,\Kv_m)\}}(\cdot)$,  where $\Vv_m,\Qv_m, \Kv_m\in\Rb^{D\times D}$, $\forall m\in[M]$. Then, given an input sequence $\Hv\in\Rb^{D\times (N+1)}$, the output sequence of the attention layer is
\begin{align}
       \attn_{\{(\Vv_m,\Qv_m,\Kv_m)\}}(\Hv) = 
        \Hv + \sum_{m=1}^M(\Vv_m\Hv)\times \sigma\big((\Kv_m\Hv)^{\top} (\Qv_m\Hv)\big),\nonumber
\end{align}
where $\sigma$ is a {non-linear} activation function.
\end{definition}

\begin{definition}[MLP layer]\label{def:def-mlp}
Given $\Wv_1\in\Rb^{D'\times D}$,  $\Wv_2\in\Rb^{D\times D'}$ and a bias vector $\bv\in\Rb^{D'}$, an MLP layer following the attention layer, denoted as $\mlp_{\{\Wv_1,\Wv_2,\bv\}}$, maps each token in the input sequence (i.e, each column $\hv_i$ in $\Hv\in\Rb^{D\times N}$) to another token as
\begin{align}
        \mlp_{\{\Wv_1,\Wv_2,\bv\}}(\hv_i)=\hv_i+\Wv_2\sigma(\Wv_1\hv_i+\bv),\nonumber
\end{align}
where $\sigma$ is a non-linear activation function.
\end{definition}

\subsection{Augmented In-context Learning}\label{sec:3.2}

\textbf{Conventional In-Context Learning (ICL).} For an ICL task, a trained transformer is given an ICL instance \(\Ic = (\Dc, \xv_{N+1})\), where $\mathcal{D} = \{(\xv_j, y_j)\}_{j \in [N]}$ and \(\xv_{N+1}\) is a query. Here, $\xv_j\in \Rb^d$ is an in-context example, and $y_j$ is the corresponding label for $\xv_j$. For each instance, $\{(\xv_j,y_j)\}_{j=1}^{N+1}$ are generated independently accordingly to an underlying distribution. The objective of ICL is to predict $y_{N+1}$ without any parameter updating of the transformer.

\textbf{Augmented ICL.}\label{sec:aug-icl} In this work, we consider a new unlabeled data augmented ICL framework. Specifically, each ICL instance now comprises a set of labeled examples, $\Dlabel := \{ (\xv_j, y_j) \}_{j=1}^{N}$, and a set of unlabeled examples, $\Dunlabel := \{ \xv_j \}_{j=N+1}^{N+M}$, i.e., $\calI = \Dlabel \cup \Dunlabel$. Similar to conventional ICL, all $(\xv_j,y_j)$ pairs follow the same distribution. The objective of augmented ICL is then to predict labels for all the $M$ unlabeled samples in $\Dunlabel$.

We note that the augmented ICL generalizes the conventional ICL framework, and reduces to it when $M=1$. While the conventional ICL can be utilized to solve the prediction for those $M$ unlabeled samples individually {\it in parallel}, by augmenting them in the same ICL instance, it provides an opportunity for the transformer to extract common statistical information in those unlabeled data, which can be utilized to improve the joint prediction accuracy.

\textbf{Augmented ICL for Multi-class Linear Classification.}\label{sec:icl}
We consider augmented ICL for a multi-class linear classification problem. We assume there exist $C$ classes, and the label space $\mathcal{Y}$ consists of one-hot vectors $\{\mathbf{e}_1, \dots, \mathbf{e}_C\}$, where each $\mathbf{e}_i \in \mathbb{R}^C$ is the $i$-th unit vector. For each ICL instance $\mathcal{I}_{\mathbf{M}}$, the samples are randomly generated according to 
\begin{align}
    \mathbf{M} \sim P_{\mathbf{M}}, \quad \yv_j\sim \mbox{Uniform}(\Yc), \quad \boldsymbol{\epsilon}_j \sim \mathcal{N}(\mathbf{0}, \mathbf{\Sigma}), \quad \mathbf{x}_j = \mathbf{M}\mathbf{y}_j + \boldsymbol{\epsilon}_j, \quad j\in[M+N],\label{def:data-gen}
\end{align}
where $\mathbf{M} \in \mathbb{R}^{d \times C}$ and $P_{\mathbf{M}}$ is a prior distribution over $\Rb^{d\times C}$. 
Denote the columns of $\mathbf{M}$ as $\{\muv_i\}_{i=1}^C$. Then, each \(\xv_j\) essentially follows a \(C\)-component mixture Gaussian distribution parametrized by mean vectors \(\{\muv_i\}_{i=1}^C\) and shared covariance matrix \(\Sigmav\). In this work, we assume $\Sigmav$ is isotropic. We adopt this assumption for theoretical tractability, as it is crucial for deriving the closed-form update rules for the transformer.  This approach is a standard and widely adopted practice in related literature to facilitate theoretical analysis \citep{he2025transformers, zhang2024context, chen2025transformers}.

\subsection{Chain-of-Thought Prompting for Augmented ICL}

The core challenge in augmented ICL is leveraging both unlabeled data and labeled examples to infer task structure from a single instance. Unlike standard few-shot ICL, which often uses direct pattern matching, the augmented ICL requires more complex inference to effectively utilize the larger unlabeled set, making a simple one-step prediction insufficient.

Chain-of-Thought (CoT) reasoning offers a promising way to enhance a transformer's ICL capabilities. This is crucial for augmented ICL, as it enables the transformer to effectively utilize unlabeled data through iterative latent parameter estimation and refinement.

To implement augmented in-context learning via CoT prompting, we first encode a task instance \(\mathcal I\)
into an embedding matrix \(\Hv\) by concatenating three column blocks: the labeled example block, the unlabeled example block, and the reasoning block as follows:
\begin{align}
  \Hv =
  \left[
    \begin{array}{*{3}{>{\columncolor{gray!20}}c}
                  *{3}{>{\columncolor{blue!10}}c}
                  *{3}{>{\columncolor{orange!10}}c}}
      \xv_1   & \cdots & \xv_N        & \xv_{N+1}      & \cdots 
        & \xv_{N+M}   & \mathbf{0} & \cdots & \mathbf{0} \\ 
      \yv_1   & \cdots & \yv_N        & \mathbf{0}     & \cdots 
        & \mathbf{0}  & \mathbf{0} & \cdots & \mathbf{0} \\
      \pv_1   & \cdots & \pv_N        & \pv_{N+1}      & \cdots 
        & \pv_{N+M}   & \qv_1      & \cdots & \qv_C
    \end{array}
  \right]
  \triangleq
  \left[
    \begin{array}{*{1}{>{\columncolor{gray!20}}c}
                  *{1}{>{\columncolor{blue!10}}c}
                  *{1}{>{\columncolor{orange!10}}c}}
      \Xv_{\ell} & \Xv_{u}      & \mathbf{0}  \\ 
      \Yv_{\ell} & \mathbf{0}   & \mathbf{0}  \\
      \Pv_{\ell} & \Pv_{u}      & \Qv^{(0)}
    \end{array}
  \right],
  \label{def:embed}
\end{align}
where \(\pv_j\in\mathbb{R}^{d_p}\) is an auxiliary embedding that stores the (predicted) classification probability vector for the $j$-th sample, as well as a binary indicator to distinguish the labeled and unlabeled data.
\(\qv_i\in\mathbb{R}^{d_p}\) serves as the initial CoT token for class \(i\), which contains the one-hot vector \(\ev_i\) to indicate the corresponding class, and an all-zero vector representing the transformer's initial estimate for the mean vector \(\muv_i\).

Denote a trained transformer with parameter ${\Thetav}$ as  $\tf_{\Thetav}$. With CoT, we will use the transformer to generate $T$ intermediate steps before it outputs the prediction. Specifically, let $\hat{\Hv}^{(t-1)}$ be the input sequence at the $t$-th step of CoT, where $\hat{\Hv}^{(0)} = \Hv$, and \(\tf_{\Thetav}(\hat{\Hv}^{(t-1)})\) as the corresponding output of the transformer. Then, we will take out the last $C$ columns of \(\tf_{\Thetav}(\hat{\Hv}^{(t-1)})\), and append them to the end of $\hat{\Hv}^{(t-1)}$ to form the input for the next CoT step. Specifically, we have 
\begin{equation}
\hat{\Hv}^{(t)}=\left[\hat{\Hv}^{(t-1)},[\tf_{\Thetav}(\hat{\Hv}^{(t-1)})]_{:,-C:-1}\right]= \left[\,
    \begin{array}{*{1}{>{\columncolor{gray!20}}c}
                  *{1}{>{\columncolor{blue!10}}c}
                  *{4}{>{\columncolor{orange!10}}c}}
      \Xv_{\ell} & \Xv_{u}      & \mathbf{0} & \star & \cdots & \star \\ 
      \Yv_{\ell} & \mathbf{0}   & \mathbf{0} & \star & \cdots & \star  \\
      \Pv_{\ell} & \Pv_{u}      & \Qv^{(0)} & \Qv^{(1)} & \cdots & \Qv^{(t)}
    \end{array}
  \,\right],
\label{eq:cot-update}
\end{equation}
where
\begin{equation}
       \Qv^{(t)} 
   = \begin{bmatrix}
        \ev_1 & \cdots & \ev_C \\
       \hat{\muv}_1^{(t)} & \cdots & \hat{\muv}_C^{(t)}  \\
        \star & \cdots & \star 
    \end{bmatrix}.
 \end{equation}
Here \(\star\) is a placeholder for dummy tokens, \(\ev_i\) is the \(i\)-th unit vector, and \(\hat{\muv}_i^{(t)}\) is the estimated mean vector for class \(i\) at the \(t\)-th CoT step.

After $T$ iterations, we read out \(\hat{\muv}^{(T)}_1\cdots\hat{\muv}_C^{(T)}\) from \(\Qv^{(T)}\) as the final estimation of the class mean vectors. Then, the label of each unlabeled data can be estimated through a maximum likelihood estimation, i.e., 
\begin{align}\label{eqn:label}
    \hat{\yv}_j=\left\{\ev_i: i=\arg\min_{i\in[C]} \left\|\xv_j-\hat{\muv}_i^{(T)}\right\|\right\}, \quad j\in[N+1:N+M].
\end{align}

\section{Expressiveness with CoT Prompting for Augmented ICL}\label{sec:expressive}
In this section, we show that a multi-layer transformer \emph{can} implement an Expectation-Maximization (EM)-style algorithm to extract useful statistical information from the unlabeled data,  which will be combined with information extracted from the labeled data to jointly estimate the class means and improve the augmented ICL performance. 
Specifically, we have the following result.

\begin{theorem}\label{thm:thm1}
There exists a 4-layer transformer, such that its output sequence at the \((t+1)\)-th CoT step satisfies
\begin{align}
    \hat{\muv}_i^{(t+1)}
= \hat{\muv}_i^{(t)}
- \frac{\eta^{(t)}}{M}
  \sum_{j=N+1}^{N+M}
    p_{ij}^{(t)}\,\bigl(\hat{\muv}_i^{(t)} - \xv_j\bigr)+\mathbf{1}_{\{t=0\}}\cdot\frac{C}{N}\sum_{j=1}^N (\ev_i^\top\yv_j)\xv_j,\label{def:updateing-rule}
\end{align}
for any \(i\in[C]\), where $\eta^{(t)} = \alpha/(T'+t)$ for some positive constants \(\alpha\) and \(T'\), \(p_{ij}^{(t)}\) is the normalized weight 
\begin{align}
    p_{ij}^{(t)}
=\frac{\sum_{\tau=0}^{t}
             \exp\Bigl(-\tfrac12\|\hat{\muv}_i^{(\tau)} - \xv_j\|^2_{\Sigmav^{-1}}+\beta \tau\Bigr)}{\sum_{\tau=0}^{t}\sum_{c=1}^{C}
             \exp\Bigl(-\tfrac12\|\hat{\muv}_c^{(\tau)} - \xv_j\|^2_{\Sigmav^{-1}}+\beta \tau\Bigr)},\label{def:p-estimation}
\end{align}
and \(\beta\) is a positive constant. 
\end{theorem}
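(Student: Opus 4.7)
My plan is to prove Theorem \ref{thm:thm1} constructively by exhibiting explicit weight matrices for a 4-layer transformer (attention followed by MLP at each layer) whose $(t+1)$-th CoT output is exactly the update (\ref{def:updateing-rule}). The target update decomposes naturally into four sub-computations, which I intend to assign roughly one per layer: (i) the initialization term $(C/N)\sum_{j=1}^N (\ev_i^\top \yv_j)\xv_j$ from the labeled block, gated to fire only when $t=0$; (ii) evaluation of the Mahalanobis distances $\|\hat{\muv}_c^{(\tau)} - \xv_j\|^2_{\Sigmav^{-1}}$ between every class-mean estimate stored in $\Qv^{(0)},\ldots,\Qv^{(t)}$ and every unlabeled token $\xv_j$; (iii) conversion of those distances into the cumulative responsibility $p_{ij}^{(t)}$ of (\ref{def:p-estimation}); and (iv) aggregation of the weighted residuals $(\hat{\muv}_i^{(t)} - \xv_j)$ and their incorporation into $\Qv^{(t+1)}$ via the residual stream.

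For (i), I will use an attention head with query $=\ev_i$ extracted from each $\qv_i$, keys carrying $\yv_j$ on the labeled columns, and values carrying $\xv_j$; a masking coordinate embedded in $\pv_j$ restricts attention to labeled tokens, and an indicator coordinate in $\qv_i$, which the MLP zeros as soon as any CoT step has occurred, gates the contribution to $t=0$. For (ii), I will exploit the decomposition $\|\hat{\muv}_c^{(\tau)} - \xv_j\|^2_{\Sigmav^{-1}} = \|\hat{\muv}_c^{(\tau)}\|^2_{\Sigmav^{-1}} - 2\hat{\muv}_c^{(\tau)\top}\Sigmav^{-1}\xv_j + \|\xv_j\|^2_{\Sigmav^{-1}}$: the cross term emerges from an inner-product attention head whose key and query matrices isolate $\Sigmav^{-1/2}\hat{\muv}_c^{(\tau)}$ and $\Sigmav^{-1/2}\xv_j$ respectively, while the class-mean self-norm is precomputed by the preceding MLP and written into a scalar slot of each $\qv$; the $\xv_j$ self-norm cancels in every per-$j$ softmax denominator. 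For (iii), a softmax head attends from each unlabeled token to all $(c,\tau)$ slots in $\{\Qv^{(\tau)}\}_{\tau=0}^t$; inserting the position-dependent bias $\beta\tau$ as a dedicated key coordinate---realized by a CoT-step counter stored in each $\Qv^{(\tau)}$ at creation time---reproduces the $e^{\beta\tau}$ factor in (\ref{def:p-estimation}), and the summations over $\tau$ in both numerator and denominator come for free because the softmax ranges over the entire key set. For (iv), a final head attends from the $i$-th class token of $\Qv^{(t+1)}$ to the unlabeled block with weights $p_{ij}^{(t)}$ and values $\hat{\muv}_i^{(t)} - \xv_j$; the residual connection then combines this with the stored $\hat{\muv}_i^{(t)}$, and the step-dependent scale $\eta^{(t)} = \alpha/(T'+t)$ is absorbed by calibrating the softmax normalization against the total number of attended positions, which is linear in $t$ with intercept controlled by $T'$.

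The main obstacle I anticipate is that $p_{ij}^{(t)}$ is indexed by a (class, unlabeled) pair yet only ever used summed against one index---the numerator fixes $c=i$ while the denominator marginalizes over $c$---so both operations must live inside a single softmax. I will address this by splitting the responsibility computation and the gradient aggregation across two complementary heads: one produces, at each $\xv_j$, a normalized posterior over $(c,\tau)$; the other then gathers those weights at the class token via a second attention pass, using the class indicator $\ev_c$ stored in a dedicated coordinate of $\Qv^{(\tau)}$ to isolate the $c=i$ component through a linear projection. A secondary difficulty is precisely gating the labeled-data term to $t=0$, which I will implement with a monotonically incrementing CoT-step counter in $\qv$ combined with a ReLU in the first MLP that zeros the gate whenever the counter becomes positive. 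Once the token layout from (\ref{def:embed}) and (\ref{eq:cot-update}) is fully instrumented with these auxiliary coordinates, the remainder of the verification reduces to routine, finite-dimensional linear-algebraic checks that each sublayer transforms the embedding stream exactly as claimed.
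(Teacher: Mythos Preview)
Your decomposition into (a) a softmax head that computes responsibilities by attending from each unlabeled token to all $(c,\tau)$ class slots, (b) a subsequent pass that gathers the class-$i$ component of those responsibilities at the $i$-th class token via the stored indicator $\ev_i$, and (c) a ReLU-gated initialization from the labeled block, is exactly the paper's construction. The paper orders the layers as softmax (E-step), linear, linear (the two halves of the M-step aggregation), ReLU (initialization), and uses the same devices you identify: storing $-\tfrac12\|\hat{\muv}_c^{(\tau)}\|^2_{\Sigmav^{-1}}$ and the step index $\tau$ in each $\qv$-token, inserting $\beta\tau$ as an additive attention bias, and exploiting cancellation of $\|\xv_j\|^2_{\Sigmav^{-1}}$ across the per-$j$ softmax.

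The one step that will not work as written is your mechanism for $\eta^{(t)}=\alpha/(T'+t)$. Softmax outputs sum to one irrespective of how many keys are attended, so ``calibrating the softmax normalization against the total number of attended positions'' does not yield a $1/(T'+t)$ scaling; the logits here are the (highly non-uniform) Mahalanobis scores, so no uniform-average argument applies either. The paper handles this entirely outside the attention mechanism: $\tau$ is already stored as a coordinate of $\qv_i^{(\tau)}$, and the token-wise MLP (assumed, following \citet{kim2024transformersp}, to realize any Lipschitz link function) simply computes the rational update $\hat{\muv}_i^{(\tau)} \mapsto \hat{\muv}_i^{(\tau)} - \tfrac{\alpha_1}{\tau+\alpha_2}\,\hat{\muv}_i^{(\tau)}\,\ev_i^\top\!\sum_j\gammav_j$ from the coordinates available in that token. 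With this one replacement your plan coincides with the paper's proof.
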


We outline the construction of each layer of the transformer below, and defer the detailed derivation and specific parameter implementation to~\Cref{appx:expressive}. 

The four-layer architecture is designed to mirror an EM iteration for Gaussian mixture model clustering~\citep{zhao2018statistical,sula2022semi} within the transformer's forward pass. The EM algorithm operates iteratively. First, in the \textbf{E-step}, it utilizes the current class mean estimates embedded in the input sequence to compute the estimated class membership for each {\it unlabeled} data point. Subsequently, the \textbf{M-step} updates the class mean estimates by performing a maximum likelihood estimation of the unlabeled data, and then combining them with the estimates obtained from the {\it labeled} data. Through this iterative process, the algorithm converges to accurate estimates of the underlying class means, enabling reliable classification.

\textbf{The first layer.} The first transformer layer includes a {\it softmax-activated} attention layer followed by an MLP layer. We construct its parameters so that it outputs the class membership estimate for the each unlabeled sample as in the form of \Cref{def:p-estimation}, where the mean estimates \(\{\hat{\muv}_1^{(\tau)},\cdots, \hat{\muv}_C^{(\tau)}\}_{\tau=1}^{t}\) are embedded in the reasoning blocks \(\Qv^{(1)}\cdots \Qv^{(t)}\) in the input sequence, and the parameter \(\beta\) is embedded in the first layer as well. This probability represents how likely sample $j$ is estimated to be in class $i$. Since the temperature parameter \(\beta\tau\) is proportional to the step index \(\tau\), estimates from earlier CoT steps carry less importance. In the limit of $\beta\to\infty$, the weight vector depends only on the latest CoT step, i.e.,
    \begin{align}\label{eqn:p_approx}
    p_{ij}^{(t)}
=
\frac{\exp\Bigl(-\tfrac12\|\hat{\muv}_i^{(t)} - \xv_j\|^2_{\Sigmav^{-1}}\Bigr)}{\sum_{c=1}^{C}
             \exp\Bigl(-\tfrac12\|\muv_c^{(t)} - \xv_j\|^2_{\Sigmav^{-1}}\Bigr)}.
\end{align}

\textbf{The second and third layers.} The second and third transformer layers consist of a {\it linear} attention layer followed by an MLP layer. These layers are designated for the M-step of the EM algorithm. In this step, the class mean estimates \(\{\hat{\muv}_i\}_{i=1}^C\) are updated by maximizing the overall log-likelihood of the unlabeled data with the estimated class membership probabilities \(p_{ij}^{(t)}\). It aims to solve
\[
P_1:\quad \{\muv_c^{(t+1)}\}_c
= \arg\max_{\{\mu_c\}_c}
\sum_{j=N+1}^{N+M} \sum_{i=1}^C p_{ij}^{(t)}
\log \mathcal{N}\bigl(\xv_j;\,\muv_i,\Sigmav\bigr).
\]
The implementation for these two layers is equivalent to tasking one step of gradient descent over \(P_1\), i.e.,
\begin{align}\label{eqn:gd}
    \hat{\muv}_i^{(t+1)}
= \hat{\muv}_i^{(t)}
- \frac{\eta^{(t)}}{M}
  \sum_{j=N+1}^{N+M}
    p_{ij}^{(t)}\,\bigl(\hat{\muv}_i^{(t)} - \xv_j\bigr).
\end{align}

\textbf{The fourth layer.} Finally, the last transformer layer includes a {\it ReLU-activated} attention layer followed by an MLP layer. This layer calculates the initial class mean estimates for the {\it labeled} dataset and is only activated at the first CoT step. It implements the following updating rule: 
\begin{align}
    \hat{\muv}_i^{(t+1)}
= \mathbf{1}_{\{t=0\}}\cdot\frac{C}{N}\sum_{j=1}^N (\ev_i^\top\yv_j)\xv_j,\label{def:layer-3}
\end{align}
which initialize \(\hat{\muv}_i^{1}\) to be the average of \(\xv_j\)s for the labeled data samples in class \(i\). This initialization will be refined iteratively through the CoT steps by leveraging the information from the unlabeled data.

We note that the parameters of the last three layers are data-independent and can be explicitly constructed beforehand, and only the parameters of the first layer depend on the distribution of the data, which can be obtained through CoT training, as elaborated in \Cref{sec:training}.

Next, we will show that the transformer specified in \Cref{thm:thm1} will recover 
$\{\mu_i\}_{i=1}^C$ accurately with high probability, and explicitly characterize the benefit of unlabeled data in this augmented ICL.

\begin{theorem}[Class Mean Estimation Error]\label{thm:thm2}
Given the transformer described in \Cref{thm:thm1}, when \(N\geq \frac{36\alpha^2L^2}{c_1}\log1/\epsilon\) and \(M\geq \max\{36\alpha^2L^2K,\,\log^2(1/\epsilon)\}\), and  \(t\geq\max\{\sqrt[4]{M},T'\}\), with probability at least \(1-\epsilon\), the output of the transformer after $t$ CoT steps satisfies
\begin{align*}
    \|\hat{\Mv}^{(t)}-\Mv\|_F^2\leq c\frac{\log(1/\epsilon)}{N\!+\!\sqrt[4]{M}},
\end{align*}
where \(c_1,c,\alpha,L,T', K\) are positive constants.
\end{theorem}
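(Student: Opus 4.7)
The plan is to decompose the squared Frobenius error $\|\hat{\Mv}^{(t)} - \Mv\|_F^2$ into two independent contributions: the supervised initialization error produced by the fourth-layer update at $t=0$, and the residual error from the EM-style refinement executed by the first three layers over the subsequent $t$ CoT iterations. The goal is to show that the initialization term is controlled purely by the sample size $N$, while the refinement term is controlled by $M$ together with the diminishing step-size schedule $\eta^{(t)} = \alpha/(T'+t)$; combining them via the triangle inequality will yield the claimed $O(\log(1/\epsilon)/(N+\sqrt[4]{M}))$ bound.

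For the initialization, recall that $\hat{\muv}_i^{(1)} = (C/N)\sum_{j=1}^N (\ev_i^\top \yv_j)\xv_j$ is just a scaled sum of those $\xv_j$ whose label equals $\ev_i$. Since $\yv_j \sim \mathrm{Uniform}(\calY)$, a standard Chernoff bound implies the per-class counts concentrate around $N/C$; conditioning on these counts reduces each $\hat{\muv}_i^{(1)}$ to a sample mean of sub-Gaussian vectors $\muv_i + \boldsymbol{\epsilon}_j$. A vector Hoeffding/Bernstein inequality combined with a union bound over the $C$ classes then gives $\|\hat{\muv}_i^{(1)} - \muv_i\|^2 = O(\log(1/\epsilon)/N)$ with probability at least $1-\epsilon/2$. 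The hypothesis $N \geq 36\alpha^2 L^2 \log(1/\epsilon)/c_1$ is precisely the condition ensuring each $\hat{\muv}_i^{(1)}$ lies in a basin of attraction around $\muv_i$ on which the EM operator is locally contractive.

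For the EM refinement, I would invoke a local contraction property of the population EM operator for isotropic Gaussian mixtures: within the basin guaranteed above, one step of the true posterior-weighted update is a contraction with some rate $\kappa<1$ governed by the signal-to-noise ratio of the mixture. Each CoT step only approximates this population operator using the $M$ unlabeled samples, so the soft assignments $p_{ij}^{(t)}$ in (\ref{def:p-estimation}) incur a stochastic perturbation of order $O(1/\sqrt{M})$. Under the diminishing schedule $\eta^{(t)} = \alpha/(T'+t)$, this becomes a Robbins--Monro-style stochastic approximation: the contractive component drives the deterministic error down while the stochastic component accumulates at a rate controlled by $\sum_\tau (\eta^{(\tau)})^2$. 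The critical balance between residual bias and accumulated noise is reached when $t$ is of order $\sqrt[4]{M}$, giving a refinement error of order $\log(1/\epsilon)/\sqrt[4]{M}$, which then combines with the initialization error.

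The main technical obstacle is twofold. First, the weights $p_{ij}^{(t)}$ in (\ref{def:p-estimation}) are \emph{not} the standard one-step EM posterior but rather an exponentially-weighted aggregation over all past CoT steps $\tau \leq t$, with the latest step dominating because $e^{\beta\tau}$ grows geometrically (recovering (\ref{eqn:p_approx}) as $\beta\to\infty$). My plan is to treat the stale contributions from earlier steps as a controlled perturbation bounded by a geometric series in $e^{-\beta}$, reducing the analysis to an approximate single-step EM update. Second, obtaining a high-probability bound that holds \emph{uniformly} across all $t$ CoT steps requires a union bound over polynomially many iterations, combined with dimension-free concentration, enabled by the isotropy of $\Sigmav$, for inner products of the form $\hat{\muv}_i^{(t)\top}\boldsymbol{\epsilon}_j$. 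The interplay between this uniform control and the bias/variance balance of the stochastic approximation is what ultimately produces the unusual $\sqrt[4]{M}$ exponent, rather than the naive $\sqrt{M}$ one would expect from i.i.d.\ concentration alone.
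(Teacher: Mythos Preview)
Your decomposition into an initialization stage and an EM-refinement stage matches the paper's structure, and the use of Chernoff/Hoeffding for the initialization together with a Robbins--Monro stochastic-approximation argument for the refinement are both correct. The gap is in how you combine them. A triangle-inequality decomposition gives at best an \emph{additive} bound $O(\log(1/\epsilon)/N) + O(\log(1/\epsilon)/\sqrt[4]{M})$, which is strictly weaker than the claimed $O(\log(1/\epsilon)/(N+\sqrt[4]{M}))$: when $N \gg \sqrt[4]{M}$ your sum is $\Theta(1/\sqrt[4]{M})$ while the theorem promises $\Theta(1/N)$. The paper avoids any such combination step entirely. Its Robbins--Monro induction is $\|\hat{\muv}^{(t)}-\muv\|^2 \leq K/(t+T')$, and the labeled-data contribution enters through the \emph{base case}: the initialization lemma forces $T' = \Theta(N/\log(1/\epsilon))$ so that $\|\hat{\muv}^{(1)}-\muv\|^2 \leq K/T'$ holds with probability $1-\epsilon$. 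With $t$ of order $\sqrt[4]{M}$, the single bound $K/(t+T')$ already has the form $\log(1/\epsilon)/(N+\sqrt[4]{M})$. In other words, $N$ affects the final error not as an additive term but by setting the starting index of the $1/(t+T')$ sequence.

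Two smaller discrepancies. First, the paper does not use a fixed-rate contraction $\kappa<1$ of the EM operator; with $\eta^{(t)}\to 0$ any per-step contraction factor tends to $1$, so geometric decay is unavailable. Instead it Taylor-expands $\nabla\mathcal{L}$ around the true means and proves in a separate lemma that the Hessian of the population log-likelihood is negative definite, yielding $\langle\hat{\muv}^{(t)}-\muv,\nabla\mathcal{L}\rangle\leq -\lambda\|\hat{\muv}^{(t)}-\muv\|^2 + L\|\hat{\muv}^{(t)}-\muv\|^3$; this strong-convexity-type inequality is what drives the $K/(t+T')$ induction. Second, your account of the $\sqrt[4]{M}$ exponent as a bias/variance balance after $O(1/\sqrt{M})$ per-step perturbations would actually produce $\sqrt{M}$ in the denominator. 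The paper obtains $\sqrt[4]{M}$ by deliberately choosing failure probability $\delta=\exp(-\sqrt{M})$ in Bernstein's inequality, which inflates the per-step gradient deviation to $O(M^{-1/4})$ but makes the high-probability event compatible with the hypothesis $M\geq\log^2(1/\epsilon)$; the induction hypothesis then survives for at most $t\sim\sqrt[4]{M}$ steps before the noise term overwhelms it.
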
 

\begin{proof}[Proof sketch.]
    The proof of \Cref{thm:thm2} contains three major steps. 
In \textbf{Step 1}, we utilize the Hoeffding’s inequality to ensure that with a sufficient number of labeled data \(N\), the initial class mean estimates \(\hat{\muv}^{(1)}_1,\cdots,\hat{\muv}^{(1)}_C\) are in a small neighborhood of the ground truth class means \(\muv_1,\cdots,\muv_C\). 
    In \textbf{Step 2}, we need to bound the gap between the gradient descent updating step for \(t>1\) in \Cref{eqn:gd}, and one gradient descent step for the expected log-likelihood loss \(\mathcal{L}(\{{\hat{\muv}_i^{(t)}}\}) = \mathop{\mathbb{E}}_{\xv}\left[\log\left(\frac{1}{C}\sum_{i=1}^C\exp\left(-\frac{1}{2}\|\xv-\hat{\muv}_i^{(t)}\|^2\right)\right)\right]\). To ensure that the gap is sufficiently small, we need to design the temperature parameter \(\beta\tau\) so that the normalized weight is biased heavily toward the class mean estimation obtained from the current CoT step, and the influence of previous CoT steps is minimized. Then, utilizing Bernstein's inequality, this gap is bounded. 
    In \textbf{Step 3}, we utilize Lipschitz continuity of \(\mathcal{L}(\{\muv_i\})\), combing the bound on the gradient gap in Step 2, to show that \(\|\hat{\Mv}^{(t)}-\Mv\|_F^2\leq \Oc(1/\sqrt{N+\mathrm{poly}(M)})\) for \(t\) large enough if \(\hat{\Mv}^{(1)}\) is in a small neighborhood of \(\Mv\), which is guaranteed in Step 1. The complete proof can be found in \Cref{appx:expressive}.
\end{proof}

Based on the smoothness of the Bayes risk, we have the following corollary as a direct consequence of \Cref{thm:thm2}.

\begin{corollary}[Label Prediction Error Bound]\label{cor:semi-label-error}
  Let $\hat{\yv}_j$ be the predicted label for $\xv_j$ according to \Cref{eqn:label}. Let $\Rc^*$ be the prediction error under the Bayes-optimal classifier with known class mean vectors $\muv_1,\cdots ,\muv_C$. Then, under the same conditions as described in \Cref{thm:thm2}, we have 
    \begin{align*}
        \Pb[\hat{\yv}_j\neq \yv|\muv_1,\cdots ,\muv_C]-\Rc^*\leq \Oc\Big(\frac{1}{\sqrt{N\!+\!\mathrm{poly}(M)}}
        \Big).
    \end{align*}
\end{corollary}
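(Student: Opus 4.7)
The plan is to bound the excess classification risk by the probability mass of the ``disagreement region'' on which the plug-in nearest-mean classifier $\hat{\yv}_j(\xv) = \arg\min_i \|\xv - \hat{\muv}_i^{(T)}\|$ and the Bayes-optimal classifier $\yv^*(\xv) = \arg\min_i \|\xv - \muv_i\|$ produce different labels. The form of $\yv^*$ follows from the data-generating model in \Cref{def:data-gen}: for an isotropic Gaussian mixture with uniform class prior, the Bayes rule is the nearest-mean rule, and its decision boundaries are the $\binom{C}{2}$ hyperplanes $(\muv_i - \muv_j)^\top\xv = \tfrac{1}{2}(\|\muv_i\|^2 - \|\muv_j\|^2)$.

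First I would invoke the standard plug-in inequality
\begin{equation*}
\Pb[\hat{\yv}_j \neq \yv \mid \muv_1,\dots,\muv_C] - \Rc^* \leq \Pb[\hat{\yv}_j(\xv) \neq \yv^*(\xv) \mid \muv_1,\dots,\muv_C],
\end{equation*}
which reduces the problem to controlling the probability of classifier disagreement on a fresh sample. Next I would argue geometrically that whenever the two classifiers disagree on $\xv$, the point $\xv$ must lie in a slab of width $\Oc(\|\hat{\Mv}^{(T)} - \Mv\|_F)$ around at least one of the true decision hyperplanes: a perturbation of each mean by $\delta$ shifts each hyperplane by $\Oc(\delta)$ in Hausdorff distance, and the disagreement can occur only in the symmetric difference of the two Voronoi partitions. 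Since $\xv$ has a bounded mixture-of-Gaussians density and each slab is a $1$-dimensional thickening of a hyperplane, the probability that $\xv$ lies in such a slab is at most proportional to its width, giving
\begin{equation*}
\Pb[\hat{\yv}_j(\xv) \neq \yv^*(\xv) \mid \hat{\Mv}^{(T)}, \muv_1,\dots,\muv_C] \leq c'\,\|\hat{\Mv}^{(T)} - \Mv\|_F
\end{equation*}
for some constant $c'$ depending only on $C$ and $\Sigmav$.

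Finally I would plug in \Cref{thm:thm2} to obtain $\|\hat{\Mv}^{(T)} - \Mv\|_F = \Oc\bigl(\sqrt{\log(1/\epsilon)/(N + \sqrt[4]{M})}\bigr)$ on an event of probability at least $1-\epsilon$, and absorb the complementary bad event (on which the excess risk is trivially bounded by $1$) by setting $\epsilon = 1/\sqrt{N + \sqrt[4]{M}}$, which preserves the claimed $\Oc(1/\sqrt{N + \mathrm{poly}(M)})$ rate. I expect the main obstacle to be making the slab-width-to-probability step uniform in the perturbation: one must verify that $c'$ does not blow up when the true means $\muv_i$ are close or when $\hat{\muv}_i - \muv_i$ is nearly tangent to a decision boundary. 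The cleanest route is likely to impose (or extract from $P_{\mathbf{M}}$) a mild separation $\min_{i\neq j}\|\muv_i - \muv_j\| \geq \gamma$, together with the restriction $\|\hat{\Mv}^{(T)} - \Mv\|_F \leq \gamma/4$ implied by \Cref{thm:thm2} for $N$ and $M$ large enough, so that the perturbed Voronoi cells remain well-defined and the Hausdorff-to-probability transfer has a uniform Lipschitz constant.
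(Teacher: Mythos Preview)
Your approach is correct and matches the paper's: reduce the excess risk to the disagreement probability via the plug-in inequality, bound that probability geometrically as $\Oc(\|\hat{\Mv}-\Mv\|_F)$ under a class-separation condition, then invoke \Cref{thm:thm2}. The paper frames the disagreement region as a dihedral wedge of angle $\theta$ with $\tan\theta=\|\hat{\Mv}-\Mv\|_F/\|\muv_i-\muv_k\|$ rather than a slab, but the substance is identical, and your explicit handling of the high-probability event and of the separation requirement $\min_{i\neq j}\|\muv_i-\muv_j\|\geq\gamma$ is, if anything, more careful than the paper's sketch.
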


\begin{remark}\label{remark1}
The advantage of utilizing unlabeled data in the augmented ICL becomes evident when comparing \Cref{cor:semi-label-error} with the existing lower bound on the excess risk for classical binary classification. It has been shown that the excess risk for {\it any} classifier trained on $N$ labeled data scales in $\Omega(1/\sqrt{N})$ in the worst case of $\Mv$ \citep{li2017minimax}, which is in stark contrast to the upper bound $\mathcal{O}\!\left(1\!/\!\sqrt{N\!+\!\mathrm{poly}(M)}\right)$ in \Cref{cor:semi-label-error}. This result indicates that the designed transformer can effectively utilize the unlabeled data through CoT prompting, and strictly improves the prediction accuracy of any classifier that utilizes the labeled data only.

\end{remark}

\section{Training Dynamics with Teacher Forcing}\label{sec:training}

While \Cref{sec:expressive} indicates that there exists a transformer that is able to implement an EM-type algorithm to utilize unlabeled data and improve the ICL performance through CoT prompting, in this section, we show that such a transformer can be obtained through teacher forcing training~\citep{kim2024transformersp,huang2025transformerslearnregularlanguage}.

The training objective of teacher forcing is to ensure that the transformer can mimic the trajectory of iterative updating under an EM algorithm during the CoT inference. Formally, we require that, on the unlabeled set, 
the cross-entropy between the class distributions induced by the CoT estimates 
$\{\hat{\boldsymbol{\mu}}^{(t)}_c\}_{c=1}^C$ and those induced by the reference method $f_{\mathrm{ref}}$ 
remains small for all $t=1,\ldots,T$. Specifically, given \(\Xv_\ell\), \(\Yv_\ell\) and \(\Xv_u\), we denote the generated reference 
trajectory as \(f_{\textrm{ref}}(\Xv_\ell,\Yv_\ell,\Xv_u)=\{{\muv}^{(t)}_{\textrm{ref},1}\cdots {\muv}^{(t)}_{\textrm{ref},C}\}_{t=1}^T\).
Then, we construct the reference embedding sequence at the $t$-th CoT step as

\begin{align*}
    {\Hv}^{(t)}_{\textrm{ref}}=\left[
    \begin{array}{*{1}{>{\columncolor{gray!20}}c}
                  *{1}{>{\columncolor{blue!10}}c}
                  *{4}{>{\columncolor{orange!10}}c}}
      \Xv_{\ell} & \Xv_{u}      & \mathbf{0} & \star & \cdots & \star \\ 
      \Yv_{\ell} & \mathbf{0}   & \mathbf{0} & \star & \cdots & \star  \\
      \Pv_{\ell} & \Pv_{u}      & \Qv^{(0)} & \Qv^{(1)}_{{\textrm{ref}}} & \cdots & \Qv^{(t)}_{{\textrm{ref}}}
    \end{array}
  \right],\quad \Qv_{\textrm{ref}}^{\tau} = \begin{bmatrix}
      \ev_1 & \cdots  & \ev_C\\
      \muv_{\mathrm{ref},1}^{(\tau)} & \cdots & \muv_{\mathrm{ref},C}^{(\tau)}\\
      \ast & \cdots & \ast
  \end{bmatrix}, \forall \tau\in[t].
\end{align*}

We note that the reference embedding shares the same structure as the embedding defined in \Cref{eq:cot-update}, except that now the mean estimates are generated by the reference algorithm instead of the transformer itself. We then feed ${\Hv}^{(t)}_{\textrm{ref}}$ to the transformer, and extract the updated mean estimates from its output $\tf_{\Thetav}(\Hv_{\mathrm{ref}}^{(t)})$.

The corresponding CoT training loss can be defined as:
\begin{align}
    \hat{\Lc}_{\textrm{CoT-train}}(\Thetav;\Ic_\Mv) 
    = \frac{1}{T}\sum_{t=1}^T\sum_{j=N+1}^{N+M}\ell_{\mathrm{CE}}\left( \qv_j^{(t)}, [\tf_{\Thetav}(\Hv_{\mathrm{ref}}^{(t-1)})]_{2d+2c+1:2d+3c, N+j}\right),\label{def:teacher-loss}
\end{align}
where \(\ell_{\mathrm{CE}}\) is the cross-entropy loss and \({\qv}_{j}^{(t)} = [{p}_{1j}^{(t)}\cdots {p}_{Cj}^{(t)}]\) with
\begin{align*}
    {p}_{ij}^{(t)}
=\frac{\exp\Bigl(-\tfrac12\|\muv_{\mathrm{ref},i}^{(t)} - \xv_j\|^2_{\Sigma^{-1}}\Bigr)}{\sum_{c=1}^{C}
             \exp\Bigl(-\tfrac12\|\muv_{\mathrm{ref},c}^{(t)} - \xv_j\|^2_{\Sigma^{-1}}\Bigr)}.
\end{align*}
Similar to \cite{ahn2024transformers,huang2025transformers}, in this work, we analyze the training convergence of the population loss defined as:
\begin{align}
    \Lc_{\textrm{CoT-train}}(\Thetav) = \mathbb{E}_{\Ic_\Mv}\Big[\hat{\Lc}_{\textrm{CoT-train}}(\Thetav;\Ic_\Mv)\Big],\label{def:pop-loss-cot}
\end{align}
where the expectation is taken over the randomness in the generation process of $\Ic_\Mv$.

Directly analyzing the training dynamics of all layers of the transformer is intractable. On the other hand, as we mentioned in \Cref{sec:expressive}, the last three layers of the transformer can be constructed explicitly beforehand, as their parameters are data-independent. As a result, in the following, we will freeze these three layers and train the first layer only.

\begin{assumption}[Initialization]\label{assump:initial}
We initialize the first layer of the three-layer transformer described in~\Cref{thm:thm1} as follows:
\begin{align*}
    &\Qv^{(0)}\Kv^{(0)}= 
    \begin{bmatrix}
        \mathbf{0}_{d\times (d+2C)} & \Wv^{(0)} & & & &\\
        & & \mathbf{0}_{(4C+d+2)\times 2C}& & &\\
         & & & 1 &\mathbf{0}_{1\times 2} & \beta^{(0)} \\
         & & &  & & 0 
    \end{bmatrix},\\
    &\Vv^{(0)} = \mathrm{diag}\big(\mathbf{0}_{(d+2C)\times(d+C)}, \Iv_{C}, \mathbf{0}_{(d+C+4)\times (d+2C+4)}\big),
\end{align*}
where \(\Wv^{(0)}\) is a \(d\times d\) matrix whose entries are randomly sampled from a standard Gaussian distribution, \(\beta^{(0)}\) is a constant, and all the unspecified entries are equal to zero.
\end{assumption}

\begin{theorem}[Training Convergence]\label{thm:convergence}
Let $\{\Qv^{(k)},\Kv^{(k)},\Vv^{(k)}\}_{k\ge0}$ be the parameters of the first attention layer of the transformer after applying $k$ iterations of gradient descent on the population loss defined in \Cref{def:pop-loss-cot}. Then, with the initialization specified in \Cref{assump:initial}, we have
\[\|\Wv^{(k)}-\Sigmav^{-1}\|_F^2\leq c^k\|\Wv^{(0)}-\Sigmav^{-1}\|_F^2\]
for some positive constant $c$, while the other parameters in \(\Qv^{(0)}\), \(\Kv^{(0)}\) and \(\Vv^{(0)}\) remain unchanged.
\end{theorem}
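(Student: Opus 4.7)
The plan is to show that teacher forcing restricts the training dynamics to the matrix $\Wv$ alone and reduces it to a geometric contraction toward $\Sigmav^{-1}$.

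First, I would verify that the block structure of $\Qv^{(0)}\Kv^{(0)}$ and $\Vv^{(0)}$ in \Cref{assump:initial} makes the population gradient with respect to every coordinate outside the $\Wv$ block vanish identically, with these zeros preserved along the trajectory by induction on the iteration count. The zero blocks in $\Vv^{(0)}$ cancel the read-out in all non-$\Wv$ coordinates, while the structural zeros of $\Qv^{(0)}\Kv^{(0)}$ prevent softmax attention from mixing along those directions. This reduces the analysis to the single-matrix iteration $\Wv^{(k+1)} = \Wv^{(k)} - \eta\nabla_{\Wv}\Lc_{\textrm{CoT-train}}(\Thetav^{(k)})$.

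Next, I would derive a closed form for $\nabla_{\Wv}\Lc_{\textrm{CoT-train}}$. Teacher forcing makes the transformer's output at CoT step $t$ a softmax over logits $-\tfrac12(\muv_{\mathrm{ref},i}^{(t)}-\xv_j)^\top\Wv(\muv_{\mathrm{ref},i}^{(t)}-\xv_j)$, while the reference target distribution replaces $\Wv$ with $\Sigmav^{-1}$. A direct cross-entropy calculation yields
\begin{align*}
\nabla_{\Wv}\Lc_{\textrm{CoT-train}}(\Thetav) = -\frac{1}{2T}\sum_{t=1}^T\sum_{j=N+1}^{N+M}\sum_{i=1}^C\mathbb{E}\Big[\bigl(p_{ij}^{\mathrm{TF}}(\Wv)-p_{ij}^{\mathrm{ref}}\bigr)\bigl(\muv_{\mathrm{ref},i}^{(t)}-\xv_j\bigr)\bigl(\muv_{\mathrm{ref},i}^{(t)}-\xv_j\bigr)^\top\Big].
\end{align*}
Following the decomposition mentioned in the contributions, I would split this expression into a main component, obtained by linearizing $p_{ij}^{\mathrm{TF}}(\Wv)-p_{ij}^{\mathrm{ref}}$ around $\Wv=\Sigmav^{-1}$, plus a higher-order residual. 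The isotropy of $\Sigmav$ together with the rotational symmetry of the Gaussian noise makes the averaged outer product $\mathbb{E}[(\muv_{\mathrm{ref},i}^{(t)}-\xv_j)(\muv_{\mathrm{ref},i}^{(t)}-\xv_j)^\top]$ collapse to a scalar multiple of $\Iv$, so that the main component acts as a positive scalar linear operator on $\Wv - \Sigmav^{-1}$.

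From this I would derive two quantitative inequalities: a strong-convexity-type lower bound showing that the main component alone contributes at least $c_1\|\Wv-\Sigmav^{-1}\|_F^2$ to the inner product $\langle \Wv - \Sigmav^{-1}, \nabla_{\Wv}\Lc_{\textrm{CoT-train}}\rangle$, and a smoothness bound $\|\nabla_{\Wv}\Lc_{\textrm{CoT-train}}\|_F \le c_2\|\Wv-\Sigmav^{-1}\|_F$. Combining the two via the standard descent identity yields $\|\Wv^{(k+1)}-\Sigmav^{-1}\|_F^2 \le (1-2\eta c_1 + \eta^2 c_2^2)\|\Wv^{(k)}-\Sigmav^{-1}\|_F^2$; an appropriately small step size $\eta$ makes the factor strictly less than $1$, and induction on $k$ delivers the claimed geometric rate. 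The main obstacle is simultaneously establishing the lower bound on the main term while tightly upper bounding the critical inner-product contribution of the residual: the softmax couples all $C$ classes multiplicatively, and teacher forcing accumulates $T$ such coupled terms, so a naive residual bound can match the main term in scale and destroy the contraction. The resolution, as flagged in the introduction, is to fully exploit the isotropy of $\Sigmav$ and of the conditional distribution of $\muv_{\mathrm{ref}}^{(t)}$ so that the residual is, to leading order, algebraically orthogonal to the improvement direction $\Wv - \Sigmav^{-1}$.
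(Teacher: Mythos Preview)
Your proposal has the right overall shape, but there are two concrete gaps that would prevent it from going through.

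First, the gradient formula you wrote is not what the architecture produces. In the first attention layer only the bilinear cross term $\xv_j^\top\Wv\,\hat{\muv}_i^{(\tau)}$ depends on $\Wv$; the quadratic piece $-\tfrac12\|\hat{\muv}_i\|^2$ is supplied by the MLP and does not involve $\Wv$. Hence the logit is not $-\tfrac12(\muv_i-\xv_j)^\top\Wv(\muv_i-\xv_j)$, and the per-sample gradient of the cross-entropy is $\Mv(\hat{\qv}_j-\qv_j)\,\xv_j^\top$, not the outer-product expression you derived. This changes the algebraic structure of everything downstream.

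Second, and more fundamentally, your Taylor-linearization route leaves a residual whose inner-product contribution is $O(\|\Wv-\Sigmav^{-1}\|_F^3)$, so it is dominated by the main $c_1\|\Wv-\Sigmav^{-1}\|_F^2$ term only locally; the ``algebraic orthogonality'' you hope for does not hold in general. The paper sidesteps any residual. It applies Stein's lemma to $\Eb[\Mv(\hat{\qv}_j-\qv_j)\xv_j^\top]$, splitting it into a posterior-difference term $\Eb_\Mv\!\big[\Mv\,\Eb_\xv[\hat{\qv}_j-\qv_j]\,\Eb[\xv]^\top\big]$ and a Jacobian-difference term $\Eb_\Mv\!\big[\Mv\,\Eb_\xv[\nabla\hat{\qv}_j-\nabla\qv_j]\,\Sigmav\big]$. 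The idea you are missing is that $\Wv^{(k)}$ \emph{itself} remains isotropic (a scalar multiple of $\Iv$) along the trajectory, which is proved by induction on $k$. Under this invariant, $\Eb_\xv[\hat{\qv}_j]=\Eb_\xv[\qv_j]$ exactly, so the first term vanishes; and by symmetry both softmax Jacobians collapse to $\tfrac1C\Iv-\tfrac1{C^2}\mathbf{1}\mathbf{1}^\top$, so the second term reduces in closed form (after $\Eb_\Mv$ with $\muv_i\sim\Nc(\mathbf{0},\Iv)$) to $\gamma(\Wv^{(k)}-\Sigmav^{-1})$ with $\gamma=\sigma^2(1-1/C)$. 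This gives the \emph{exact} identities $\langle\Wv^{(k)}-\Sigmav^{-1},\nabla_\Wv\Lc\rangle=\gamma\|\Wv^{(k)}-\Sigmav^{-1}\|_F^2$ and $\|\nabla_\Wv\Lc\|_F=\gamma\|\Wv^{(k)}-\Sigmav^{-1}\|_F$, and hence the contraction factor $(1-\gamma\eta)^2$ with no locality restriction. You invoke isotropy of $\Sigmav$ and of $\muv_{\mathrm{ref}}^{(t)}$, but not of $\Wv^{(k)}$; without that invariant the softmax coupling does not collapse and the argument does not close.
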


\Cref{thm:convergence} indicates that under teacher forcing training, the parameter matrix \(\Wv^{(k)}\) of the first layer converges to \(\Sigmav^{-1}\), the inverse of the noise covariance matrix, linearly. Combining with other parameters in \(\Qv^{(0)}\), \(\Kv^{(0)}\) and \(\Vv^{(0)}\), we observe that the teacher forcing training recovers the transformer described in \Cref{thm:thm1}, theoretically demonstrating that the expressive solution for augmented ICL is identifiable and learnable.

\begin{proof}[Proof sketch.]
We use the superscript \((k,t)\) to denote the \(t\)-th CoT step in the \(k\)-th gradient descent iteration.
First, we drop the temperature term \(\beta\tau\) in the definition of \(p_{ij}^{(k,t)}\) given in \Cref{def:p-estimation} and approximate it as in \Cref{eqn:p_approx}, noting the approximation error can be made arbitrarily small by taking \(\beta\) sufficiently large. Next, we define
\[
q_{ij}^{(k,t)}
=\frac{\exp\bigl(-\tfrac12\|\hat{\muv}_i^{(k,t)}-\xv_j\|^2_{\Wv^{(k)}}\bigr)}
      {\sum_{h=1}^C\exp\bigl(-\tfrac12\|\hat{\muv}_h^{(k,t)}-\xv_j\|^2_{\Wv^{(k)}}\bigr)},
\]
which corresponds to replacing \(\Sigmav^{-1}\) by \(\Wv^{(k)}\) in the approximation of \(p_{ij}^{(k,t)}\).

To prove one-step improvement of gradient descent on the population loss under teacher forcing, we must exhibit a constant \(\alpha>0\) such that
\(
-\bigl\langle \Wv^{(k)}-\Sigmav^{-1},\,\eta^{(k)}\nabla_{\Wv}\Lc_{\mathrm{CoT-train}}\bigr\rangle
\le-\,\alpha\|\Wv^{(k)}-\Sigmav^{-1}\|_F^2
\).
Our proof proceeds in three major steps. 
\textbf{Step 1.} Since direct analysis of \(\nabla_{\Wv}\Lc\) is intractable, we propose a novel decomposition by applying Stein’s lemma to break the gradient into two analytically tractable terms: one is the posterior-difference term involving \(\Eb[\pv_j^{(k,t)}-\qv_j^{(k,t)}]\) and the other is the Jacobian-difference term involving \(\Eb[\nabla\pv_j^{(k,t)}-\nabla\qv_j^{(k,t)}]\). 
\textbf{Step 2.} We show that an isotropic initialization of \(\Wv^{(k)}\) remains isotropic under gradient descent. The preservation of isotropy enforces alignment between \(\pv_j\) and \(\qv_j\) in expectation, i.e.,
\(\Eb[\pv_j^{(k,t)}]=\Eb[\qv_j^{(k,t)}]\). Therefore, the posterior-difference term vanishes.  
\textbf{Step 3.} We analyze \(\nabla\pv_j^{(k,t)}\) and \(\nabla\qv_j^{(k,t)}\) based on the their inherent symmetric structure. This analysis shows the Jacobian difference term degenerates to the following symmetric matrix under expectation: \(\Bigl(\diag(1/d)-\tfrac{1}{d^2}\mathbf{1}\mathbf{1}^\top\Bigr)\Mv^\top\bigl(\Wv^{(k)}-\Sigmav^{-1}\bigr)\), which enables us to avoid complicated analysis directly on the Jacobian difference term. Combining Steps 2 and 3, we obtain the following upper bound for the inner product term \(-\langle\Wv^{(k)}-\Sigmav^{-1},\,\nabla\Lc\rangle\le -\alpha'\|\Wv^{(k)}-\Sigmav^{-1}\|_F^2\), which provides the desired result. The detailed proof can be found in \Cref{appx:dynamic}.
\end{proof}

\section{Experimental Results}\label{sec:exp}
\textbf{Compute resources.} All experiments are conducted on an NVIDIA H100 GPU with 80 GB of memory. The experiments require roughly {five hours} to complete.

\begin{figure}[t]
  \centering
  \begin{subfigure}[t]{0.33\linewidth}
    \centering
    \includegraphics[width=\textwidth]{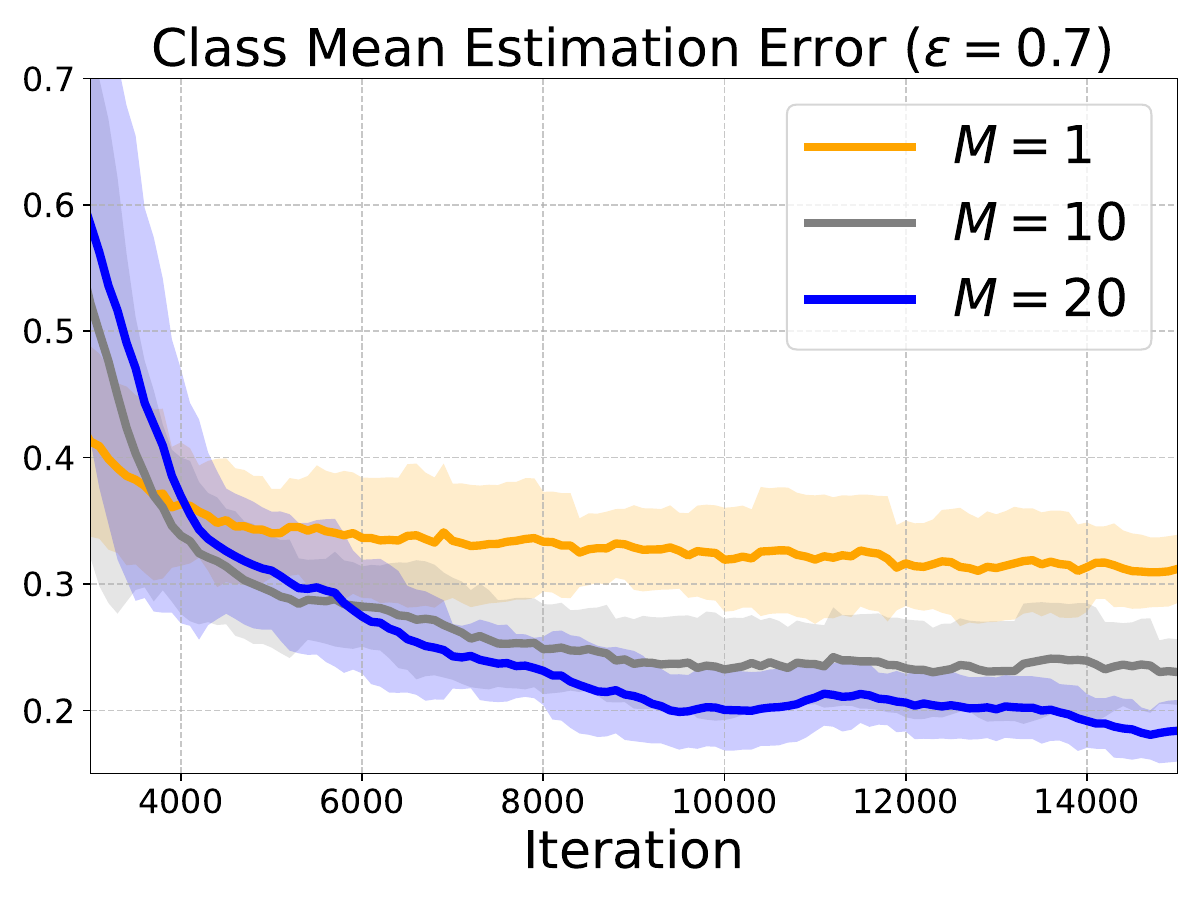}
  \end{subfigure}
  \begin{subfigure}[t]{0.33\linewidth}
    \centering
    \includegraphics[width=\textwidth]{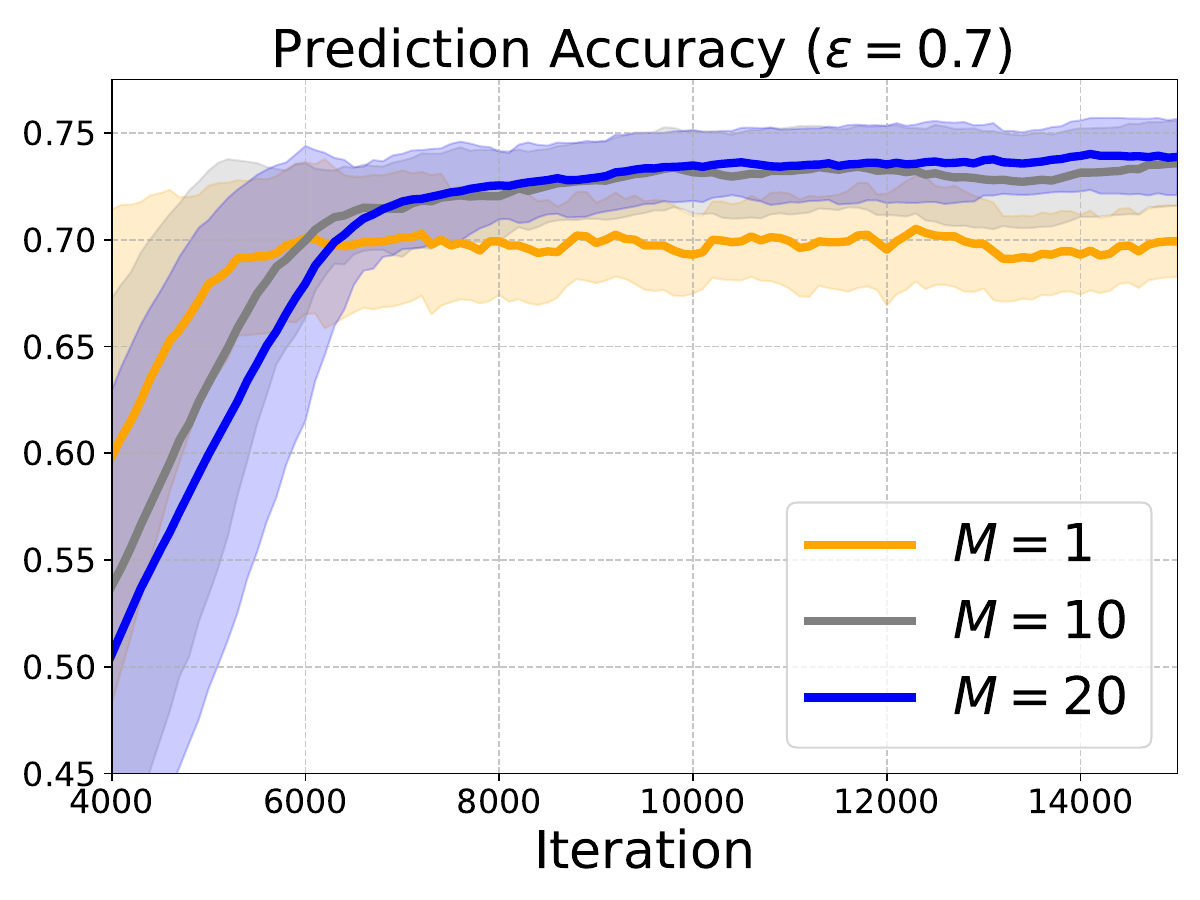}
  \end{subfigure}
  \begin{subfigure}[t]{0.33\linewidth}
    \centering
    \includegraphics[width=\textwidth]{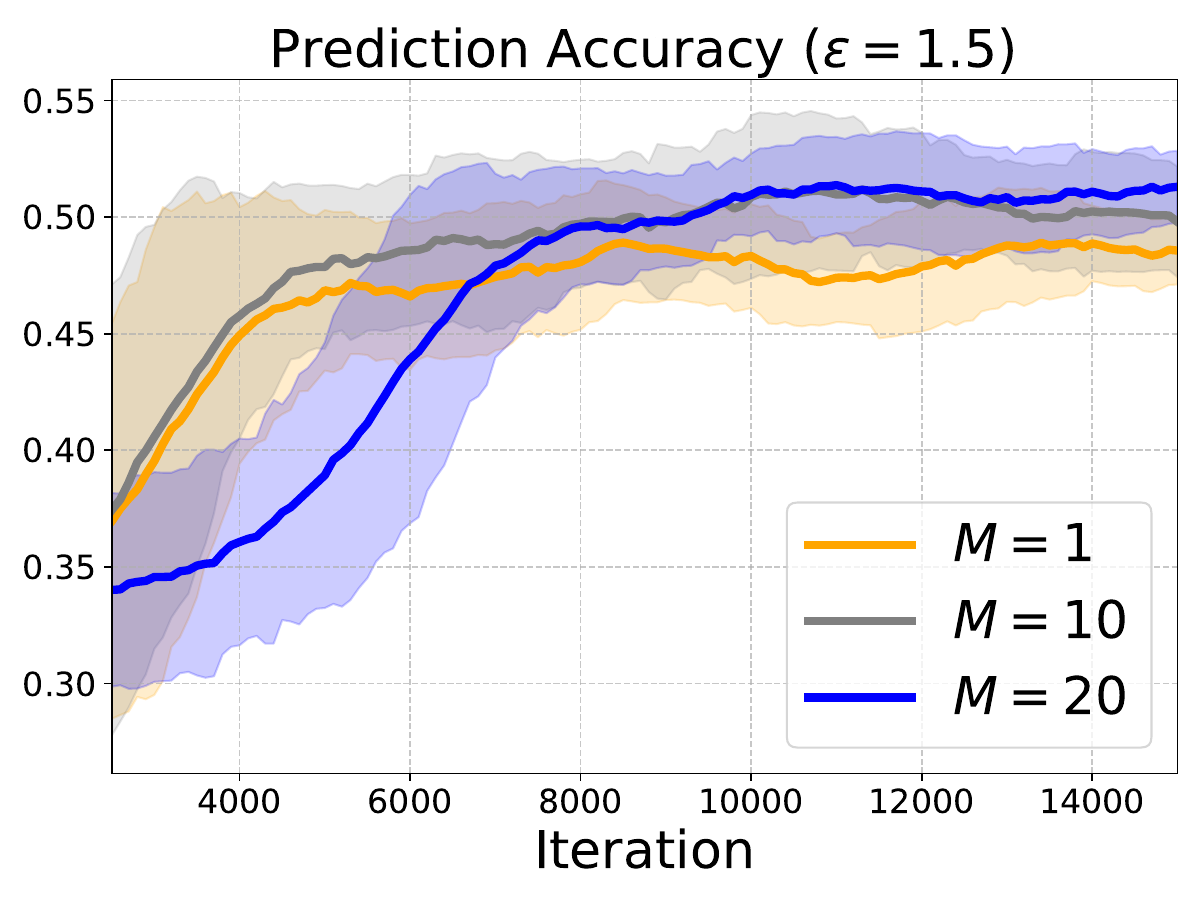}
  \end{subfigure}
  \scriptsize
  \caption{\small Inference performance of the transformer trained via teacher forcing versus number of gradient descent iterations during training. Number of classes $C=3$, number of labeled examples $N=5$, CoT steps $T=5$. The solid line shows the average results across \(5\) runs, and the shaded region represents $\pm2$ standard deviations.}
 \label{fig:alg-performance}
    \scriptsize
\end{figure}
\textbf{Problem setup.} 
In the following experiments, the augmented ICL instances are generated as follows. We set the number of classes  \(C=3\) and the data dimension \(d=3\). The class mean vectors \(\{\muv_i\}_{i=1}^C\) are randomly sampled from a \(d\)-dimensional standard normal distribution. The covariance matrix $\Sigma=\epsilon \mathbf{I}_d$ is shared across classes, where \(\mathbf{I}_d\) is the \(d\)-dimensional identity matrix. We set \(\epsilon\in\{0.7,1.5\}\). Each instance contains \(N=5\) labeled data points and $M$ unlabeled data points, where \(M \in \{1, 10, 20\}\). The \(M=1\) case recovers the conventional ICL setting.

\textbf{Transformer structure.} We construct a transformer with the architecture specified in \Cref{thm:thm1}. 
This model features 4 layers, with each layer composed of an attention module followed by an MLP module. Activation functions for the attention layers are configured as follows: softmax for the first layer, linear for the second and third layers, and ReLU for the fourth layer.
We set $d_p=16$, and the number of CoT steps \(T=5\). During training, in each iteration, we randomly generate \(64\) augmented ICL instances, and perform one gradient descent (GD) on the average empirical CoT training loss defined in \Cref{{def:teacher-loss}} over the batch. In total, we perform \(15,000\) GD iterations during training.

\textbf{Results.} We evaluate the performance of the trained transformer after every 100 GD iterations. For evaluation, we randomly generated {100} augmented ICL instances, and obtained the corresponding class mean estimates from the trained transformer through CoT prompting. We then utilize these estimated class means to obtain the label prediction results according to \Cref{eqn:label}. For each $M\in\{1,10,20\}$, we conduct \(5\) runs. We track the the class mean estimation error and prediction accuracy, and plot the average performance and standard deviation across these \(5\) runs in \Cref{fig:alg-performance}.

From \Cref{fig:alg-performance}, we observe that augmented ICL outperforms conventional ICL significantly after a sufficient number of training iterations. As $M$ increases, the advantage becomes more prominent: the transformer's class mean estimation error decrease and the classification accuracy increase, as predicted by our theoretical results \Cref{thm:thm2} and \Cref{cor:semi-label-error}.

We notice that the advantage of augmented ICL is more significant when $\epsilon$ is relatively small. This is because when $\epsilon$ is small, the data distribution is less noisy, meaning that the features carry more information relevant to the labels. Therefore, the unlabeled data provides clearer structure that the transformer can leverage through augmented ICL to estimate class means more accurately.

\section{Conclusion}

In this work, we introduced augmented ICL, a framework in which models process a mixture of labeled and unlabeled examples within the prompt. We provided theoretical insights showing that transformers equipped with CoT reasoning can implement an EM-style algorithm for augmented ICL in a multi-class linear classification task, with provably decreasing prediction error as the amount of unlabeled data increases. Moreover, we showed that such transformer behavior can emerge through standard teacher forcing training. Our empirical results support the theory. 

\section*{Acknowledgments}
The authors thank Li Fan, Wei Shen, Hadi Daneshmand and Cong Shen for their helpful discussions during the preparation and finalization of this work. RL and JY were partially supported by the U.S. NSF under grants 2318759, 2531023 and 2531789. 

\bibliography{refs}
\bibliographystyle{apalike}

\newpage

\appendix
\centerline{{\fontsize{18}{18}\selectfont \textbf{Supplementary Materials}}}
\tableofcontents

\newpage

\section{Proof of Expressiveness}\label{appx:expressive}

\subsection{Proof of \Cref{thm:thm1}}\label{sec:A.1}
We start from the proof of \Cref{thm:thm1}, which shows the transformer's capability of implementing an EM-style algorithm. 

First, we restate the theorem below.
\begin{theorem}
There exists a 4-layer transformer, such that its output sequence at the \((t+1)\)-th CoT step satisfies
\begin{align}
    \hat{\muv}_i^{(t+1)}
= \hat{\muv}_i^{(t)}
- \frac{\eta^{(t)}}{M}
  \sum_{j=N+1}^{N+M}
    p_{ij}^{(t)}\,\bigl(\hat{\muv}_i^{(t)} - \xv_j\bigr)+\mathbf{1}_{\{t=0\}}\cdot\frac{C}{N}\sum_{j=1}^N (\ev_i^\top\yv_j)\xv_j,
\end{align}
for any \(i\in[C]\), where $\eta^{(t)} = \alpha/(T'+t)$ for some positive constants \(\alpha\) and \(T'\), \(p_{ij}^{(t)}\) is the normalized weight 
\begin{align}
    p_{ij}^{(t)}
=\frac{\sum_{\tau=0}^{t}
             \exp\Bigl(-\tfrac12\|\hat{\muv}_i^{(\tau)} - \xv_j\|^2_{\Sigmav^{-1}}+\beta \tau\Bigr)}{\sum_{\tau=0}^{t}\sum_{c=1}^{C}
             \exp\Bigl(-\tfrac12\|\hat{\muv}_c^{(\tau)} - \xv_j\|^2_{\Sigmav^{-1}}+\beta \tau\Bigr)},
\end{align}
and \(\beta\) is a positive constant. 
\end{theorem}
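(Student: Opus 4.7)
The plan is to construct the four layers of the transformer explicitly so that, when unrolled over CoT steps, the forward pass realizes exactly the EM-style update in \Cref{def:updateing-rule}. Layers 2 through 4 will be data-independent, with their weights designed from scratch; only Layer 1 needs to encode the covariance information $\Sigmav^{-1}$ and the temperature parameter $\beta$. Because the embedding already reserves a structured workspace --- the $\pv$ channel for probabilities and the $\Qv^{(\tau)}$ blocks for running mean estimates --- the construction reduces to a slot-allocation exercise: route the right quantities into the right projection slots so that the built-in attention and MLP operations perform the E-step, the M-step, and the labeled-data initialization separately and without interference.

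For the E-step in Layer 1, I would expand the Mahalanobis distance as $\|\hat{\muv}_c^{(\tau)} - \xv_j\|^2_{\Sigmav^{-1}} = \hat{\muv}_c^{(\tau)\top}\Sigmav^{-1}\hat{\muv}_c^{(\tau)} - 2\hat{\muv}_c^{(\tau)\top}\Sigmav^{-1}\xv_j + \xv_j^\top\Sigmav^{-1}\xv_j$ and note that the $\xv_j$-only term is common to all $c$ and $\tau$, hence cancels in the softmax. By setting the query projection so that unlabeled-token queries carry $\xv_j$, the key projection so that reasoning-block keys carry $\Sigmav^{-1}\hat{\muv}_c^{(\tau)}$, and reserving a bias channel that contributes $-\tfrac12\hat{\muv}_c^{(\tau)\top}\Sigmav^{-1}\hat{\muv}_c^{(\tau)} + \beta\tau$, the softmax output matches \Cref{def:p-estimation} exactly; a follow-up MLP writes the resulting $p_{ij}^{(t)}$ into the probability slot of each unlabeled token. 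The presence of all past $\Qv^{(1)},\ldots,\Qv^{(t)}$ in the input lets the attention aggregate over $\tau$ automatically, while the step index $\tau$ itself is supplied through a dedicated positional channel of the reasoning block.

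For the M-step, Layers 2 and 3 use linear attention to compute $\sum_j p_{ij}^{(t)}\xv_j$ and $\sum_j p_{ij}^{(t)}$ respectively, by routing the freshly written probabilities into the value stream and gating on the one-hot class vectors stored in $\Qv^{(t)}$. A subsequent MLP scales by $\eta^{(t)} = \alpha/(T'+t)$, divides by $M$, and subtracts $\hat{\muv}_i^{(t)}$, producing \Cref{eqn:gd}. Layer 4 then handles the initialization term $\frac{C}{N}\sum_{j=1}^N (\ev_i^\top \yv_j)\xv_j$ with a ReLU-activated attention head whose values combine $\yv_j$ and $\xv_j$ on labeled tokens. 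The indicator $\mathbf{1}_{\{t=0\}}$ is realized by exploiting that $\hat{\muv}_i^{(0)} = \mathbf{0}$: a ReLU on a norm-proxy of the previous mean slot fires only at $t=0$ and is clamped to zero thereafter. The main obstacle is the bookkeeping --- verifying that the auxiliary channels used to encode $\tau$, to carry the quadratic self-term, and to implement the one-shot gate do not contaminate the attention computations in the other layers, and that the additive constants inserted via bias slots really cancel under the softmax normalization. Once the slot allocation is laid out carefully, the remaining verification reduces to direct matrix computation on the block-structured embedding.
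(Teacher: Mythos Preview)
Your proposal follows essentially the same blueprint as the paper's construction: Layer~1 uses softmax attention with the expanded Mahalanobis distance and a bias channel carrying $-\tfrac12\|\hat{\muv}_c^{(\tau)}\|^2_{\Sigmav^{-1}}+\beta\tau$ to realize the E-step; Layers~2--3 use linear attention to assemble the M-step gradient update; Layer~4 uses ReLU attention for the labeled-data initialization. The slot-allocation framing and the observation that the $\xv_j$-only quadratic term cancels under softmax normalization match the paper exactly, as does the assignment of which auxiliary channels (step index~$\tau$, quadratic self-term, labeled/unlabeled flag) live in the $\pv$ and $\qv$ blocks.

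The one place where your plan deviates and could break down is the gate for $\mathbf{1}_{\{t=0\}}$. You propose a ``ReLU on a norm-proxy of the previous mean slot,'' relying on $\hat{\muv}_i^{(0)}=\mathbf{0}$. This is data-dependent: there is no fixed threshold that separates $\|\hat{\muv}_i^{(0)}\|=0$ from $\|\hat{\muv}_i^{(1)}\|$ uniformly over all inputs, since the labeled-data average $\tfrac{C}{N}\sum_j(\ev_i^\top\yv_j)\xv_j$ can be arbitrarily close to zero. The paper instead uses the step-index channel you already reserved: the ReLU score in Layer~4 is $\mathrm{ReLU}(-\tau+\ev_i^\top\yv_j)$, which equals $\ev_i^\top\yv_j\in\{0,1\}$ at $\tau=0$ and vanishes identically for every $\tau\geq 1$, independently of the data. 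This is what makes the last three layers truly data-independent, as you assert at the outset; your norm-proxy gate would undermine that claim. The fix is immediate since you already have $\tau$ in the embedding --- just route it into the Layer~4 attention score instead of the mean slot.
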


Recall that the input sequence at the \(t\)-th CoT step is formulated as
\begin{align*}
    \hat{\Hv}^{(t-1)}= \left[\,
    \begin{array}{cccccc} 
      \Xv_{\ell} & \Xv_{u}    & \mathbf{0} & \star & \cdots & \star \\ 
      \Yv_{\ell} & \mathbf{0}   & \mathbf{0} & \star & \cdots & \star  \\
      \Pv_{\ell} & \Pv_{u}    & \Qv^{(0)} & \Qv^{(1)} & \cdots & \Qv^{(t-1)}
    \end{array}
  \,\right],
\end{align*}
where 
\begin{align}
  \Pv_{\ell}  &=
  [ \pv_1,\quad \pv_2, \quad    \cdots \quad \pv_N  ], \\
      \Pv_{u}  &=[ \pv_{N+1}, \quad\pv_{N+2},\quad       \cdots 
        \quad \pv_{N+M} ],\\
      \Qv^{(\tau)} & = [ \qv_1^{(\tau)},\quad  \qv_2^{(\tau)}, \quad   \cdots\quad \qv^{(\tau)}_C], \quad \tau\in[0:t-1].
\end{align} 

We specify \(\pv_j\) and $\qv_i^{(\tau)}$ as follows.

For each data sample $j\in[N+M]$, we denote
\begin{align*}
    \pv_{j} = \left[\begin{array}{c}
         \mathbf{0}_C \\     
         \mathbf{0}_{d} \\    
         \mathbf{0}_C \\      
         \mathbf{0}_{C} \\    
         0 \\             
         \mathbf{1}_{j\in[N]} \\             
         \mathbf{1}_{j\in[N+1:N+M]} \\             
         0 \\             
    \end{array}\right],\quad \qv^{(\tau)}_i = \left[\begin{array}{c}
         \ev_i \\           
          \hat{\muv}_i^{(\tau)}\\    
        \mathbf{0}_{C} \\ 
         \mathbf{0}_{C} \\    
         u_i^{(\tau)} \\             
         0 \\             
        0 \\             
         \tau \\            
    \end{array}\right],
  \end{align*}
where $\hat{\muv}_i^{(\tau)}$ stores the estimate of the mean vector of class $i$ from the $\tau$-th CoT step, and \(u_i^\tau\) stores a rescaled norm of $\hat{\muv}_i^{(\tau)}$, i.e., \(u_{i}^{(\tau)} = -\frac{\sigma}{2}\|\hat{\muv}_i^{(\tau)}\|^2\).

Next, we specify the parameters of each layer of the transformer as follows. 

\textbf{Layer 1:}
The first layer of the transformer consists of an attention layer with a softmax activation function, and an MLP layer. Let the parameters of the attention layer satisfy
\begin{align*}
    &\Qv_{1}\Kv_{1}= 
    \begin{bmatrix}
        \mathbf{0}_{d\times (d+2C)} & \Sigmav^{-1} & & & &\\
        & & \mathbf{0}_{(4C+d+2)\times 2C}& & &\\
         & & & 1 &\mathbf{0}_{1\times 2} & \beta \\
         & & &  & & 0 
    \end{bmatrix},\\ 
    &\Vv_1 = \begin{bmatrix}
        \mathbf{0}_{(d+2C)\times(d+C)}&&\\
        &\Iv_{C}&\\
        &&\mathbf{0}_{(d+C+4)\times (d+2C+4)}\\
    \end{bmatrix}.
\end{align*}

Denote \(\attn_1 ( \pv_j)\) as the output token after passing $\pv_j$ through the first attention layer, and let $\gammav_{i}:=\attn_1 ( \pv_j)[d+C+1:d+2C]$. Then, we have
\begin{align*}
    \gammav_j = \frac{\sum_{\tau\in[0:t-1]}\sum_{i\in{[C]}}\exp\Big(-\frac{\sigma}{2}\|\hat{\muv}_{i}^{(\tau)}\|^2_{\Sigmav^{-1}}+(\hat{\muv}_{i}^{(\tau)})^\top \xv_{j}+\beta \tau\Big)\ev_i}{\sum_{\tau\in[0:t-1]}\sum_{i\in{[C]}}\exp\Big(-\frac{\sigma}{2}\|\hat{\muv}_{i}^{(\tau)}\|^2_{\Sigmav^{-1}}+(\hat{\muv}_{i}^{(\tau)})^\top \xv_{j}+\beta \tau\Big)}.
\end{align*} 

Other entries in $\hat{\Hv}^{(t-1)}$ remain unchanged after this attention layer.  

Subsequent to the first attention layer, a token-wise MLP is applied. 
Similar to \citet{kim2024transformersp}, in this work, we assume the MLP layer can realize any deterministic token-wise link function with negligible error.
The first MLP layer transforms input representations $\pv$ such that 
\begin{align*}
    &\mathrm{mlp}_1(\attn_1(\pv_j)) = \gammav_j \cdot \attn_1(\pv_j)[3C+d+3]\\
    &\mathrm{mlp}_1(u_{i}^{(\tau)}) = -\frac{\sigma}{2}\|\hat{\muv}_i^{(\tau)}\|^2.
\end{align*} 
Since \(\pv_j[3C+d+3]=0\) for \(j\in[N]\) and \(\pv_j[3C+d+3]=1\) for \(j\in[N+1:N+M]\), and the corresponding entries remain unchanged after passing through the first attention layer, this MLP layer only keeps \(\gammav_j\) for tokens corresponding to the unlabeled dataset (i.e., $j\in[N]$), and sets \(\gammav_j\) to zero for all other tokens (i.e., $j\in[N+1:M]$).

\textbf{Layer 2:} The second layer of the transformer consists of an attention layer with a linear activation function, and an MLP layer. 
The parameters of the attention layer are set to satisfy
\begin{align*}
    &\Qv_2\Kv_2 = \begin{bmatrix}
        \mathbf{0}_{(2d+4C+2)\times (2d+4C+2)}& & \\
        & 0 & 0\\
        & \alpha_1 & 0
    \end{bmatrix},\\
    &\Vv_2 = \begin{bmatrix}
        \mathbf{0}_{(2d+3C)\times(d+2C)}&&\\
        &\Iv_{C}&\\
        &&\mathbf{0}_{4\times (d+C+4)}\\
    \end{bmatrix}.
\end{align*}

We denote \(\sv_i^{(\tau)} := \attn_2(\qv_i^{(\tau)})[d+2C+1:d+3C]\) as the vector extracted from the output token after passing $\qv_i^{(\tau)}$ through the second attention layer. Then, \(\sv_i^{(\tau)} =\tau\,\alpha_1 \sum_{j=N+1}^{N+M}\gammav_j\), where \(\alpha_1\) is a fixed scalar embedded in \(\Qv_2\Kv_2\). 

We let the subsequent MLP layer realize the following token-wise Lipschitz function:
\begin{align*}
    &\mathrm{mlp}_2(\hat{\muv}_i^{(\tau)}) = \hat{\muv}_i^{(\tau)}-\frac{1}{\tau(\tau+\alpha_2)}\hat{\muv}_i^{(\tau)}\ev_i^\top\sv_i^{(\tau)}=
     \hat{\muv}_i^{(\tau)}-\frac{\alpha_1}{\tau+\alpha_2}\hat{\muv}_i^{(\tau)}\ev_i^\top\sum_{j\in[N+1]}^{N+M}\gammav_j,\\
    &\mathrm{mlp}_2(\ev_i) = \frac{\alpha_1}{\tau+\alpha_2}\ev_i.
\end{align*}

\textbf{Layer 3:} Similar to the second transformer layer, the third layer also consists of a linear attention layer and an MLP layer. Consider the following parameterization for the attention layer:
\begin{align*}
    &\Qv_3\Kv_3 = \begin{bmatrix}
        \mathbf{0}_{(d+C)\times (d+2C)}& & \\
        & \Iv_{C} & \\
        &  & \mathbf{0}_{(d+2C+4)\times (d+C+4)}
    \end{bmatrix},\\
    &\Vv_3 = \begin{bmatrix}
        \mathbf{0}_{(d+3C)\times d}&\\
        \Iv_d&\\
        &\mathbf{0}_{C+4;d+4C+4}\\
    \end{bmatrix}.
\end{align*}

Therefore, this attention layer realizes the following updating process:
\begin{align*}        \attn_3(\hat{\muv}_{i}^{(\tau)})=\mathrm{mlp}_2(\hat{\muv}_{i}^{(\tau)}) + \frac{\alpha_1}{\tau+\alpha_2}\sum_{j\in[N+1,N+M]}\xv_j\ev_i^\top\gammav_{j}^{(\tau)}.
\end{align*}

After this linear attention layer, we let the MLP layer realize the following function
\begin{align*}
    \mathrm{mlp}_3(\ev_i) &= \frac{\tau+\alpha_2}{\alpha_1}\ev_i
\end{align*}

\textbf{Layer 4:} For the last layer, we introduce a transformer layer with a ReLU-activated attention layer followed by an MLP layer. We parameterize the attention layer as:
\begin{align*}
    &\Qv_4\Kv_4 = \begin{bmatrix}
        \mathbf{0}_{(d+C)\times d}&&&\\
        &\Iv_C&&\\
        &&\mathbf{0}_{(d+2C+1)\times(d+3C+3)}&\\
        &&&1\\
        &&&\mathbf{0}_{2}
    \end{bmatrix},\\
    &\Vv_4 = \begin{bmatrix}
        \mathbf{0}_{(d+2C)\times d}&\\
        \Iv_d&\\
        &\mathbf{0}_{(2C+4)\times(4C+d+4)}
    \end{bmatrix}.
\end{align*}

The corresponding updating rule of this layer gives 
\begin{align*}
    \attn_4(\muv_{i}^{(\tau)})= \attn_3(\muv_{i}^{(\tau)})+ \frac{C}{N}\sum_{j\in[N]} \xv_j\mathrm{ReLU}(-\tau+\ev_i^\top\yv_j).
\end{align*}
Therefore, we can further reformulate it as
\[
\attn_4( \muv_{i}^{(\tau)}) =
\begin{cases}
\displaystyle
\frac{C}{N}\sum_{j\in[N]} \xv_j\cdot\bigl(\ev_i^{\top}\yv_j\bigr),
& \text{if } \tau = 0,\\[8pt]
\displaystyle
{\attn_3}(\muv_{\tau}^{(\tau)}),
& \text{if } \tau > 0.
\end{cases}
\]

Given the above 4-layer transformer structure, by setting \(\alpha_1 = \alpha/M\) and \(\alpha_2 = T'\) for fixed \(\alpha>0\), \(T'>0\), the output sequence corresponding to the \(\Qv^{(t-1)}\) block in the input sequence that satisfies:
\begin{align}
  \hat{\muv}_i^{(t+1)}
= \hat{\muv}_i^{(t)}
- \frac{\eta^{(t)}}{M}
  \sum_{j=N+1}^{N+M}
    p_{ij}^{(t)}\,\bigl(\hat{\muv}_i^{(t)} - \xv_j\bigr)+\mathbf{1}_{\{t=0\}}\cdot\frac{C}{N}\sum_{j=1}^N (\ev_i^\top\yv_j)\xv_j,\label{def:tf-update}
\end{align}
for any \(i\in[C]\), where $\eta^{(t)} = \alpha/(T'+t)$ for some positive constants \(\alpha\) and \(T'\), \(p_{ij}^{(t)}\) is the normalized weight 
\begin{align*}
    p_{ij}^{(t)}
=\frac{\sum_{\tau=0}^{t}
             \exp\Bigl(-\tfrac12\|\hat{\muv}_i^{(\tau)} - \xv_j\|^2_{\Sigmav^{-1}}+\beta \tau\Bigr)}{\sum_{\tau=0}^{t}\sum_{c=1}^{C}
             \exp\Bigl(-\tfrac12\|\hat{\muv}_c^{(\tau)} - \xv_j\|^2_{\Sigmav^{-1}}+\beta \tau\Bigr)}.
\end{align*}
The proof is thus complete.

\subsection{Proof of \Cref{thm:thm2}}

In this section, we show the detailed proof of \Cref{thm:thm2}. We start by restating the theorem.

\begin{theorem}[Class Mean Estimation Error]
Given the transformer described in \Cref{thm:thm1}, when \(N\geq \frac{36\alpha^2L^2}{c_1}\log1/\epsilon\) and \(M\geq \max\{36\alpha^2L^2K,\,\log^2(1/\epsilon)\}\), and  \(t\geq\max\{\sqrt[4]{M},T'\}\), with probability at least \(1-\epsilon\), the output of the transformer after $t$ CoT steps satisfies
\begin{align*}
    \|\hat{\Mv}^{(t)}-\Mv\|_F^2\leq c\frac{\log(1/\epsilon)}{N+\sqrt[4]{M}},
\end{align*}
where \(c_1,c,\alpha,L,T', K\) are positive constants.
\end{theorem}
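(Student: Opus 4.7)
My proof would follow the three-step decomposition outlined in the theorem's sketch, structured around an initialization analysis, a population-vs-empirical gradient comparison, and a convergence argument for the EM-style updates.

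First, I would analyze the initialization produced by the fourth transformer layer, which by \Cref{def:tf-update} at $t=0$ yields $\hat{\muv}_i^{(1)} = \frac{C}{N}\sum_{j=1}^N (\ev_i^\top \yv_j)\xv_j$. Conditioning on the high-probability event that each class receives $\Theta(N/C)$ labeled examples (a Chernoff bound that holds for $N = \Omega(C\log(C/\epsilon))$), this expression is the empirical mean of $N_i$ i.i.d.\ samples from $\mathcal{N}(\muv_i,\Sigmav)$. Because $\Sigmav$ is isotropic, a standard sub-Gaussian / Hoeffding concentration bound gives $\|\hat{\muv}_i^{(1)}-\muv_i\|^2 \le O(\log(1/\epsilon)/N)$ simultaneously for every $i\in[C]$, with probability at least $1-\epsilon/2$. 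The hypothesis $N\ge 36\alpha^2 L^2/c_1\cdot\log(1/\epsilon)$ is precisely what ensures this initial iterate lies inside a radius-$1/L$ basin around $\Mv$ on which $\mathcal{L}$ is locally smooth/strongly concave.

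Second, I would compare one CoT step with a single gradient step on the population objective $\mathcal{L}(\{\muv_i\}) = \mathbb{E}_{\xv}\!\bigl[\log\bigl(\tfrac{1}{C}\sum_i \exp(-\tfrac12\|\xv-\muv_i\|_{\Sigmav^{-1}}^2)\bigr)\bigr]$. Two sources of discrepancy appear. The first is the cumulative-softmax weighting in $p_{ij}^{(t)}$: because of the $\beta\tau$ temperature, choosing $\beta$ large enough makes the contributions from $\tau<t$ exponentially suppressed, so $p_{ij}^{(t)}$ is, up to an error that can be driven below any preset threshold, the current-step Bayes posterior of \Cref{eqn:p_approx}. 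The second is Monte-Carlo error between the $M$-sample empirical average $\tfrac{1}{M}\sum_j p_{ij}^{(t)}(\hat{\muv}_i^{(t)}-\xv_j)$ and its expectation under the mixture. Since the summand is sub-Gaussian with norm controlled by $\|\hat{\muv}_i^{(t)}\|$ plus the noise tail, Bernstein's inequality produces a per-step deviation of order $\sqrt{\log(1/\epsilon)/M}$; a union bound over the polynomially many CoT steps only costs a $\log$ factor, absorbed into the constants.

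Third, I would close the loop by invoking the convergence of population gradient iterations on $\mathcal{L}$ in the basin identified in Step 1. Inside that basin, $\mathcal{L}$ is Lipschitz-smooth (with constant $L$) and locally strongly concave for isotropic Gaussian mixtures, so the diminishing-step-size iteration $\hat{\muv}_i^{(t+1)} = \hat{\muv}_i^{(t)} - \eta^{(t)}\nabla\mathcal{L}_i + O(\sqrt{\log(1/\epsilon)/M})$ with $\eta^{(t)}=\alpha/(T'+t)$ admits a standard Robbins--Monro-style telescoping recursion. Running this for $t\ge\sqrt[4]{M}$ iterations drives the deterministic bias term to the noise floor set by Step 2, giving $\|\hat{\Mv}^{(t)}-\Mv\|_F^2\le O(\log(1/\epsilon)/\sqrt[4]{M})$ from the unlabeled contribution alone. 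Because the labeled initialization already supplies an independent $O(\log(1/\epsilon)/N)$ contribution and the EM contraction can only shrink it further (never amplify it inside the basin), the two error sources combine into the claimed $c\log(1/\epsilon)/(N+\sqrt[4]{M})$.

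The main obstacle I expect is Step 3, specifically justifying the $M^{1/4}$ effective sample size rather than the naive $M^{1/2}$ one would expect from Bernstein alone. This degradation is the price of running gradient descent with the $\alpha/(T'+t)$ schedule on a stochastic (finite-$M$) surrogate: the signal-to-noise trade-off between the shrinking step size and the $O(1/\sqrt{M})$ per-step noise is what sets the optimal stopping time at $t\asymp\sqrt[4]{M}$. Making this trade-off quantitatively tight, and propagating it through the $\beta\tau$-weighted cumulative softmax in \Cref{def:p-estimation}, is the technical heart of the proof. A subtler point is the invariance argument that the iterates remain in the good basin for all $t$; I would handle this by induction, using the per-step deviation bound together with local strong concavity of $\mathcal{L}$ to show the trajectory contracts rather than drifts out.
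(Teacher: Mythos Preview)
Your three-step structure matches the paper's proof closely: Hoeffding/Chernoff for the labeled initialization (\Cref{lem:class-mean-conc}), Bernstein for the empirical-versus-population gradient gap (\Cref{lemma:grad-bounds}), and an induction based on local strong concavity established via a Taylor expansion and the negative-definite Hessian (\Cref{lemma:ND-Hessian-new}). The basin-invariance by induction is also exactly what the paper does.

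Where your plan diverges is the mechanism behind the $M^{1/4}$ exponent. You attribute it to a signal-to-noise trade-off between the $\alpha/(T'+t)$ schedule and an $O(M^{-1/2})$ per-step Bernstein deviation, but that reasoning does not produce $M^{1/4}$: with a fixed bias $b$ and step size $\eta_t=\alpha/(t+T')$, the recursion $\|e_{t+1}\|^2\le(1-c\eta_t)\|e_t\|^2+C\eta_t b+C'\eta_t^2$ has error floor $O(b)$, so if $b\asymp M^{-1/2}$ the floor is $M^{-1/2}$, reached after $t\asymp M^{1/2}$ steps; there is no trade-off that halves the exponent. In the paper, the $M^{-1/4}$ rate comes instead from the \emph{choice} of failure probability $\delta=e^{-\sqrt{M}}$ inside Bernstein, which yields $\|\nabla\hat{\mathcal L}-\nabla\mathcal L\|\le c_1 M^{-1/4}$ with probability $1-e^{-\sqrt{M}}$; the hypothesis $M\ge\log^2(1/\epsilon)$ then makes $e^{-\sqrt{M}}\le\epsilon$. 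With bias $b=M^{-1/4}$, the induction invariant $\|\hat{\boldsymbol\mu}^{(t)}-\boldsymbol\mu\|^2\le K/(t+T')$ closes only while $t+T'\lesssim M^{1/4}$, which is the origin of the stopping condition.

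Your combination of the $N$ and $M$ contributions is also too loose. The paper does not treat them as independent errors added at the end; it couples them through the step-size offset $T'$. Choosing $T'\asymp N/\log(1/\epsilon)$ makes the initialization bound $\|\hat{\boldsymbol\mu}^{(1)}-\boldsymbol\mu\|^2\le K/T'$ serve directly as the base case of the same invariant $K/(t+T')$, so that $N$ (via $T'$) and $\sqrt[4]{M}$ (via the stopping time $t$) appear additively in the denominator $t+T'$ automatically, rather than via any separate ``EM never amplifies the labeled error'' argument.
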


\textbf{Step 1: First, we ensure that the initial estimation of the class mean vectors obtained from the {\it labeled} data gives a small estimation error. } 

\begin{lemma}[Initial estimation error from labeled data]\label{lem:class-mean-conc}
Consider the initial class mean estimates 
\[
\muv_{i}^{(1)} =\frac{C}{N}\sum_{j\in[N]}\xv_j\cdot\bigl(\ev_i^{\top}\yv_j\bigr), \quad\forall i\in[C].
\]
Then, for fixed \(K\ge 1\) and any positive constant \(T'\ge 4K\), we have
\[
  \Pb\left[
        \bigl\|{\muv}^{(1)}_{i}-\muv_{i}\bigr\|^{2}
        >\frac{K}{T'}\right]
        \le
        \exp\!\bigl(-cNK/T'\bigr),
\]
where \(c\) is a positive constant.
\end{lemma}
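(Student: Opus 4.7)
The plan is to exploit the linear structure of the data generating model in \Cref{def:data-gen} to decompose \(\muv_i^{(1)}-\muv_i\) into two independent zero‑mean fluctuations and apply Hoeffding-type concentration to each. Substituting \(\xv_j=\Mv\yv_j+\boldsymbol{\epsilon}_j\) into the definition of \(\muv_i^{(1)}\) and using \(\ev_i^\top\yv_j=\mathbf{1}_{\{\yv_j=\ev_i\}}\) yields
\[
\muv_i^{(1)}-\muv_i=\Bigl(\frac{C N_i}{N}-1\Bigr)\muv_i+\frac{C}{N}\sum_{j=1}^{N}\mathbf{1}_{\{\yv_j=\ev_i\}}\boldsymbol{\epsilon}_j,
\]
where \(N_i=\sum_{j\in[N]}\mathbf{1}_{\{\yv_j=\ev_i\}}\). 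The first summand depends only on the label draws and the second only on the noise, and by the triangle inequality it suffices to show that each summand has squared norm at most \(K/(4T')\) with probability at least \(1-\tfrac12\exp(-cNK/T')\), and then combine the two events by a union bound.

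For the first (sampling‑imbalance) term, \(N_i\sim\mathrm{Bin}(N,1/C)\), so Hoeffding's inequality applied to the bounded i.i.d.\ variables \(\mathbf{1}_{\{\yv_j=\ev_i\}}\) gives \(\Pb(|C N_i/N-1|>s)\le 2\exp(-2Ns^2/C^2)\), and choosing \(s\asymp\sqrt{K/T'}/\|\muv_i\|\) yields the desired rate, with \(\|\muv_i\|\le L\) (controlled by the prior \(P_\Mv\) and absorbed into the constant \(c\)). For the second (noise‑average) term, the summands \(\mathbf{1}_{\{\yv_j=\ev_i\}}\boldsymbol{\epsilon}_j\) are i.i.d.\ zero‑mean random vectors whose sub‑Gaussian \(\psi_2\)‑norm is bounded by a multiple of \(\|\Sigmav^{1/2}\|\) (multiplying a Gaussian by an independent \(\{0,1\}\) indicator can only shrink it). A standard sub‑Gaussian vector Hoeffding inequality then gives
\[
\Pb\!\left(\Bigl\|\frac{C}{N}\sum_{j=1}^N\mathbf{1}_{\{\yv_j=\ev_i\}}\boldsymbol{\epsilon}_j\Bigr\|^{2}>\frac{K}{4T'}\right)\le\exp\!\bigl(-c' NK/T'\bigr),
\]
after which the overall claim follows by union bound and by taking \(c=\min\{1/(4C^2L^2),c'\}\).

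The main technical obstacle is handling the noise contribution cleanly even though it is, morally, a sum of a \emph{random} number \(N_i\) of Gaussian vectors. I plan to sidestep the coupling with \(N_i\) by keeping the indicator inside the sum, thereby treating the expression as an unconditional i.i.d.\ sum of sub‑Gaussian vectors and invoking a single concentration inequality, which aligns with the Hoeffding-based strategy announced in the proof sketch of \Cref{thm:thm2}. The role of the hypothesis \(T'\ge 4K\) is then straightforward: it forces the target deviation \(\sqrt{K/T'}\le\tfrac12\) to lie in the quadratic (sub‑Gaussian) regime of both concentration inequalities, so the two tail bounds combine into the clean exponential form \(\exp(-cNK/T')\) without a linear Bernstein correction dominating; if this assumption were violated the second summand would instead exhibit a sub‑exponential tail and the stated bound would need an additional \(\min\) term.
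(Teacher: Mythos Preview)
Your proposal is correct and follows the same two–term decomposition (label‐imbalance term plus noise term) that the paper uses; indeed your identity $\muv_i^{(1)}-\muv_i=(\tfrac{CN_i}{N}-1)\muv_i+\tfrac{C}{N}\sum_j\mathbf{1}_{\{\yv_j=\ev_i\}}\boldsymbol{\epsilon}_j$ is exactly the paper's split rewritten. The only real difference is how the noise term is controlled: the paper first conditions on $n_i=N_i$, applies Gaussian concentration to $\tfrac{1}{n_i}\sum_{j:\yv_j=\ev_i}\boldsymbol{\epsilon}_j\sim\mathcal N(0,\Sigmav/n_i)$, and then feeds in the Chernoff lower bound $n_i\ge(1-\sqrt{K/T'})N/C$; you instead keep the indicator inside the summand and treat $\mathbf{1}_{\{\yv_j=\ev_i\}}\boldsymbol{\epsilon}_j$ as an unconditional i.i.d.\ zero‐mean sub‐Gaussian vector. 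Your route is slightly cleaner because it decouples the two concentration events entirely, whereas the paper's bound on the noise term is valid only on the high‐probability event for $n_i$; the price is a marginally looser sub‐Gaussian constant. Both arguments land on the same $\exp(-cNK/T')$ tail, and your interpretation of $T'\ge4K$ as keeping the target deviation $\sqrt{K/T'}\le\tfrac12$ in the quadratic regime is precisely how the paper uses it (the step $n_i\ge(1-K/T')N/C\ge N/(2C)$ in the conditional Gaussian bound).
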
 

\begin{proof}
We denote \(n_i\) as the number of samples drawn from class \(i\) in the \(N\) labeled data. Under the assumption that $\yv_j\sim \mbox{Uniform}(\Yc)$, $\forall j\in[N]$, we have
\(n_i\sim\mathrm{Binomial}(N,1/C)\). Then, according to the Chernoff’s inequality, for any \(\epsilon\in(0,1)\), we have
\[
  \Pb\Bigl(\bigl|n_i-\tfrac{N}{C}\bigr|>\epsilon\tfrac{N}{C}\Bigr)
        \le 2\exp\Bigl(-t\frac{\epsilon^{2}N}{3C}\Bigr).
\]

For any \(u\geq 0\), let \(\epsilon=u\sqrt{K/T'}\), we obtain 
\[
  \Pb\Bigl(\bigl|n_i-\tfrac{N}{C}\bigr|>u\sqrt{\tfrac{K}{T'}}\tfrac{N}{C}   \Bigr)
        \le 2\exp\Bigl(-\tfrac{u^2NK}{3CT'}\Bigr).
\]

Therefore,
\begin{align}
    \Pb\Bigl(\bigl|\tfrac{C}{N}n_i-1\bigr|>u\sqrt{\tfrac{K}{T'}}\Bigr)
        \le 2\exp\Bigl(-\tfrac{u^2NK}{3CT'}\Bigr).
\end{align}
Conditional on \(n_i\), we have
\(
  \frac{1}{n_i}\sum_{j:\yv_i=\ev_i}\xv_j-\muv_i\sim\Nc\!\bigl(0,\Sigmav/n_i\bigr).
\)
We assume \(\Sigmav\) is an isotropic matrix in the form of $\sigma^2 \mathbbm{1}$. Then, \(\|\Sigmav\|_{2}=\sigma^{2}\), and we obtain the following inequality based on the Hoeffding's inequality.
\[
  \Pb\Bigl(\Bigl\|\frac{1}{n_i}\sum_{j:\yv_j=\ev_i}\xv_j-\muv_i\Bigr\|>\sigma\sqrt{\frac{2t}{n_i}}\Big|n_i\Bigr)\leq 2e^{-t}.
\]
For any \(v\geq0\), by setting \(t=v^2n_iK/(2\sigma^2T')\), we have
\begin{align}
    &\Pb\Bigl(\Bigl\|\frac{1}{n_i}\sum_{j:\yv_j=\ev_i}\xv_j-\muv_i\Bigr\|>v\sqrt{\tfrac{K}{T'}}\Bigr)\nonumber\leq
    2\exp(-v^2n_iK/(8\sigma^2T'))\nonumber\\
    &\leq
    2\exp\left(-v^2(1-\frac{K}{T'})\frac{NK}{C\sigma^2T'}\right)\nonumber\\
    &\leq
    2\exp\left(-v^2\frac{NK}{2C\sigma^2T'}\right).\label{ineq:B.7}
\end{align}

Then,
\begin{align*}
    \Pb\Bigl(\|\hat{\muv}_i-\muv_i\|^{2}>\tfrac{K}{T'}\Bigr)
    &=
    \Pb\Bigl(\Bigl\|\tfrac{C}{N}\sum_{j:\yv_j=\ev_i}\xv_j-\tfrac{Cn_i}{N}\muv_i-(1-\tfrac{Cn_i}{N})\muv_i\Bigr\|>\sqrt{\tfrac{K}{T'}}\Bigr)\nonumber\\
    &\leq
    \Pb\Bigl(\tfrac{Cn_i}{N}\Bigl\|\frac{1}{n_i}\sum_{j:\yv_j=\ev_i}\xv_j-\muv_i\Bigr\|+|1-\tfrac{Cn_i}{N}|\cdot\|\muv_i\|\geq\sqrt{\tfrac{K}{T'}}\Bigr)\nonumber\\
    &\leq
    \Pb\Bigl(\tfrac{Cn_i}{N}\Bigl\|\frac{1}{n_i}\sum_{j:\yv_j=\ev_i}\xv_j-\muv_i\Bigr\|\geq\sqrt{\tfrac{K}{T'}},\mbox{ {or} }|1-\tfrac{Cn_i}{N}|\cdot\|\muv_i\|\geq\sqrt{\tfrac{K}{T'}}\Bigr) \\
    &\overset{(a)}{\leq} 4\exp\left(-c\tfrac{NK}{T'}\right) 
\end{align*}
for positive constant \(c\). The inequality \((a)\) holds by setting \(u=1/\|\muv_i\|\) in \Cref{ineq:B.7} and setting \(v=N/Cn_1\) in \Cref{ineq:B.8}. The proof is thus complete.
\end{proof}

\textbf{Step 2: Next, we bound the discrepancy between the gradient obtained from each CoT step for a given input sequence, and the gradient of the population loss.} 

We define the population loss for any given set of class mean vectors \(\{\muv_i\}_{i\in[C]}\) (i.e., any given $\Mv$) as:
\begin{align}
    \mathcal{L}(\{\muv_i\})
     = 
    \mathbb{E}_{\xv}\Biggl[\log\Bigl(\tfrac{1}{C}\sum_{i=1}^C
      \exp\bigl(-\tfrac{1}{2}\|\xv-\muv_i\|^2_{\Sigmav^{-1}}\bigr)\Bigr)\Biggr],\label{def:pop-loss-ap}
\end{align}
where the expectation is taken over the randomly generated data $\xv$ for given $\Mv$, as specified in \Cref{def:data-gen}.

We first characterize an important property of $\mathcal{L}(\{\muv_i\})$ as follows.

\begin{lemma}\label{lemma:ND-Hessian}
    The Jacobian of \(\nabla_{{\muv_i}}\Lc\) at \(\muv_i\) for all \(i\in[C]\) is negative definite, i.e., \(\nabla^2_{\muv_i}\Lc\prec\mathbf{0}\).
\end{lemma}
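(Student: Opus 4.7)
The strategy is to recognize that, at the true class means, $\Lc$ coincides up to an additive constant with the expected log-likelihood of an equal-weight Gaussian mixture evaluated under its own law, and then invoke a Bartlett-type identity to rewrite $\nabla^2_{\muv_i}\Lc$ as minus a positive semidefinite outer-product expectation. Strict negative definiteness will then follow from a short non-degeneracy argument on the score.

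First, I would compute the Hessian directly. Writing $p_i(\xv) = \exp(-\tfrac12\|\xv-\muv_i\|_{\Sigmav^{-1}}^2)\big/\sum_{c=1}^C\exp(-\tfrac12\|\xv-\muv_c\|_{\Sigmav^{-1}}^2)$, differentiating the log-sum-exp inside \Cref{def:pop-loss-ap} and using $\nabla_{\muv_i}p_i = p_i(1-p_i)\,\Sigmav^{-1}(\xv-\muv_i)$ gives
\[
\nabla^2_{\muv_i}\Lc \;=\; \mathbb{E}_{\xv}\!\bigl[p_i(\xv)(1-p_i(\xv))\,\Sigmav^{-1}(\xv-\muv_i)(\xv-\muv_i)^\top\Sigmav^{-1}\bigr] \;-\; \mathbb{E}_{\xv}[p_i(\xv)]\,\Sigmav^{-1}.
\]
Next, I would observe that $\Lc(\{\muv_i\})$ differs only by a constant $-\tfrac{d}{2}\log(2\pi) - \tfrac12\log|\Sigmav|$ from $\mathbb{E}_{\xv}[\log p_{\mathrm{mix}}(\xv;\{\muv_i\})]$, where $p_{\mathrm{mix}}$ is the density of the equal-weight Gaussian mixture with means $\{\muv_i\}$ and covariance $\Sigmav$. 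By the data-generating process of \Cref{def:data-gen} (with $\yv_j$ uniform and $\xv = \Mv\yv + \boldsymbol{\epsilon}$), the marginal law of $\xv$ is exactly $p_{\mathrm{mix}}$ at those same means, so differentiating $\int p_{\mathrm{mix}}\,d\xv = 1$ twice and interchanging derivative and integral (justified by the Gaussian tails) yields the sub-block Bartlett identity $\mathbb{E}[\nabla^2_{\muv_i}\log p_{\mathrm{mix}}] = -\mathbb{E}[\boldsymbol{s}_i \boldsymbol{s}_i^\top]$, where $\boldsymbol{s}_i(\xv) := p_i(\xv)\,\Sigmav^{-1}(\xv-\muv_i)$. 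Combining with the explicit formula above gives $\nabla^2_{\muv_i}\Lc = -\mathbb{E}_{\xv}[\boldsymbol{s}_i(\xv)\boldsymbol{s}_i(\xv)^\top] \preceq \mathbf{0}$.

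The last step is to upgrade this to strict negative definiteness, which is where the main (though mild) obstacle lies, since the Bartlett identity alone only delivers semidefiniteness. For any nonzero $\vv\in\Rb^d$, the scalar $\vv^\top \boldsymbol{s}_i(\xv) = p_i(\xv)\,\vv^\top\Sigmav^{-1}(\xv-\muv_i)$; since $p_i(\xv)>0$ for every $\xv$ and $\Sigmav^{-1}$ is invertible under the isotropy assumption, the zero set $\{\xv : \vv^\top\boldsymbol{s}_i(\xv)=0\}$ is an affine hyperplane and hence has Lebesgue measure zero. Because $p_{\mathrm{mix}}$ is absolutely continuous with full support, this hyperplane has probability zero, so $\mathbb{E}[(\vv^\top\boldsymbol{s}_i(\xv))^2]>0$. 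Therefore $\vv^\top\nabla^2_{\muv_i}\Lc\,\vv<0$ for every nonzero $\vv$, which gives $\nabla^2_{\muv_i}\Lc \prec \mathbf{0}$ and completes the proof.
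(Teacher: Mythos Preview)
Your proof is correct and takes a genuinely different route from the paper's. Both start from the same explicit Hessian formula
\[
\nabla^2_{\muv_i}\Lc = \mathbb{E}_{\xv}\bigl[p_i(\xv)(1-p_i(\xv))\,\Sigmav^{-1}(\xv-\muv_i)(\xv-\muv_i)^\top\Sigmav^{-1}\bigr] - \mathbb{E}_{\xv}[p_i(\xv)]\,\Sigmav^{-1},
\]
but then diverge. The paper argues directly: at the true means it uses the identity $p_i(\xv)\,p_{\mathrm{mix}}(\xv)=\tfrac{1}{C}\phi_i(\xv)$ (where $\phi_i=\mathcal{N}(\muv_i,\Sigmav)$) to rewrite both terms as integrals against $\phi_i$, so the negative term is exactly $\tfrac{1}{C}\uv^\top\Sigmav^{-1}\uv$ while the positive term is $\tfrac{1}{C}\int\phi_i(\xv)(1-p_i(\xv))\,(\uv^\top\Sigmav^{-1}(\xv-\muv_i))^2\,d\xv$, which is strictly less than $\tfrac{1}{C}\uv^\top\Sigmav^{-1}\uv$ because $1-p_i(\xv)<1$ pointwise. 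You instead recognize that $\Lc$ is, up to a constant, the expected log-likelihood of $p_{\mathrm{mix}}$ under its own law, and then invoke the sub-block Bartlett identity $\mathbb{E}[\nabla^2_{\muv_i}\log p_{\mathrm{mix}}]=-\mathbb{E}[\boldsymbol{s}_i\boldsymbol{s}_i^\top]$ to collapse the Hessian to a single negative semidefinite term, followed by a clean measure-zero argument on the hyperplane $\{\vv^\top\Sigmav^{-1}(\xv-\muv_i)=0\}$ for strictness. Your approach is more conceptual and arguably cleaner—it immediately exhibits $-\nabla^2_{\muv_i}\Lc$ as the $i$-th diagonal block of the Fisher information—while the paper's route is more elementary in that it avoids the differentiation-under-the-integral step needed for Bartlett. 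Both proofs are valid only at the true class means (where the model matches the data law), which is exactly how the lemma is used downstream.
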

\begin{proof}
Define
\begin{equation*}
    p_{\xv}(\muv_i) = \frac{\exp\Bigl(-\frac{1}{2}\|\xv-\muv_i\|_{\Sigmav^{-1}}^2\Bigr)}
    {\sum_{c=1}^C\exp\Bigl(-\frac{1}{2}\|\xv-\muv_c\|_{\Sigmav^{-1}}^2\Bigr)},
\end{equation*}
so that \(p_{\xv}(\muv_i)\) is a softmax weight depending on \(\xv\) and the centers \(\{\muv_c\}_{c=1}^C\). Note that \(\nabla^2_{\muv_i}\Lc\) is the Hessian of \(\nabla\Lc\) at \(\muv_i\), given by
\begin{equation*}
    \nabla^2_{\muv_i}\Lc =
    \mathbb{E}_{\xv}\Bigl[p_{\xv}(\muv_i)
    \bigl(1-p_{\xv}(\muv_i)\bigr)\Sigmav^{-1}
    (\muv_i-\xv)(\muv_i-\xv)^\top\Sigmav^{-1}
    - p_\xv(\muv_i)\Sigmav^{-1}\Bigr],
\end{equation*}
where and the expectation is taken with respect to the distribution of \(\xv\). Therefore, there exists a constant \(0\le\alpha<1\) such that
\begin{equation*}
    \nabla^2_{\muv_i}\Lc
    \preceq \mathbb{E}_{\xv}\Bigl[\alpha\,p_{\xv}(\muv_i)\Sigmav^{-1}
    (\muv_i-\xv)(\muv_i-\xv)^\top\Sigmav^{-1}
    - p_\xv(\muv_i)\Sigmav^{-1}\Bigr].
\end{equation*}

Now, for any nonzero vector \(\uv\in\mathbb{R}^d\), consider the quadratic form \( \uv^\top \nabla^2_{\muv_i}\Lc \uv\), using the above matrix inequality, we have
\begin{equation*}
    \uv^\top \nabla^2_{\muv_i}\Lc \uv
    \leq \uv^\top \,\mathbb{E}_{\xv}\Bigl[\alpha\,p_{\xv}(\muv_i)\Sigmav^{-1}
    (\muv_i-\xv)(\muv_i-\xv)^\top\Sigmav^{-1}
    - p_\xv(\muv_i)\Sigmav^{-1}\Bigr]\uv.
\end{equation*}

Therefore, rewriting the expectation as an integral yields
\begin{align*}
    \uv^\top \nabla^2_{\muv_i}\Lc \uv
    &\leq
    \frac{1}{C}\int_{\mathbb{R}^d} \alpha\,\mathcal{N}(\xv\mid \muv_i,\mathbf{I})
    \uv^\top \Sigmav^{-1}(\muv_i-\xv)(\muv_i-\xv)^\top \Sigmav^{-1}\uv\,\mathrm{d}\xv
    - \frac{1}{C}\uv^\top\Sigmav^{-1} \uv\\
    &\leq
    \frac{\alpha}{C}\uv^\top\Sigmav^{-1}\mathbb{E}_{\xv\sim\mathcal{N}(\muv_i,\Sigmav)}\Bigl[(\muv_i-\xv)(\muv_i-\xv)^\top\Bigr]\Sigmav^{-1}\uv-\frac{1}{C}\uv^\top\Sigmav^{-1} \uv<0.
\end{align*}
Thus, the quadratic form is negative for every nonzero \(\uv\), and the matrix \(\nabla^2_{\muv_i}\Lc\) is negative definite. This completes the proof.
\end{proof}

We note that for each CoT step \(t>0\), the updating induced by the constructed transformer is 
\begin{align}
    \hat{\muv}_i^{(t+1)}
= \hat{\muv}_i^{(t)}
- \frac{\eta^{(t)}}{M}
  \sum_{j=N+1}^{N+M}
    p_{ij}^{(t)}\,\bigl(\hat{\muv}_i^{(t)} - \xv_j\bigr),
\end{align}
where $ p_{ij}^{(t)}$ is defined in \Cref{def:p-estimation}.

To simplify notation, denote 
\begin{align}\label{dfn:cot_gradient}
\frac{1}{M}\sum_{j=N+1}^{N+M}p_{ij}^{(t)}\bigl(\hat{\muv}_i^{(t)} - \xv_j\bigr):=\nabla_{\hat{\muv}_i^{(t)}}\hat{\mathcal{L}}.
\end{align}
We note that \(\hat{\mathcal{L}}\) itself is not an explicit loss function. We use the notation $\nabla_{\hat{\muv}_i^{(t)}}\hat{\mathcal{L}}$ to represent the equivalent gradient for the updating determined by the $t$-th CoT step. \Cref{lemma:ND-Hessian} states that \(\nabla^2_{\muv_i}\Lc\) is negative definite for each \(\muv_i\). In the following lemma, we show that \(\nabla^2_{\muv}\Lc\) is negative definite for the concatenate vector \(\muv\) when \(\{\muv_i\}_{i}^C\) are well seperated.

\begin{lemma}\label{lemma:ND-Hessian-new}
    The Jacobian of \(\nabla_{{\muv}}\Lc\) at \(\muv\) is negative definite, i.e., \(\nabla^2_{\muv}\Lc\prec\mathbf{0}\).
\end{lemma}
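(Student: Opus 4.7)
The plan is to lift the single-block argument of \Cref{lemma:ND-Hessian} to the full stacked Hessian $\nabla^2_\muv\Lc \in \Rb^{Cd \times Cd}$ by explicitly controlling the cross-block entries that couple different class centers. The first step is to differentiate the expression $\nabla_{\muv_i}\Lc = \mathbb{E}_\xv[p_\xv(\muv_i)\Sigmav^{-1}(\xv - \muv_i)]$ derived in the previous lemma once more with respect to $\muv_j$ for $j\ne i$. Using the softmax identity $\nabla_{\muv_j} p_\xv(\muv_i) = -p_\xv(\muv_i)p_\xv(\muv_j)\Sigmav^{-1}(\xv - \muv_j)$ yields the mixed block
\begin{align*}
    \nabla^2_{\muv_i,\muv_j}\Lc = -\mathbb{E}_\xv\bigl[p_\xv(\muv_i)\,p_\xv(\muv_j)\,\Sigmav^{-1}(\xv-\muv_j)(\xv-\muv_i)^\top\Sigmav^{-1}\bigr],
\end{align*}
so the full Hessian is fully explicit and its diagonal blocks coincide with those handled in \Cref{lemma:ND-Hessian}.

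Next I will repackage the quadratic form via the chain rule, viewing the integrand of \Cref{def:pop-loss-ap} as a composition of log-sum-exp with $(f_1,\dots,f_C)$ where $f_c = -\tfrac12\|\xv-\muv_c\|^2_{\Sigmav^{-1}}$. Since $\nabla^2\mathrm{lse} = \mathrm{diag}(p)-pp^\top$ and each $\nabla^2 f_c$ hits only the $c$-th block with $-\Sigmav^{-1}$, for any stacked direction $\uv = (\uv_1^\top,\dots,\uv_C^\top)^\top$ the quadratic form collapses to the compact identity
\begin{align*}
    \uv^\top\nabla^2_\muv\Lc\,\uv = \mathbb{E}_\xv\Bigl[\mathrm{Var}_{c\sim p_\xv}(v_c)\Bigr] - \mathbb{E}_\xv\Bigl[\sum_{c=1}^C p_\xv(\muv_c)\|\uv_c\|^2_{\Sigmav^{-1}}\Bigr],
\end{align*}
where $v_c := (\xv-\muv_c)^\top\Sigmav^{-1}\uv_c$. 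By symmetry of the uniform mixture prior over $\yv$, the second expectation equals $\tfrac1C\sum_c\|\uv_c\|^2_{\Sigmav^{-1}}$, and the task reduces to showing the posterior-variance term is strictly smaller.

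To bound the variance term I will condition on the generating component $c^*$ of $\xv$ and invoke well-separation: on the high-probability event $\|\boldsymbol{\epsilon}\|_{\Sigmav^{-1}} \lesssim \min_{c\ne c^*}\|\muv_c-\muv_{c^*}\|_{\Sigmav^{-1}}$, the posterior $p_\xv$ concentrates exponentially on $c^*$, forcing $\mathrm{Var}_{c\sim p_\xv}(v_c)$ to zero at an exponential rate in the pairwise separation; on the complementary Gaussian-tail event the variance is dominated by the second moment $\sum_c p_\xv(\muv_c)v_c^2$, whose expectation multiplied by the tail probability vanishes. Averaging over $c^*$ then yields $\mathbb{E}_\xv[\mathrm{Var}_{c\sim p_\xv}(v_c)] < \tfrac1C\sum_c\|\uv_c\|^2_{\Sigmav^{-1}}$, which delivers $\uv^\top\nabla^2_\muv\Lc\,\uv<0$ for every nonzero $\uv$.

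The hard part will be handling worst-case directions in which some $\uv_{c^*}$ has disproportionately small norm: the penalty term then offers little margin along that coordinate, while the variance can be driven by fluctuations of $v_{c^*}$ rather than by the well-separated off-components. Resolving this requires a quantitative version of well-separation that uniformly controls the cross products $p_\xv(\muv_c)p_\xv(\muv_{c'})$ for $c\ne c'$, together with a careful second-moment estimate of $(\xv-\muv_c)^\top\Sigmav^{-1}\uv_c$ under the full Gaussian mixture rather than only under its $c$-th component.
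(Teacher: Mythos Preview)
Your proposal is correct and follows essentially the same route as the paper: both derive the log-sum-exp quadratic-form identity splitting $\uv^\top\nabla^2_\muv\Lc\,\uv$ into a negative penalty term $-\mathbb{E}_\xv\sum_c p_\xv(\muv_c)\|\uv_c\|^2_{\Sigmav^{-1}}$ and a posterior-variance term $\mathbb{E}_\xv[\mathrm{Var}_{c\sim p_\xv}(v_c)]$, then control the variance under a well-separation assumption by conditioning on the high-probability event where the posterior concentrates on the generating component. The paper resolves exactly the worst-case-direction concern you flag via the inequality $\mathrm{Var}_{p_\xv}(\{v_c\})\le 2\sum_c p_\xv(\muv_c)(1-p_\xv(\muv_c))v_c^2$ combined with a uniform second-moment bound $\mathbb{E}_\xv[v_c^2]\le k_\star\|\uv_c\|^2_{\Sigmav^{-1}}$, and your exact symmetry computation $\mathbb{E}_\xv[p_\xv(\muv_c)]=1/C$ for the penalty is a mild simplification over the paper's lower bound $(1-\eta_\star)\beta_\star$.
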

\begin{proof}
Recall that $\Sigmav\in\mathbb{R}^{d\times d}$ is a symmetric positive definite. 
For $\xv\in\mathbb{R}^d$ and $\muv=(\muv_1,\ldots,\muv_C)\in(\mathbb{R}^d)^C$, define
\[
\ell(\xv;\muv) = \log\!\Big(\tfrac{1}{C}\sum_{i=1}^C 
\exp\!\big(-\tfrac{1}{2}\|\xv-\muv_i\|_{\Sigmav^{-1}}^2\big)\Big).
\]

Therefore, we have \(\mathcal{L}(\muv) = \mathbb{E}_{\xv}\big[\ell(\xv;\muv)\big]\).  The quadratic form of the Hessian of $\ell$ in direction $\mathbf{\Delta}=(\mathbf{\Delta}_1,\ldots,\mathbf{\Delta}_C)$ can be written as
\begin{equation}\label{eq:qform-sec}
\mathbf{\Delta}^\top \nabla^2 \ell(\xv;\muv)\,\mathbf{\Delta}
 =  -\sum_{i=1}^C p_{\xv}(\muv_i)\,\|\mathbf{\Delta}_i\|_{\Sigmav^{-1}}^2 
 +  \Var_{p_{\xv}}\!\Big(\{\langle \Sigmav^{-1}(\xv-\muv_i),\,\mathbf{\Delta}_i\rangle\}_{i = 1}^C\Big),
\end{equation}

Define
\[
\rho_{ij}^2 \coloneqq \|\muv_i-\muv_j\|_{\Sigmav^{-1}}^2,\quad 
\rho_\star \coloneqq \min_{i\neq j}\rho_{ij},\quad \Delta_{ij}(\xv) \coloneqq \|\xv-\muv^\star_j\|_{\Sigmav^{-1}}^2-\|\xv-\muv^\star_i\|_{\Sigmav^{-1}}^2.
\]
Then, we obtain that 
\[\Delta_{ij}(\xv) 
 =  \rho_{ij}^2+2z_{ij},\]
 where
$z_{ij}\sim\mathcal{N}(0,\rho_{ij}^2)$. 

Define the event 
\begin{equation}
    \Ec := \left\{\min_{j\neq i}\Delta_{ij}(\mathbf{x}) \ge \tfrac{1}{2}\rho_\star^2 \right\}.\label{def:C.13}
\end{equation}
and its complement as $\bar{\Ec}$.
Then, based on the Gaussian tail bound and taking a union bound over \(j\neq i\), we have
\begin{equation}\label{eq:good-prob-sec}
\Pr(\Ec) \ge 1-(C-1)e^{-\rho_\star^2/8} =: 1-\eta_\star.
\end{equation}

Then, under event \(\Ec\),
\(p_{\xv}(\muv_i)\) is lower bounded by
\begin{equation}\label{eq:ri-beta-sec}
p_{\xv}(\muv_i)  \ge  \frac{1}{1+\sum_{j\neq i}e^{-\tfrac{1}{2}\Delta_{ij}(\xv)}} 
 \ge  \frac{1}{1+(C-1)e^{-\rho_\star^2/4}} =: \beta_\star.
\end{equation}

Using \eqref{eq:good-prob-sec}–\eqref{eq:ri-beta-sec}, we obtain 
\begin{align}
\mathbb{E}_{\xv}\big[p_{\xv}(\muv_i)\big]
&= \mathbb{E}_{\xv}\big[p_{\xv}(\muv_i)|\Ec\big]\Pr(\Ec) + \mathbb{E}_{\xv}\big[p_{\xv}(\muv_i)|\bar{\Ec}\big]
\geq(1-\eta_\star)\beta_\star  > 0.\label{eq:Er-lb-sec}
\end{align}

Therefore, based on \Cref{eq:Er-lb-sec}, the expectation of the first term in \Cref{eq:qform-sec} can be upper bounded as
\begin{align}
    \mathbb{E}\Big[-\sum_{i=1}^C p_{\xv}(\muv_i)\,\|\mathbf{\Delta}_i\|_{\Sigmav^{-1}}^2 \Big]
    \leq -\sum_{i}(1-\eta_\star)\beta_\star\|\mathbf{\Delta}_i\|_{\Sigmav^{-1}}.\label{ineq:ineq-C.17}
\end{align}

Define
\begin{align}
    u_i =: \langle \Sigmav^{-1}(\xv-\muv_i),\,\mathbf{\Delta}_i\rangle.\label{ineq:C.17.1}
\end{align}
Then, the second term in \Cref{eq:qform-sec} can be expressed as
\[
\Var_{p_{\xv}}(\{u_i\}_{i=1}^C)
 =  \sum_{i=1}^C p_{\xv}(\muv_i)u_i^2  - 
\Big(\sum_{i=1}^C p_{\xv}(\muv_i)u_i\Big)^2   \ge 0.
\] 

Note that 
\begin{align}
    \Var_{p_{\xv}}(\{u_i\}_{i=1}^C)
 &=  \sum_{i=1}^C p_{\xv}(\muv_i)u_i^2  - 
\Big(\sum_{i=1}^C p_{\xv}(\muv_i)u_i\Big)^2  \nonumber\\
& = \Big(\sum_{i=1}^C p_{\xv}(\muv_i)u_i^2\Big)\Big(\sum_{i=1}^C p_{\xv}(\muv_i)\Big)  - 
\Big(\sum_{i=1}^C p_{\xv}(\muv_i)u_i\Big)^2\nonumber\\
& = \frac{1}{2}\Big(\sum_{i,j} p_{\xv}(\muv_i)p_{\xv}(\muv_j)(u_i^2+u_j^2)\Big)  - 
\Big(\sum_{i=1}^C p_{\xv}(\muv_i)u_i\Big)^2\nonumber\\
& = \sum_{i<j}p_\xv(\muv_i)p_{\xv}(\muv_j)(u_i-u_j)^2\nonumber\\
& \leq \sum_{i<j}p_\xv(\muv_i)p_{\xv}(\muv_j)2(u_i^2+u_j^2)\nonumber\\
& = 2\sum_{i}\sum_{j\neq i}p_\xv(\muv_i)p_{\xv}(\muv_j)u_i^2\nonumber\\
&=2\sum_{i}p_{\xv}(\muv_i)(1-p_{\xv}(\muv_i))u_i^2.\label{ineq:D.5}
\end{align}

Next, we condition \(\Var_{p_{\xv}}(\{u_i\}_{i=1}^C)\) on event \(\Ec\) and its complement $\bar{\Ec}$, respectively. 
When event \(\Ec\) holds, combining \Cref{eq:ri-beta-sec} and \Cref{ineq:D.5} gives
\begin{align*}
    \Var_{p_{\xv}}(\{u_i\}_{i=1}^C|\Ec) 
    &\leq
    2\sum_{i}p_{\xv}(\muv_i)(1-p_{\xv}(\muv_i))u_i^2\\
    &\leq
2    \sum_{i}(1-\beta_\star)u_i^2
\end{align*}
Under event $\bar{\Ec}$, we have 
\[\Var_{p_{\xv}}(\{u_i\}_{i=1}^C|\bar{\Ec}) \leq \frac{1}{2}\sum_{i}u_i^2.\]

Define
\[k_\star  =  \max_{1\le i\le C} 
\mathbb{E}_{\xv}\big[\|\xv-\muv_i\|_{\Sigmav^{-1}}^2\big].\]
Then, based on the definition of \(u_i\) in \Cref{ineq:C.17.1}, by using spectral norm inequality, we have
\[
\mathbb{E}_{\xv}\big[u_i^2\big] = \mathbf{\Delta}_i^\top \Sigmav^{-1}\,\mathbb{E}\big[(\xv-\muv_i )(\xv-\muv_i)^\top\big]\,\Sigmav^{-1}\,\mathbf{\Delta}_i
 \le k_\star\,\|\mathbf{\Delta}_i\|_{\Sigmav^{-1}}^2.
\]
Taking expectation over $\xv$, we obtain
\begin{align}
\mathbb{E}_{\xv}\!\big[\Var_{p_\xv}(\{u_i\}_{i=1}^C)\big]
&\le \Big(2(1-\beta_\star)+\tfrac{\eta_\star}{2}\Big)\,k_\star\sum_{i}\|\mathbf{\Delta}_i\|_{\Sigmav^{-1}}^2,\label{ineq:ineq-C.19}
\end{align}
Plugging \Cref{ineq:ineq-C.17} and \Cref{ineq:ineq-C.19} into the expectation of\Cref{eq:qform-sec}, 
we have
\[
\mathbb{E}_{\xv}\left[\mathbf{\Delta}^\top \nabla^2 \mathcal{L}(\muv)\,\mathbf{\Delta}\right]
 \le  -\Big( (1-\eta_\star)\beta_\star - 
\big(2(1-\beta_\star)+\tfrac{\eta_\star}{2}\big)\,k_\star\Big)\,
\sum_i\|\mathbf{\Delta}_i\|_{\Sigmav^{-1}}^2.
\]
Recall that 
\[
\eta_\star = (C-1)e^{-\rho_\star^2/8}, \quad \beta_\star = \frac{1}{1+(C-1)e^{-\rho_\star^2/4}},
\]
and \(\rho_\star\) is defined as

\[
\rho_{ij}^2 \coloneqq \|\muv_i-\muv_j\|_{\Sigmav^{-1}}^2,\quad 
\rho_\star \coloneqq \min_{i\neq j}\rho_{ij}.
\] 

Then, for sufficiently large $\rho_\star$ such that 
\[ (1-\eta_\star)\beta_\star - 
\big(2(1-\beta_\star)+\tfrac{\eta_\star}{2}\big)\,k_\star >0,\]
 \(\nabla^2 \mathcal{L}(\muv)\) is negative definite .

\end{proof}

In the following, we characterize $\nabla_{\hat{\muv}_i^{(t)}}\hat{\mathcal{L}}$ and compare it with \(\nabla\mathcal{L}\), i.e., the gradient if GD is performed on the population loss. We have the following lemma.

\begin{lemma}[Properties of the CoT gradient descent]\label{lemma:grad-bounds}
Fix an epoch \(t\) and a component index \(i\in[C]\), there exist constants \(c_1,c_2>0\) such that, for every \(M\ge1\),
\begin{align*}
\Pr\Bigl(
      \bigl\|
        \nabla_{\hat{\muv}_i^{(t)}}\hat{\Lc}-
        \nabla_{\hat{\muv}_i^{(t)}}\Lc
      \bigr\|
      \le
      c_1\,M^{-1/4}\Bigr)\geq 1-\exp(-\sqrt{M}),
\end{align*}
and
\begin{align*}
    \Pr\Bigl(\bigl\|
        \nabla_{\hat{\muv}_i^{(t)}}\hat{\Lc}\bigr\|^{2}\le
      c_2+c_3M^{-1/2}\Bigr)\geq 1-\exp(-\sqrt{M}).
\end{align*}

\end{lemma}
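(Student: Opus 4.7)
The plan is to express the CoT empirical gradient as a leading-order i.i.d.\ average that concentrates around $\nabla_{\hat{\muv}_i^{(t)}}\Lc$, plus a history-induced residual that is suppressed by the temperature parameter $\beta$. First I would decompose $p_{ij}^{(t)}$ from \Cref{def:p-estimation} into its current-step softmax $\tilde p_{ij}^{(t)}$ (as in \Cref{eqn:p_approx}) and a remainder capturing the contributions from $\tau<t$. Each $\tau<t$ term in the numerator and denominator of $p_{ij}^{(t)}$ carries an extra factor $e^{\beta(\tau-t)}$ relative to the current-step term, so $|p_{ij}^{(t)}-\tilde p_{ij}^{(t)}|\le C t\,e^{-\beta}$ uniformly in $j$. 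Since the transformer construction in \Cref{thm:thm1} permits $\beta$ to be any positive constant, this remainder contributes $O(t e^{-\beta})$ to the target norm, negligible compared to $M^{-1/4}$.

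Next I would concentrate the leading term $S_M:=\tfrac{1}{M}\sum_j \tilde p_{ij}^{(t)}(\hat{\muv}_i^{(t)}-\xv_j)$ around its mean. Since $\Sigmav=\sigma^{2}\Iv$ is isotropic, differentiating \Cref{def:pop-loss-ap} gives $\nabla_{\hat{\muv}_i^{(t)}}\Lc=-\sigma^{-2}\mathbb{E}_{\xv}[\tilde p_{\xv}(\hat{\muv}_i^{(t)})(\hat{\muv}_i^{(t)}-\xv)]$, so up to the benign scalar $\sigma^{-2}$ absorbed into the definition of $\hat{\Lc}$, the empirical average $S_M$ and $\nabla\Lc$ share the same mean. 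Conditioning on the high-probability event (from Step 1 of the Theorem 2 proof sketch, inductively extended across CoT steps) that all prior estimates $\{\hat{\muv}_c^{(\tau)}\}_{\tau\le t}$ lie in a small neighborhood of $\Mv$, the summands are (approximately) i.i.d.\ in $j$: each is a bounded weight $\tilde p_{ij}^{(t)}\in[0,1]$ times a sub-Gaussian vector $(\hat{\muv}_i^{(t)}-\xv_j)$, hence sub-exponential with constant-order parameter. The vector Bernstein inequality then yields
\[
\Pr\bigl(\|S_M-\mathbb{E}\,S_M\|>\epsilon\bigr)\le 2d\exp(-c M\epsilon^2);
\]
choosing $\epsilon=c_1 M^{-1/4}$ gives failure probability $\exp(-\sqrt{M})$, matching the statement.

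The second inequality follows from the first by the triangle inequality:
\[
\|\nabla_{\hat{\muv}_i^{(t)}}\hat{\Lc}\|^{2}
\le 2\|\nabla_{\hat{\muv}_i^{(t)}}\Lc\|^{2}+2\|\nabla_{\hat{\muv}_i^{(t)}}\hat{\Lc}-\nabla_{\hat{\muv}_i^{(t)}}\Lc\|^{2}
\le 2\bar{G}^{2}+2c_1^{2}M^{-1/2},
\]
where $\bar{G}:=\sup_{\muv\in\mathcal{B}}\|\nabla_{\muv}\Lc\|$ is finite on any bounded neighborhood $\mathcal{B}$ of $\Mv$ since $\|\nabla_{\muv_i}\Lc\|\le\mathbb{E}_{\xv}\|\Sigmav^{-1}(\muv_i-\xv)\|<\infty$; I would then set $c_2:=2\bar{G}^{2}$ and $c_3:=2c_1^{2}$.

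The main obstacle is the in-sample dependence: $p_{ij}^{(t)}$ depends on $\xv_j$ not only directly but also through every historical CoT estimate $\hat{\muv}_c^{(\tau)}$, which is itself a function of the entire unlabeled batch, so the summands are not literally i.i.d.\ To make the Bernstein step rigorous I would pair a leave-one-out argument—quantifying that replacing $\hat{\muv}_c^{(t)}$ by its leave-$j$-out counterpart perturbs each summand by at most $O(1/M)$—with the inductive high-probability event that $\{\hat{\muv}_c^{(\tau)}\}_{\tau\le t}$ all stay within a small neighborhood of $\Mv$. Under this event and perturbation the summands become exchangeable with vanishing within-batch coupling, and the $O(1/M)$ leave-one-out slack is easily absorbed into the $M^{-1/4}$ error budget, completing the argument.
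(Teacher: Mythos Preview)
Your proposal for the first inequality is essentially the paper's argument: reduce $p_{ij}^{(t)}$ to the current-step softmax (the paper simply sends $\beta\to\infty$, you keep $\beta$ finite and bound the residual by $O(te^{-\beta})$, which is the same idea with a bit more care), then argue the summands are sub-Gaussian/sub-exponential and apply Bernstein with $\delta=\exp(-\sqrt{M})$ so that $\epsilon\asymp M^{-1/4}$.

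For the second inequality the two routes genuinely differ. You deduce it from the first via $\|\nabla\hat{\Lc}\|^2\le 2\|\nabla\Lc\|^2+2\|\nabla\hat{\Lc}-\nabla\Lc\|^2$ together with a uniform bound $\bar G$ on the population gradient over a neighborhood $\mathcal{B}\ni\hat{\muv}_i^{(t)}$. The paper instead bounds $\|\nabla\hat{\Lc}\|$ directly and crudely: it uses $p_{ij}^{(t)}\le 1$ to get $\|\nabla\hat{\Lc}\|\le\|\hat{\muv}_i^{(t)}\|+\frac{1}{M}\sum_j\|\xv_j\|$, unrolls the CoT recursion so that $\|\hat{\muv}_i^{(t)}\|\le\|\hat{\muv}_i^{(1)}\|+\frac{t-1}{M}\sum_j\|\xv_j\|$, and then applies Bernstein only to $\frac{1}{M}\sum_j\|\xv_j\|$. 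Your route is shorter and cleaner but imports the extra hypothesis that $\hat{\muv}_i^{(t)}\in\mathcal{B}$ (supplied by the outer induction in the Theorem~2 proof); the paper's route is self-contained but its ``constants'' $c_2,c_3$ end up depending on $T$, $\|\hat{\muv}_i^{(1)}\|$, and $\{\muv_i\}$.

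Your flagged ``main obstacle''—that $\hat{\muv}_i^{(t)}$ depends on the entire unlabeled batch, so conditioning on it destroys the i.i.d.\ structure of the summands—is real, and the paper's proof does \emph{not} address it: it simply conditions on $\hat{\muv}_i^{(t)}$ and asserts sub-Gaussianity of each summand as if the $\xv_j$ were still independent. Your leave-one-out patch (perturbing each summand by $O(1/M)$ when replacing $\hat{\muv}_c^{(t)}$ with its leave-$j$-out version) is the standard remedy and is strictly more rigorous than what the paper does here.
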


\begin{proof}
Recall that 
\begin{align*}
\nabla_{\hat{\muv}_i^{(t)}}\hat{\mathcal{L}}=\frac{1}{M}\sum_{j=N+1}^{N+M}p_{ij}^{(t)}\bigl(\hat{\muv}_i^{(t)} - \xv_j\bigr),
\end{align*}
where \(\hat{p}_{ij}^{(k,t)}\) is given by
\begin{align*}
    p_{ij}^{(t)}
=\frac{\sum_{\tau=0}^{t}
             \exp\Bigl(-\tfrac12\|\hat{\muv}_i^{(\tau)} - \xv_j\|^2_{\Sigmav^{-1}}+\beta \tau\Bigr)}{\sum_{\tau=0}^{t}\sum_{c=1}^{C}
             \exp\Bigl(-\tfrac12\|\hat{\muv}_c^{(\tau)} - \xv_j\|^2_{\Sigmav^{-1}}+\beta \tau\Bigr)}.
\end{align*}

By choosing \(\beta\rightarrow \infty\), we further have
\begin{align*}
    p_{ij}^{(t)}
=\frac{\exp\Bigl(-\tfrac12\|\hat{\muv}_i^{(\tau)} - \xv_j\|^2_{\Sigmav^{-1}}\Bigr)}{\sum_{c=1}^{C}
             \exp\Bigl(-\tfrac12\|\hat{\muv}_c^{(\tau)} - \xv_j\|^2_{\Sigmav^{-1}}\Bigr)}.
\end{align*}
where the samples \(\{\xv_j\}_{j \ge N+1}\) are drawn from a Gaussian mixture distribution.

Therefore, given \(\hat{\muv}^{(t)}_{ij}\), the random variable \(p_{ij}^{(t)}\bigl(\hat{\muv}_i^{(t)} - \xv_j)\) admits a sub-Gaussian tail bound since \(\xv_j\) are Guassian random vectors and \(p_{ij}^{(t)}\bigl(\hat{\muv}_i^{(t)} - \xv_j)\) is Lipschitz continuous over \(\xv_j\). 

Then, by the Bernstein's inequality, for any fixed \(\delta\in (0,1)\), with probability at least \(1 - \delta\), we have
\begin{align*}
    \Bigl\|\nabla_{\hat{\muv}_i^{(t)}} \hat{\Lc}-\nabla_{\hat{\muv}_i^{(t)}} \Lc
\Bigr\| 
&=
   \left\|\frac{1}{M}\sum_{j=N+1}^{N+M}p_{ij}^{(t)}\bigl(\hat{\muv}_i^{(t)} - \xv_j\bigr)-\Eb_{\xv_j}\left[p_{ij}^{(t)}\bigl(\hat{\muv}_i^{(t)}-\xv_j)\right]
\right\| \\
&\leq \frac{c_4}{\sqrt{M}}\,
\sqrt{\log\Bigl(\tfrac{2}{\delta}\Bigr)},
\end{align*}
where \(c_4>0\) is some absolute constant. 

By choosing \(\delta = \exp(-\sqrt{M})\), we obtain that with probability at least \(1-\exp(-\sqrt{M})\),
\begin{align}
    \Bigl\|
\nabla_{\hat{\muv}_i^{(t)}} \hat{\Lc}
 - 
\nabla_{\hat{\muv}_i^{(t)}} \Lc
\Bigr\|
 \le 
c'\,M^{-\tfrac{1}{4}}.\label{ineq:C.22}
\end{align}
for another constant \(c'>0\). This completes the proof of the first inequality.

Next, we show that \(\|\nabla_{\hat{\muv}_i^{(t)}}\hat{\Lc}\|\) itself is bounded with high probability. 

Consequently, 
\begin{align}
\bigl\|\nabla_{\hat{\muv}_i^{(t)}} \hat{\Lc}\bigr\|
&=
\left\|\frac{1}{M}\sum_{j=N+1}^{N+M}p_{ij}^{(t)}\bigl(\hat{\muv}_i^{(t)}-\xv_j)\right\|\nonumber\\
&\leq
\frac{1}{M}\sum_{j=N+1}^{N+M}\left\|p_{ij}^{(t)}\bigl(\hat{\muv}_i^{(t)}-\xv_j)\right\|\nonumber\\
    &\overset{(a)}{\leq}
\frac{1}{M} \sum_{j \ge N+1}
\Bigl\|
\hat{\muv}_i^{(t)} - \xv_j
\Bigr\|\nonumber\\
&\leq
\frac{1}{M} \sum_{j \ge N+1}
\bigl(\|\hat{\muv}_i^{(t)}\| + \|\xv_j\|\bigr)\nonumber\\
&=
\|\hat{\muv}_i^{(t)}\| +
\frac{1}{M} \sum_{j \ge N+1} \|\xv_j\|,\label{ineq:B.8}
\end{align}
where inequality \((a)\) holds since \(p_{ij}^{(t)}\leq 1\). Note that 
\begin{align*}
    \hat{\muv}_i^{(t)}
&= \hat{\muv}_i^{(t-1)}
- \frac{\eta^{(t-1)}}{M}
  \sum_{j=N+1}^{N+M}
    p_{ij}^{(t-1)}\,\bigl(\hat{\muv}_i^{(t-1)} - \xv_j\bigr)\\
    &=
    \left(1-\frac{\eta^{(t-1)}}{M}\right)\hat{\muv}_i^{(t-1)}+\frac{\eta^{(t-1)}}{M}\sum_{j=N+1}^{N+M}p_{i,j}^{(t-1)}\xv_j.
\end{align*}
Therefore, we have
\begin{align*}
    \|\hat{\muv}_i^{(t)}\|
    &\leq
    \|\hat{\muv}_i^{(t-1)}\|+\frac{1}{M}\sum_{j\geq N+1}\|\xv_j\|\\
    &\leq
    \|\hat{\muv}_i^{(1)}\|+\frac{t-1}{M}\sum_{j\geq N+1}\|\xv_j\|
\end{align*}
Combining with \Cref{ineq:B.8}, we have
\begin{align*}
    \bigl\|\nabla_{\hat{\muv}_i^{(t)}} \hat{\Lc}\bigr\|\leq \|\hat{\muv}_i^{(1)}\|+\frac{t}{M}\sum_{j\geq N+1}\|\xv_j\|.
\end{align*}
Applying the Bernstein's inequality, with probability at least \(1-\exp(-\sqrt{M})\), we have
\[
\frac{1}{M}\sum_{j \ge N+1}\|\xv_j\|
\le
\frac{1}{C}\sum_{i=1}^C\muv_i + c_5M^{-\tfrac{1}{4}},
\]
where \(c_5\) is a positive constant.

Therefore, for any \(t\leq T\) where \(T\) is total number of CoT steps, we have
\[
\|\nabla_{\hat{\muv}_i^{(t)}} \hat{\Lc}\|\leq
\|\hat{\muv}_i^{(1)}\|+\frac{T}{C}\sum_{i=1}^C \muv_i+c_5tM^{-\frac{1}{4}},
\]
which implies
\[
\|\nabla_{\hat{\muv}_i^{(t)}} \hat{\Lc}\|^2\leq c_2+ c_3M^{-\tfrac{1}{2}},
\]
where \(c_2\) and \(c_3\) are positive constants depends on \(T\), \(\|\hat{\muv}_i^{(1)}\|\) and \(\frac{T}{C}\sum_{i=1}^C \muv_i\).  The proof is thus complete.
\end{proof}

\textbf{Step 3: Finally, we show the convergence of the class mean estimation error. } 

Expanding the squared error \(\|\hat{\muv}_i^{(t+1)}-\muv_i\|^2\) gives
\begin{align}
    \|\hat{\muv}_i^{(t+1)}-\muv_i\|^2
    =& \|\hat{\muv}_i^{(t)}-\muv_i\|^2
      + 2\,\eta^{(t)}\Bigl\langle\hat{\muv}_i^{(t)}-\muv_i,\,
          \nabla_{\hat{\muv}_i^{(t)}}\hat{\mathcal{L}}\Bigr\rangle
      + (\eta^{(t)})^2\,\bigl\|\nabla_{\hat{\muv}_i^{(t)}}\hat{\mathcal{L}}\bigr\|^2
    \nonumber\\
    \le&
    \|\hat{\muv}_i^{(t)}-\muv_i\|^2
    + 2\,\eta^{(t)}\Bigl\langle\hat{\muv}_i^{(t)}-\muv_i,\,
          \nabla_{\hat{\muv}_i^{(t)}}\mathcal{L}\Bigr\rangle
    + 2\,\eta^{(t)}\bigl\|\nabla\mathcal{L}-\nabla\hat{\mathcal{L}}\bigr\|\nonumber\\
    &+ (\eta^{(t)})^2\,\bigl\|\nabla_{\hat{\muv}_i^{(t)}}\hat{\mathcal{L}}\bigr\|^2.
    \label{ineq:A.5-1}
\end{align}

 Denote \(\hat{\muv}^{(t)}\) and \(\muv\) as the vectors obtained by stacking \(\{\hat{\muv}_i^{(t)}\}_{i=1}^C\) and \(\{\muv_i\}_{i=1}^C\), respectively. Therefore, we have
\[
    \|\hat{\muv}^{(t+1)}-\muv\|^2
     \le 
    \|\hat{\muv}^{(t)}-\muv\|^2
    + 2\,\eta^{(t)}\bigl\langle\hat{\muv}^{(t)}-\muv,\,
        \nabla_{\hat{\muv}^{(t)}}\mathcal{L}\bigr\rangle
    + 2\,\eta^{(t)}\bigl\|\nabla\mathcal{L}-\nabla\hat{\mathcal{L}}\bigr\|
    + (\eta^{(t)})^2\bigl\|\nabla_{\hat{\muv}^{(t)}}\hat{\mathcal{L}}\bigr\|^2.
\]

To control the inner product term \(\bigl\langle\hat{\muv}^{(t)}-\muv,\,   \nabla_{\hat{\muv}^{(t)}}\mathcal{L}\bigr\rangle\), we perform a first‐order Taylor expansion of \(\nabla_{\hat{\muv}^{(t)}}\mathcal{L}\) around \(\muv\) as
\begin{align*}
    \nabla_{\hat{\muv}^{(t)}}\mathcal{L}
    &=
    \nabla_{\muv}\mathcal{L}
    + (\nabla^2_{\muv}\mathcal{L})\,(\hat{\muv}^{(t)}-\muv)
    + \Rv(\hat{\muv}^{(t)},\muv)\\
    &\overset{(a)}{=} 
    (\nabla^2_{\muv}\mathcal{L})\,(\hat{\muv}^{(t)}-\muv)
    + \Rv(\hat{\muv}^{(t)},\muv),
\end{align*}
where equality \((a)\) holds since \(\muv\) is the global minimizer of \(\Lc\) and \(\Lc\) is differentiable on \(\Rb^d\), thus \(\nabla_{\muv}\mathcal{L}=0\), and \(\Rv(\hat{\muv}^{(t)},\muv)\) is the remainder term.

For the remainder term, we have
\begin{align*}
    &\langle\Rv(\hat{\muv}^{(t)}, \muv),\hat{\muv}^{(t)}-\muv\rangle \\
    &=
    \int_{0}^1 (\hat{\muv}^{(t)}-\muv)^\top\Big(\nabla^2_{\muv+\xi(\hat{\muv}^{(t)}-\muv)}\Lc-\nabla_{\muv}^2\Lc\Big)(\hat{\muv}^{(t)}-\muv)\mathrm{d}\xi\\
    &\leq
    \int_{0}^1 \left\|\nabla^2_{\muv+\xi(\hat{\muv}^{(t)}-\muv)}\Lc-\nabla_{\muv}^2\Lc\right\|
    \|\hat{\muv}^{(t)}-\muv\|^2\mathrm{d}\xi\\
    &\overset{(b)}{\leq}
    \int_{0}^1 L\xi
    \|\hat{\muv}^{(t)}-\muv\|^3\mathrm{d}\xi= L\|\hat{\muv}^{(t)}-\muv\|^3 ,
\end{align*}
where Inequality \((b)\) follows from the fact that \(\nabla^2\Lc\) is twice continuously differentiable, its Jacobian is Lipchitz continuous in a neighborhood of \(\muv\), and \(L\) is the Lipchitz constant. 

Therefore, there exists a constant \(\lambda >0\) such that
\begin{align}
     &\|\hat{\muv}^{(t)}-\muv\|^2 +\Big\langle\hat{\muv}^{(t)}-\muv,2\eta^{(t)}\nabla_{\muv}\Lc^{(t)}\Big\rangle\nonumber\\
     &\leq
     \|\hat{\muv}^{(t)}-\muv\|^2+2\eta^{(t)}(\hat{\muv}^{(t)}-\muv)^\top\nabla^2_{\muv}\Lc(\hat{\muv}^{(t)}-\muv)+2\eta^{(t)}L\|\hat{\muv}^{(t)}-\muv\|^3\nonumber\\
     &\overset{(c)}{\leq}
     \Big(1-2\eta^{(t)}\lambda\Big)\|\hat{\muv}^{(t)}-\muv\|^2+2\eta^{(t)}L\|\hat{\muv}^{(t)}-\muv\|^3,\label{ineq:A.5}
\end{align}
where Inequality \((c)\) follows from \Cref{lemma:ND-Hessian-new} which proves \(\nabla^2_{\muv}\Lc\) is negative definite. 

Meanwhile, Lemma~\ref{lemma:grad-bounds} ensures with probability at least \(1-\exp(-\sqrt{M})\), 
\begin{align}
    \eta^{(t)}\,\bigl\|\nabla\mathcal{L}-\nabla\hat{\mathcal{L}}\bigr\|
    &\le c_1\,\eta^{(t)}\,M^{-\tfrac14},
    \label{ineq:A.7}\\
    (\eta^{(t)})^2\,\bigl\|\nabla_{\hat{\muv}^{(t)}}\hat{\mathcal{L}}\bigr\|^2
    &\le c_2\,(\eta^{(t)})^2\,{M^{-\tfrac12}}
      + c_3\,(\eta^{(t)})^2.
    \label{ineq:A.8}
\end{align}
Substituting \eqref{ineq:A.5}, \eqref{ineq:A.7}, and \eqref{ineq:A.8} into \eqref{ineq:A.5-1} then yields the one‐step error recursion
\begin{align}
    \|\hat{\muv}^{(t+1)}-\muv\|^2 
    &\leq
    \Big(1-2\eta^{(t)}\lambda\Big)\|\hat{\muv}^{(t)}-\muv\|^2+2\eta^{(t)}L\|\hat{\muv}^{(t)}-\muv\|^3\nonumber\\
    &\quad+c_1\eta^{(t)}M^{-\frac{1}{4}}+c_2(\eta^{(t)})^2M^{-\frac{1}{2}}+c_3(\eta^{(t)})^2.\label{ineq:A.9}
\end{align}

Next, we aim prove \(\|\hat{\muv}^{(t)}-\muv\|^2\leq K/(t+T') \) for a positive constant \(K\) by induction. 

For some \(p \geq 4\), let 
\begin{align}
    &\eta^{(t)} =\frac{\alpha}{t+T'},\nonumber\\
    &M^{(t)} =(t+T')^p.\label{equ:C.30}
\end{align}
First, assume \(\|\hat{\muv}^{(t)}-\muv\|^2 \le K/(t+T')\) for a fixed \(t\geq 1\). From \Cref{ineq:A.9}, we note that there exists a constant \(c_4>0\) such that
\begin{align*}
    \|\hat{\muv}^{(t+1)}-\muv\|^2
    &\leq
    \left(1-2\frac{\alpha\lambda}{t+T'}\right)\|\hat{\muv}^{(t)}-\muv\|^2+c_3\frac{\alpha^2}{(t+T')^2}
    +
    2\frac{\alpha L}{t+T'}\|\hat{\muv}^{(t)}-\muv\|^3+c_4\frac{\alpha}{t+T'} t^{-\frac{p}{4}}\\
    &\leq
    \left(1-2\frac{\alpha\lambda}{t+T'}\right)\frac{K}{t+T'}
    +
    2\frac{\alpha L}{t+T'}\left(\frac{K}{t+T'}\right)^{\frac{3}{2}}+c_4\alpha (t+T')^{-(1+\frac{p}{4})}+c_3\frac{\alpha^2}{(t+T')^2}.
\end{align*}
Therefore, we have
\begin{align}
    &\|\hat{\muv}^{(t+1)}-\muv\|^2-\frac{K}{t+T'+1}\nonumber\\
    &\leq
    \left(1-2\frac{\alpha\lambda}{t+T'}\right)\frac{K}{t+T'}
    +
    2\frac{\alpha L}{t+T'}\left(\frac{K}{t+T'}\right)^{\frac{3}{2}}+c_4\alpha (t+T')^{-(1+\frac{p}{4})}+c_3\frac{\alpha^2}{(t+T')^2}-\frac{K}{t+T'} + \frac{K}{(t+T')^2}\nonumber\\
    &=
    \left(-2\alpha\lambda+1\right)\frac{K}{(t+T')^2}+2\frac{\alpha L}{t+T'}\left(\frac{K}{t+T'}\right)^{\frac{3}{2}}+c_4\alpha (t+T')^{-(1+\frac{p}{4})}+c_3\frac{\alpha^2}{(t+T')^2}.\label{equ:B.14}
\end{align}
By choosing \(\alpha\geq1/\lambda\), \(K\geq{\max\{3c_3\alpha^2, 3c_4\alpha\}}\) and \(T'\geq 36\alpha^2L^2K\), we have
\begin{equation}\label{equ:B.15}
\begin{aligned}
  &(-2\alpha\lambda+1)\frac{K}{(t+T')^2}\le -\frac{K}{(t+T')^2},\\
  &2\frac{\alpha L}{t+T'}\Bigl(\frac{K}{t+T'}\Bigr)^{\frac32}\le \frac{K}{3(t+T')^2},\\
  &c_4\alpha (t+T')^{-\bigl(1+\tfrac p4\bigr)}\le \frac{K}{3(t+T')^2},\\
  &c_3\frac{\alpha^2}{(t+T')^2}\le \frac{K}{3(t+T')^2}.
\end{aligned}
\end{equation}
Therefore, by substituting \Cref{equ:B.15} into \Cref{equ:B.14}, we have 
\begin{align*}
    \|\hat{\muv}^{(t+1)}-\muv\|^2-\frac{K}{t+T'+1}\leq0,
\end{align*}
which implies 
\begin{align*}
\|\hat{\muv}^{(t+1)}-\muv\|^2
&\leq \frac{K}{t+T'+1}\\
&=\frac{K+1}{t+T'+1}\cdot\frac{K}{K+1}\\
&\leq\frac{K+1}{t+T'+1}\cdot\frac{t+T'+1}{t+T'+\frac{T'}{K}}\\
&=\frac{K+1}{t+T'+\frac{T'}{K}},\quad \forall t\geq 0 .
\end{align*}

Recall \Cref{lem:class-mean-conc} indicates that, with probability at least \(1-\exp(-cNK/T')\), for some constant \(c\), it holds that \(\|\muv^{(0)}-\muv\|\leq K/T'\).
Therefore, for any fixed \(\epsilon\in[0,1)\), let
\begin{align}
    &N = \frac{T'\log(1/\epsilon)}{cK}\geq\frac{36\alpha^2L^2}{c}\log(1/\epsilon),\label{equ:C.32}\\
    &M=(t+T')^4\geq\max\{36\alpha^2L^2K, \, \log^2(1/\epsilon)\}.\nonumber
\end{align}

Then, with probability at least \(1-\epsilon - e^{-\sqrt{M}}\), where the \(e^{-\sqrt{M}}\) term arises from the condition required for \Cref{ineq:C.22} to hold, the estimation error is upper bounded by
\begin{align}
    \|\hat{\muv}^{(t+1)}-\muv\|^2
    &\leq
    \frac{K+1}{t+T'+\frac{T'}{K}}\nonumber\\
    &\leq
    \frac{K+1}{\sqrt[4]{M} + \frac{Nc}{\log(1/\epsilon)}}.\label{ineq:ineq-C.28}
\end{align}
For sufficiently small \(\epsilon\) such that \(\log(1/\epsilon)\geq c\), denoting \(c' = \frac{K+1}{c}\), we obtain
\begin{align*}
    \frac{K+1}{\sqrt[4]{M} + \frac{Nc}{\log(1/\epsilon)}}
    &\leq\frac{(K+1)\log(1/\epsilon)}{\min\{\log(1/\epsilon,\, c)(\sqrt[4]{M}+N)\}}\\
    &\leq c'\frac{\log(1/\epsilon)}{\sqrt[4]{M}+N}.
\end{align*}

Since we let \(\sqrt{M} = (T'+t)^2 \geq \log(1/\epsilon)\), we conclude that there exist a constant \(c''\) such that with probability \(1-\epsilon\), the estimation error is upper bounded by
\begin{align*}
    \|\hat{\muv}^{(t)}-\muv\|^2\leq c''\frac{\log(1/\epsilon)}{N+\sqrt[4]{M}}.
\end{align*}
This completes the proof of \Cref{thm:thm2}.

\subsection{Proof of \Cref{cor:semi-label-error}}
First, we restate the corollary below.
\begin{corollary}[Restatement of \Cref{cor:semi-label-error}]
  Let $\hat{\yv}_j$ be the predicted label for $\xv_j$ according to \Cref{eqn:label}. Let $\Rc^*$ be the prediction error under the Bayes-optimal classifier with known class mean vectors $\muv_1,\cdots ,\muv_C$. Then, under the same conditions as described in \Cref{thm:thm2}, we have 
    \begin{align*}
        \Pb[\hat{\yv}_j\neq \yv|\muv_1,\cdots ,\muv_C]-\Rc^*\leq \Oc(1/\sqrt{N+\mathrm{poly}(M)}).
    \end{align*}
\end{corollary}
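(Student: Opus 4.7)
The plan is to combine the parameter-estimation bound of \Cref{thm:thm2} with a Lipschitz-type bound on the misclassification risk of a nearest-mean classifier as a function of the centers. The predictor $\hat{\yv}_j$ in \Cref{eqn:label} is exactly the nearest-mean classifier induced by the estimated centers $\{\hat{\muv}_i^{(T)}\}_{i=1}^C$. Since $\Sigmav$ is isotropic and $\yv_j \sim \mbox{Uniform}(\Yc)$ in \Cref{def:data-gen}, the Bayes-optimal classifier with known $\Mv$ also reduces to the nearest-mean rule applied at the true centers, so $\Rc^* = R(\Mv)$, where I write $R(\Mv') := \Pb[h_{\Mv'}(\xv) \neq \yv \mid \Mv]$ and $h_{\Mv'}(\xv) := \ev_{\arg\min_i \|\xv - \muv'_i\|}$. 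The excess risk is therefore $R(\hat{\Mv}^{(T)}) - R(\Mv)$.

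Next, I would invoke the standard plug-in comparison. Writing $\eta_i(\xv;\Mv') := \Pb(\yv = \ev_i \mid \xv; \Mv')$ for the softmax Gaussian posterior, the fact that $h_{\Mv'}$ maximises $\eta_i(\xv;\Mv')$ over $i$ yields, by a short telescoping argument,
\begin{align*}
R(\Mv') - R(\Mv) \;\le\; 2\,\mathbb{E}_{\xv}\bigl[\max_{i\in[C]}\bigl|\eta_i(\xv;\Mv') - \eta_i(\xv;\Mv)\bigr|\bigr].
\end{align*}
These posteriors are smooth in their parameter: a direct computation gives $\partial_{\muv'_i}\eta_i = \sigma^{-2}\eta_i(1-\eta_i)(\xv - \muv'_i)$ and $\partial_{\muv'_j}\eta_i = -\sigma^{-2}\eta_i\eta_j(\xv - \muv'_j)$ for $j\neq i$, which combined with the mean-value theorem yields $\max_i|\eta_i(\xv;\Mv') - \eta_i(\xv;\Mv)| \le \sigma^{-2}\bigl(\max_c \|\xv - \muv_c\| + \|\Mv'-\Mv\|_F\bigr)\,\|\Mv' - \Mv\|_F$ for every $\xv$. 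Taking expectation against the Gaussian-mixture marginal of $\xv$, which has bounded moments under \Cref{def:data-gen}, this gives
\begin{align*}
\mathbb{E}_{\xv}\bigl[\max_i |\eta_i(\xv;\Mv') - \eta_i(\xv;\Mv)|\bigr] \;\le\; L\,\|\Mv' - \Mv\|_F
\end{align*}
for all $\Mv'$ in a neighbourhood of $\Mv$, with $L$ depending only on $\sigma^2$, $C$, $d$, and $\Mv$.

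Finally, taking a square root in \Cref{thm:thm2} yields $\|\hat{\Mv}^{(T)} - \Mv\|_F \le C\sqrt{\log(1/\epsilon)/(N + \sqrt[4]{M})}$ with probability at least $1-\epsilon$, and plugging this into the Lipschitz bound above gives
\begin{align*}
\Pb[\hat{\yv}_j \neq \yv \mid \muv_1,\ldots,\muv_C] - \Rc^* \;\le\; 2L\,\|\hat{\Mv}^{(T)} - \Mv\|_F \;=\; \Oc\!\left(\tfrac{1}{\sqrt{N + \sqrt[4]{M}}}\right),
\end{align*}
which is the claim with $\mathrm{poly}(M) = \sqrt[4]{M}$. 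The main technical obstacle I anticipate is making the Lipschitz constant $L$ genuinely uniform in a neighbourhood of $\Mv$, so that it does not secretly depend on $\hat{\Mv}$; concretely this reduces to controlling $\sup_{\Mv'}\,\mathbb{E}_{\xv}[\eta_i(\xv;\Mv')(1-\eta_i(\xv;\Mv'))\,\|\xv - \muv'_i\|]$ for $\Mv'$ close to $\Mv$, which follows from the sub-Gaussian tails of $\xv$ but has to be handled carefully in the low-separation regime in order to keep the constant clean. A sharper analysis exploiting the fact that the integrand vanishes to first order near the Bayes decision boundary would in fact upgrade the excess risk to $\Oc(1/(N+\sqrt[4]{M}))$, but the weaker $\Oc(1/\sqrt{N+\sqrt[4]{M}})$ rate above already suffices for the corollary and is all that needs to be proved.
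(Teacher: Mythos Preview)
Your proposal is correct and reaches the claimed $\Oc(1/\sqrt{N+\sqrt[4]{M}})$ bound, but the route is genuinely different from the paper's. The paper does not go through the posteriors $\eta_i(\xv;\Mv')$ at all: instead it upper bounds the excess risk by the disagreement probability $\Pb[\hat g(\xv)\neq g(\xv)]$ between the two nearest-mean rules, and then argues \emph{geometrically} that the disagreement set is contained in a dihedral cone of aperture $\theta$ with $\tan\theta=\|\hat\Mv-\Mv\|_F/\zeta$, where $\zeta$ is the separation $\|\muv_i-\muv_k\|$; the Gaussian mass of that cone is $O(\theta)$, and combining with \Cref{thm:thm2} closes the argument. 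Your approach instead uses the plug-in comparison $R(\Mv')-R(\Mv)\le 2\,\Eb_\xv[\max_i|\eta_i(\xv;\Mv')-\eta_i(\xv;\Mv)|]$ and Lipschitzness of the softmax posterior in its centers. Both yield the same $O(\|\hat\Mv-\Mv\|_F)$ dependence. Your argument is more self-contained and portable (it does not rely on the geometric dihedral-cone picture and works verbatim for non-isotropic $\Sigmav$ or non-Gaussian components, as long as $\Eb_\xv\|\xv\|<\infty$), while the paper's argument is shorter but leans on a geometric assertion that is stated rather than proved. The technical concern you flag about uniformity of $L$ is minor here: the gradient bound you wrote, $\|\nabla_{\Mv'}\eta_i(\xv;\cdot)\|\le \sigma^{-2}\max_c\|\xv-\muv_c'\|$, depends on $\Mv'$ only through $\max_c\|\xv-\muv_c'\|$, which for $\|\Mv'-\Mv\|_F\le 1$ is bounded by $\max_c\|\xv-\muv_c\|+1$, and this has finite expectation under the Gaussian mixture marginal, so the mean-value argument goes through uniformly on that neighbourhood.
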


\begin{proof}
First, we define \(\Delta = \|\hat{\Mv}-\Mv\|_F\), define \(\hat{g}\) as the Bayes-optimal classifier given estimated class means \(\hat{\Mv}\) and define \(g\) as the Bayes-optimal classifier given ground truth class means \(\Mv\)
Suppose $\hat g(\xv)\neq g(\xv)$. Then, there exist indices $i\neq k$
such that \(g(\xv)=i\) and \(\hat g(\xv)=k\). Because $g(\xv)=i$ is Bayes-optimal, we have
\begin{align*}
    \|\xv-\muv_i\|\leq\|\xv-\muv_k\|\text{ and }\|\xv-\hat\muv_k\|\leq\|\xv-\hat\muv_i\|.
\end{align*}
Denote \(\zeta = \|\muv_i-\muv_k\|\). Therefore, from the geometric observation, the misclassification only happens when \(\xv\) is in the dihedral cone with angle \(\theta\), where \(\tan(\theta) = \Delta/\zeta\)~\citep{diakonikolas2018list}.Thus, the probability for misclassification is upper bounded
\begin{align*}
    \Pb[\hat{g}(\xv)\neq g(\xv)] \leq c'\theta,
\end{align*}
for a positive constant \(c'\). Since \(\Pb[\hat{\yv}_j\neq \yv|\muv_1,\cdots ,\muv_C]-\Rc^* = \Pb[\hat{g}(\xv)\neq g(\xv)]\) and from \Cref{thm:thm2} we have \(\Delta \leq c' \sqrt{1/(N+\sqrt[4]{M})}\) for positive constant \(c'\), the proof is thus complete.
\end{proof}

\section{Proof of Training Dynamics}\label{appx:dynamic}

First, we restate \Cref{thm:convergence} below.
\begin{theorem}[Restatement of \Cref{thm:convergence}]
Let $\{\Qv^{(k)},\Kv^{(k)},\Vv^{(k)}\}_{k\ge0}$ be the parameters of the first attention layer of the transformer after applying $k$ iterations of gradient descent on the population loss defined in \Cref{def:pop-loss-cot}. Then, with the initialization specified in \Cref{assump:initial}, we have
\[\|\Wv^{(k)}-\Sigmav^{-1}\|_F^2\leq c^k\|\Wv^{(0)}-\Sigmav^{-1}\|_F^2,\]
for some positive constant $c$, while the other parameters in \(\Qv^{(0)}\), \(\Kv^{(0)}\) and \(\Vv^{(0)}\) remain unchanged.
\end{theorem}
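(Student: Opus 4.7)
My plan is to reduce linear convergence of $\|\Wv^{(k)}-\Sigmav^{-1}\|_F^2$ to a per-step contraction of the form $\|\Wv^{(k+1)}-\Sigmav^{-1}\|_F^2 \le c\,\|\Wv^{(k)}-\Sigmav^{-1}\|_F^2$, which in turn follows, for some $\alpha>0$, from the one-sided inner-product bound
$$-\bigl\langle \Wv^{(k)}-\Sigmav^{-1},\,\eta^{(k)}\nabla_{\Wv}\Lc_{\textrm{CoT-train}}\bigr\rangle \;\le\; -\alpha\,\|\Wv^{(k)}-\Sigmav^{-1}\|_F^2,$$
together with a Lipschitz estimate on $\nabla_{\Wv}\Lc_{\textrm{CoT-train}}$. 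I would also need a structural invariance argument: by inspecting the attention bilinear form block by block, show that the gradient with respect to every zero/constant entry in the initialized $\Qv^{(0)}\Kv^{(0)}$ and $\Vv^{(0)}$ vanishes along the entire GD trajectory, so that only the $\Wv^{(k)}$ block evolves.

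To produce the inner-product bound, I would first send $\beta\to\infty$ in \Cref{def:p-estimation} so that the model's posterior collapses to a single-step softmax with metric $\Wv^{(k)}$ (the resulting approximation error can be made arbitrarily small). Writing the cross-entropy gradient in its canonical form $\sum_i (\pv^{(k,t)}_{ij}-\qv^{(t)}_{ij})\,\nabla_{\Wv}\log \pv^{(k,t)}_{ij}$, where the log-derivative is a quadratic form in $\xv_j-\hat{\muv}_i$, and then applying Stein's lemma against the Gaussian-mixture law of $\xv_j$ conditional on class identity, the expected gradient should decompose into two analytically tractable pieces: a \emph{posterior-difference} matrix $A_1$ that depends only on $\mathbb{E}[\pv_j^{(k,t)}-\qv_j^{(t)}]$, and a \emph{Jacobian-difference} matrix $A_2$ that depends on $\mathbb{E}[\nabla_{\xv}\pv_j^{(k,t)}-\nabla_{\xv}\qv_j^{(t)}]$.

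The crucial step is an isotropy-invariance claim: if $\Wv^{(k)}$ is a scalar multiple of $\Sigmav^{-1}$, then so is $\Wv^{(k+1)}$. Because $\Sigmav$ is isotropic, $\yv_j$ is uniform over the one-hot basis, and $P_{\Mv}$ is column-permutation symmetric, the joint law of the CoT trajectory is invariant under arbitrary relabelings of class indices, and this symmetry forces $\mathbb{E}[\pv_j^{(k,t)}]=\mathbb{E}[\qv_j^{(t)}]$, making $A_1$ vanish identically. For $A_2$, I would substitute the explicit diagonal-minus-outer-product form of the softmax Jacobian and combine it with the Gaussian-mixture moments; the same class-permutation symmetry should then collapse the resulting fourth-order tensor to the symmetric matrix $\bigl(\mathrm{diag}(1/d)-\tfrac{1}{d^2}\mathbf{1}\mathbf{1}^\top\bigr)\Mv^\top(\Wv^{(k)}-\Sigmav^{-1})$, whose inner product with $\Wv^{(k)}-\Sigmav^{-1}$ delivers exactly the desired $-\alpha\|\Wv^{(k)}-\Sigmav^{-1}\|_F^2$ bound.

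The principal difficulty I anticipate is twofold. First, Stein's lemma must be applied through the multi-step CoT coupling: the iterates $\hat{\muv}_i^{(k,\tau)}$ at earlier steps depend on $\Wv^{(k)}$, so the decomposition needs to cleanly separate the teacher-forced references $\muv_{\mathrm{ref},i}^{(\tau)}$ from the student's internal iterates and track the chain-rule contributions at each $t$. Second, isotropy preservation must be proved \emph{under} the GD update rather than only in distribution, which rests on the class-permutation symmetry surviving the cross-entropy gradient exactly; this is where I expect the bulk of the technical work to lie. Once these two ingredients are in place, standard smoothness bounds together with telescoping the one-step contraction produce the geometric rate.
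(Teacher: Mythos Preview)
Your proposal is essentially the paper's approach: Stein's-lemma decomposition of the cross-entropy gradient into posterior-difference and Jacobian-difference terms, an isotropy-invariance argument to kill $A_1$, explicit softmax-Jacobian computation reducing $A_2$ to a constant multiple of $\Wv^{(k)}-\Sigmav^{-1}$, and then a one-step contraction. The CoT-coupling difficulty you anticipate does not actually arise: in teacher forcing the reference trajectory $\{\muv_{\mathrm{ref},i}^{(\tau)}\}$ is input data independent of $\Wv^{(k)}$, and the paper further observes that under the population expectation this trajectory stays at the true means $\{\muv_i\}$ for every $t$ (since the labeled-data average has mean $\muv_i$ and $\muv_i$ is a fixed point of the reference EM step), so each CoT step contributes an identical single-step gradient term with no chain rule through iterates and no separate Lipschitz estimate is needed.
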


We assume ground truth means are IID sampled from standard Gaussian distribution: \(\muv_i\sim\Nc(\mathbf{0},\Iv)\) for all \(i\).
Then, we introduce the following quantities: 1) the formulation of class mean estimations given by the transformer during teacher forcing training; 2) the reference class mean estimations given by the reference policy; and 3) the formulation of the gradient of the teacher forcing training loss.

At the \(k\)-th GD iteration during training, we denote the set of reference class mean estimations as \(\muv_{\mathrm{ref},1}^{(k,t)},\cdots,\muv_{\mathrm{ref},C}^{(k,t)}\) for the CoT steps \(t\in[T]\). Given the reference class mean estimations, the estimation given by the transformer throughout teacher forcing satisfies
\begin{align*}
    \hat{\muv}_i^{(k,t+1)}
= \muv_{\mathrm{ref},i}^{(k,t)}
- \frac{\eta^{(t)}}{M}
  \sum_{j=N+1}^{N+M}
\hat{p}_{ij}^{(k,t)}\,\bigl(\muv_{\mathrm{ref},i}^{(k,t)} - \xv_j\bigr),
\end{align*}
where \(\hat{p}_{ij}^{(k,t)}\) is given by \(\)
\begin{align*}
    \hat{p}_{ij}^{(k,t)}
=\frac{\sum_{\tau=0}^{t}
             \exp\Bigl(-\tfrac{w}{2}\|\hat{\muv}_i^{(\tau)}\|^2 + \xv_j^\top\Wv^{(k)}\hat{\muv}_i^{(\tau)}+\beta \tau\Bigr)}{\sum_{\tau=0}^{t}\sum_{c=1}^{C}
             \exp\Bigl(-\tfrac{w}{2}\|\hat{\muv}_c^{(\tau)}\|^2 + \xv_j^\top\Wv^{(k)}\hat{\muv}_c^{(\tau)}+\beta \tau\Bigr)}.
\end{align*}
By choosing \(\beta\rightarrow \infty\), we further have
\begin{align*}
    \hat{p}_{ij}^{(k,t)}
=\frac{
             \exp\Bigl(-\tfrac w2\|\hat{\muv}_i^{(\tau)}\|^2 + \xv_j^\top\Wv^{(k)}\hat{\muv}_i^{(\tau)}\Bigr)}{\sum_{c=1}^{C}
             \exp\Bigl(-\tfrac w2\|\hat{\muv}_c^{(\tau)}\|^2 + \xv_j^\top\Wv^{(k)}\hat{\muv}_c^{(\tau)}\Bigr)}.
\end{align*}
We choose the reference policy under which 
\begin{align*}
    \muv_{\mathrm{ref},i}^{(k,t+1)}
= \muv_{\mathrm{ref},i}^{(k,t)}
- \frac{\eta^{(t)}}{M}
  \sum_{j=N+1}^{N+M}
    {p}_{ij}^{(k,t)}\,\bigl(\muv_{\mathrm{ref},i}^{(k,t)} - \xv_j\bigr),
\end{align*}
with 
\begin{align*}
    {p}_{ij}^{(k,t)}
=\frac{\exp\Bigl(-\tfrac12\|\muv_{\mathrm{ref},i}^{(k,t)} - \xv_j\|^2_{\Sigma^{-1}}\Bigr)}{\sum_{c=1}^{C}
             \exp\Bigl(-\tfrac12\|\muv_{\mathrm{ref},c}^{(k,t)} - \xv_j\|^2_{\Sigma^{-1}}\Bigr)}.
\end{align*}

To simplify the notation, when there is no ambiguity, we drop the superscript \((k)\) for the training iteration. 
Denote \(\hat{\qv}_{j}^{(t)} = [\hat{p}_{1j}^{(t)}\cdots \hat{p}_{Cj}^{(t)}]\) and \({\qv}_{j}^{(t)} = [{p}_{1j}^{(t)}\cdots {p}_{Cj}^{(t)}]\).
At the \(k\)-th training iteration, the CoT training loss with teacher forcing is
\begin{align*}
    \hat{\Lc}_{\textrm{CoT-train}}(\Thetav;\Ic_\Mv) 
    &= \frac{1}{T}\sum_{t=1}^T\sum_{j=N+1}^{N+M}\mathrm{CE}\left( \qv_j^{(t)}, [\tf_{\Thetav}(\Hv_{\mathrm{ref}}^{(t-1)})]_{2d+2c+1:2d+3c, N+j}\right)\\
    &=\frac{1}{T}\sum_{t=1}^T\sum_{j=N+1}^{N+M}\mathrm{CE}\left(\qv_{j}^{(t)}, \hat{\qv}_j^{(t)}\right),
\end{align*}
where \(\mathrm{CE}\) is the cross entropy loss function.

Define \(s_{ij}^{(t)} = -\tfrac12\|\hat{\muv}_i^{(\tau)}\|^2 + \xv_j^\top\Wv^{(k)}\hat{\muv}_i^{(\tau)}\) and \(\sv_{j}^{(t)} = [s_{1j}^{(t)}\cdots s_{Cj}^{(t)}]\). Note that the derivative can be written as
\begin{align*}
    \frac{\partial \mathrm{CE}\left(\qv_j^{(t)}, \hat{\qv}_{j}^{(t)}\right)}{\partial s_{ij}^{(t)}} 
    =
    \frac{\partial \frac{\exp(s_{ij}^{(t)})}{\sum_{k=1}^C \exp(s_{kj}^{(t)})}}{\partial s_{ij}^{(t)}} 
    = \hat{p}_{ij}^{(t)}-{p}_{ij}^{(t)}.
\end{align*}

Furthermore, since \(\partial s_{ij}^{(t)}/\partial \Wv_{ab} = \Mv_{a,i}\xv_{jb}\), where \(a,b\in[d]\), by the chain rule, we have
\begin{align}
\frac{\partial \mathrm{CE}\left(\qv_j^{(t)}, \hat{\qv}_{j}^{(t)}\right)}{\partial \Wv_{ab}}
=
\frac{\partial \mathrm{CE}\left(\qv_j^{(t)}, \hat{\qv}_{j}^{(t)}\right)}{\partial s_{ij}^{(t)}}\frac{\partial s_{ij}^{(t)}}{\partial \Wv_{ab}} = \sum_{i}(\hat{p}_{ij}^{(t)}-{p}_{ij}^{(t)})\Mv_{a,i}\xv_{jb}.\label{equ:c.1}    
\end{align}

Based on the notations, we will prove \Cref{thm:convergence} as follows.

\textbf{Step 1: Given the gradient of the cross entropy loss with respect to the learnable parameter matrix \(\Wv\), our first step is to provide a decomposition of the gradient so that it becomes analytically tractable.}

In the matrix form, \Cref{equ:c.1} can be written as 
$$\nabla_\Wv \mathrm{CE}\left(\qv_j^{(t)}, \hat{\qv}_{j}^{(t)}\right) =\Mv(\hat{\qv}_{j}^{(t)}-\qv_j^{(t)})\xv_j^T.$$ 

By the Stein's lemma, we have
\begin{align*}
     &\mathbb{E}[\Mv(\hat{\qv}_{j}^{(t)}-\qv_j^{(t)})\xv_j^\top]\\
     &=\Eb_{\Mv}\left[\Mv\left[\Eb_{\xv_j}[\hat{\qv}_{j}^{(t)}-\qv_j^{(t)}]\Eb[\xv_j^\top]+\Eb_{\xv_j}[\nabla \hat{\qv}_{j}^{(t)}-\nabla\qv_j^{(t)}]\Sigmav\right]\right]\\
     &=
     \underbrace{\Eb_{\Mv}\left[\Mv\Eb_{\xv_j}[\hat{\qv}_{j}^{(t)}-\qv_j^{(t)}]\Eb[\xv^\top]\right]}_{\Ac_1}+\underbrace{\Eb_{\Mv}\left[\Mv\Eb_{\xv_j}[\nabla \hat{\qv}_{j}^{(t)}-\nabla\qv_j^{(t)}]\Sigmav\right]}_{\Ac_2}.
 \end{align*}

\textbf{Step 2: Based on the decomposition, we aim to show that \(\Ac_1=0\).} 

We note that when taking the expectation over the labeled dataset, we have \(\Eb\left[\frac{C}{N}\sum_{j\in [N]}\xv_j\cdot\bigl(\ev_i^{\top}\yv_j\bigr)\right] = \muv_i\). Therefore, \(\muv_{\mathrm{ref,i}}^{0} = \muv_i\). When the reference class mean estimations are generated by gradient descent over the population loss, we have \(\muv_{\mathrm{ref},i}^{(t)} = \muv_i\) for any \(i\in[C]\) and \(t\in[T]\) the gradient over the population loss is zero:
\begin{align*}
    &\mathbb{E}_{\xv}\left[
    \frac{\exp\Bigl(-\tfrac12\|\muv_{\mathrm{ref},i}^{(t)} - \xv_j\|^2_{\Sigmav^{-1}}\Bigr)}{\sum_{c=1}^{C}
             \exp\Bigl(-\tfrac12\|\muv_{\mathrm{ref},c}^{(t)} - \xv_j\|^2_{\Sigmav^{-1}}\Bigr)}\,\bigl(\muv_{\mathrm{ref},i}^{(t)} - \xv_j\bigr)\right] \\
             &=
             \int_{\mathbb{R}^{d}}
     \frac{\exp(-\tfrac12\|\muv_{\mathrm{ref},i}^{(t)}-\xv\|_{\Sigmav^{-1}}^{2})}
          {\sum_{c=1}^{C}\exp(-\tfrac12\|\muv_{\mathrm{ref},c}^{(t)}-\xv\|_{\Sigmav^{-1}}^{2})}\,
     \Bigl[\sum_{k=1}^{C}\frac{1}{C}\varphi_k(\xv)\Bigr](\muv_{\mathrm{ref},i}^{(t)}-\xv)d\xv,\\
     &\overset{(a)}{=}\int_{\mathbb{R}^{d}}\frac{1}{C}\varphi_i(\xv)(\muv_i-\xv)d\xv=0,
     \end{align*}
     where \(\varphi_i(\xv)\) is the pdf of Gaussian distribution with mean \(\muv_i\) and covariance matrix \(\Sigmav\), and equality \((a)\) holds since \(\muv_{\mathrm{ref},i}^{(k,t)} = \muv_i\). Given the above-discussed property of the reference class mean estimations, for \(\Eb_{\xv_j}[\hat{\qv}_j^{(t)}-\qv_j^{(t)}]\) in \(\Ac_1\), its is obvious that \(\Eb_{\xv_j}[\qv_j^{(t)}] = 1/C\). For \(\Eb_{\xv_j}[\hat{\qv}_j^{(t)}]\), we let \(\Wv^{(0)}\) initialize form a isotropic matrix \(w\Iv\), and we assume at training iteration step \(k\), it preserve the isotropic as \(\Wv^{(k)}\). Therefore, since the ground truth \(\Sigmav\) is an isotropic matrix, the temperature acts identically on all classes:
     \begin{align*}
         \Eb\left[\frac{
             \exp\Bigl(-\tfrac\alpha2\|\hat{\muv}_i^{(\tau)}-\xv_j\|^2\Bigr)}{\sum_{c=1}^{C}
             \exp\Bigl(-\tfrac\alpha2\|{\muv}_c-\xv_j\|^2\Bigr)}\right] = \Eb\left[\frac{
             \exp\Bigl(-\tfrac12\|\hat{\muv}_i^{(\tau)}-\xv_j\|^2_{\Sigmav^{-1}}\Bigr)}{\sum_{c=1}^{C}
             \exp\Bigl(-\tfrac12\|{\muv}_c-\xv_j\|^2_{\Sigmav^{-1}}\Bigr)}\right].
     \end{align*}
     Therefore, we have \(\Eb_{\xv_j}[\hat{\pv}_j^{(t)}-\pv_j^{(t)}]=0\), which gives \(\Ac_1=0\).

\textbf{Step 3: Finally, we analyze the properties of \(\Ac_2\), and obtain the final results afterwards.}
 We will prove that \(\Wv^{(t)}\) preserves isotropic by induction. Note that  we assume training iteration step \(k\), \(\Wv^{(k)}\) is isotropic. Besides, we initialize \(\Wv^{(0)}\) as an isotropic matrix. 
 
\(\Ac_2\) can be rewritten as
\begin{align*}
    \Ac_2 
    &= \Eb_{\Mv}\left[\Mv\Eb_{\xv_j}[\nabla \hat{\qv}_{j}^{(t)}-\nabla\qv_j^{(t)}]\Sigmav\right]\\
    &= \Eb_{\Mv}\left[\Mv\left(\left(\diag(\Eb[\hat{\qv}_j^{(t)}])-\Eb_{\xv}[\hat{\qv}_j^{(t)}(\hat{\qv}_j^{(t)})^\top]\right)\Mv^\top\Wv^{(k)}\Sigmav-\left(\diag(\Eb[\qv_j^{(t)}])-\Eb_{\xv}[\qv_j^{(t)}(\qv_j^{(t)})^\top]\right)\Mv^\top\right)\right].
\end{align*}

Because the class prior is uniform and the isotropic
initialisation, we have \(\Eb_{\xv_j}\!\bigl[\hat{\qv}_j^{(t)}\bigr]
   =\Eb_{\xv_j}\!\bigl[\qv_j^{(t)}\bigr]
   =\mathbf{1}/C\).
Since each coordinate of $\hat{\qv}_j^{(t)}$ (or $\qv_j^{(t)}$) has the
same marginal distribution and any two distinct coordinates have the same
joint distribution, we have
\[
   \diag(\Eb[\hat{\qv}_j^{(t)}])=\diag(\qv_j^{(t)})=\frac1C\Iv,\qquad
   \Eb_{\xv}[\hat{\qv}_j^{(t)}(\hat{\qv}_j^{(t)})^{\!\top}]
   =\Eb_{\xv}[\qv_j^{(t)}(\qv_j^{(t)})^{\!\top}]
   =\frac1{C^2}\mathbf1\mathbf1^{\!\top}.
\]
Therefore, we have
\begin{align*}
    \Ac_2 = \Eb_{\Mv}\left[\Mv\left(\diag(1/C)-\frac{1}{C^2}\mathbf{1}\mathbf{1}^\top\right)\Mv^\top\left(\Wv^{(k)}\Sigmav-\Iv\right)\right].
\end{align*}

Note that \(\nabla_{\Wv}L_{\text{CoT}}(\Wv^{(k)}) =\Ac_2\), therefore, we obtain
\begin{align*}
    \|\nabla_{\Wv}L_{\text{CoT}}(\Wv^{(k)})\|_F =& \left\|\Eb_{\Mv}\left[\Mv\left(\diag(1/C)-\frac{1}{C^2}\mathbf{1}\mathbf{1}^\top\right)\Mv^\top\left(\Wv^{(k)}\Sigmav-\Iv\right)\right]\right\|_F\\
    =&
    \left\|\Eb_{\Mv}\left[\frac{1}{C}\Mv\Mv^\top-\frac{1}{C^2}\Mv\mathbf{1}\mathbf{1}^\top \Mv^\top\right]\left(\Wv^{(k)}\Sigmav-\Iv\right)\right\|_F\\
    =&\sigma^2\left(1-\frac{1}{C}\right)\left\|\left(\Wv^{(k)}-\Sigmav^{-1}\right)\right\|_F.
\end{align*}

Since all columns in \(\Mv\) are sampled from \(\Nc(\mathbf{0},\Iv)\) and \(\Wv^{(k)}\) is assumed to be an isotropic matrix, it's obvious that \(\Ac_2\) is also an isotropic matrix.
It follows that
\begin{align*}
    &\langle\Wv^{(k)}-\Sigmav^{-1},\nabla_{\Wv}L_{\text{CoT}}\rangle\\
    &=
    \Eb_{\Mv}\left[\trace\left(\Mv\left(\diag(1/C)-\frac{1}{C^2}\mathbf{1}\mathbf{1}^\top\right)\Mv^\top\left(\Wv^{(k)}-\Sigmav^{-1}\right)\Sigmav\left(\Wv^{(k)}-\Sigmav^{-1}\right)^\top\right)\right]\\
    &\overset{(a)}{=}
    \sigma^2\trace\bigg(\Eb_{\Mv}\Big[\frac{1}{C}\Mv\Mv^\top\Big]\Big(\Wv^{(k)}-\Sigmav^{-1}\Big)\Big(\Wv^{(k)}-\Sigmav^{-1}\Big)^\top\\
    &\quad -\Eb_{\Mv}\Big[\frac{1}{C^2}\Mv\mathbf{1}\mathbf{1}^\top\Mv^\top\Big]\Big(\Wv^{(k)}-\Sigmav^{-1}\Big)\Big(\Wv^{(k)}-\Sigmav^{-1}\Big)^\top\bigg)\\
    &=
    \sigma^2\trace\left(\left(\Wv^{(k)}-\Sigmav^{-1}\right)\left(\Wv^{(k)}-\Sigmav^{-1}\right)^\top-\frac{1}{C}\left(\Wv^{(k)}-\Sigmav^{-1}\right)\left(\Wv^{(k)}-\Sigmav^{-1}\right)^\top\right)\\
    &=\sigma^2\left(1-\frac{1}{C}\right)\|\Wv^{(k)}-\Sigmav^{-1}\|_F^2.
\end{align*}
where equation \((a)\) follows from the assumption that \(\Sigmav = \sigma^2\Iv\).

Let \(\gamma = \sigma^2(1-1/C)\). Then,
\begin{align*}
   \|\Wv^{(k+1)}-\Sigmav^{-1}\|_F^2\le &\|\Wv^{(k)}-\Sigmav^{-1}\|_F^2-2\gamma\eta\|\Wv^{(k)}-\Sigmav^{-1}\|_F^2
            +\eta^{2}\gamma^2\|\Wv^{(k)}-\Sigmav^{-1}\|_F^2\\
            \leq &(1-\gamma\eta)^2\|\Wv^{(k)}-\Sigmav^{-1}\|_F^2
\end{align*}

Select step size such that $(1-\gamma\eta)^2\leq 1$ and let $c:=(1-\gamma\eta)^2$, we obtain
\begin{align*}
   \|\Wv^{(k)}-\Sigmav^{-1}\|_F^2\leq c^k\|\Wv^{(0)}-\Sigmav^{-1}\|_F^2.
\end{align*}

\section{Auxiliary Lemmas}\label{appendix:one-layer}
\begin{lemma}[Stein's Lemma]
Let \(X \in \mathbb{R}^d\) be a random vector with
\[
X \sim \mathcal{N}(\mu, \Sigma),
\]
where \(\mu\in\mathbb{R}^d\) and \(\Sigma\in\mathbb{R}^{d\times d}\) is a positive definite matrix.
Let \(f:\mathbb{R}^d\to\mathbb{R}^k\) be a continuously differentiable function such that
\[
\mathbb{E}\bigl[\|f(X)\|\bigr] < \infty \quad \text{and} \quad \mathbb{E}\bigl[\|\nabla f(X)\|_F\bigr] < \infty,
\]
where \(\|\cdot\|\) denotes the Euclidean norm in \(\mathbb{R}^k\) and \(\|\cdot\|_F\) is the Frobenius norm.
Then, the following identity holds:
\[
\mathbb{E}\Bigl[(X-\mu)\,f(X)^T\Bigr] = \Sigma\,\mathbb{E}\Bigl[\nabla f(X)\Bigr],
\]
where \(\nabla f(X)\) is the \(k \times d\) Jacobian matrix of \(f\) evaluated at \(X\).
\end{lemma}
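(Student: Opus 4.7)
The plan is to reduce the claim to the scalar Stein identity by first proving the one-dimensional version, then extending coordinatewise to a standard multivariate Gaussian, and finally lifting to an arbitrary Gaussian by a whitening change of variables.

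First I would establish the scalar case: for $Z \sim \mathcal{N}(0,1)$ and any $C^1$ function $g:\mathbb{R}\to\mathbb{R}$ with $\mathbb{E}[|g(Z)|]+\mathbb{E}[|g'(Z)|]<\infty$, the identity $\mathbb{E}[Zg(Z)] = \mathbb{E}[g'(Z)]$ holds. This comes from integration by parts against the standard normal density $\varphi(z)=(2\pi)^{-1/2}e^{-z^2/2}$, using the key fact $\varphi'(z)=-z\varphi(z)$. To handle the boundary, I would truncate to $[-R,R]$, write $\int_{-R}^{R}g'(z)\varphi(z)dz = [g(z)\varphi(z)]_{-R}^{R}+\int_{-R}^{R}zg(z)\varphi(z)dz$, and argue that the boundary terms vanish along a subsequence $R\to\infty$: the integrability of $|g|\varphi$ and $|g'|\varphi$ combined with the super-exponential decay of $\varphi$ preclude $|g(\pm R)|\varphi(\pm R)$ from staying bounded away from $0$ on a set of infinite measure. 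Dominated convergence then extends the identity to the full integral.

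Next I would lift to the standard multivariate case: for $Y\sim\mathcal{N}(0,I_d)$ and a $C^1$ function $h:\mathbb{R}^d\to\mathbb{R}^k$ with $\mathbb{E}[\|h(Y)\|]+\mathbb{E}[\|\nabla h(Y)\|_F]<\infty$, I would apply the scalar identity coordinatewise by conditioning. Since $Y_i\mid Y_{-i}\sim\mathcal{N}(0,1)$ is independent of $Y_{-i}:=(Y_j)_{j\neq i}$, the scalar case applied with $Y_{-i}$ fixed gives $\mathbb{E}[Y_i h_j(Y)\mid Y_{-i}] = \mathbb{E}[\partial_i h_j(Y)\mid Y_{-i}]$, and the outer expectation yields $\mathbb{E}[Y_i h_j(Y)]=\mathbb{E}[\partial_i h_j(Y)]$. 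Stacking these entries produces $\mathbb{E}[Y h(Y)^T]=\mathbb{E}[\nabla h(Y)]$ under the convention that makes dimensions match the left-hand side.

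Finally I would reduce the general case to the isotropic one by whitening. Since $\Sigma\succ 0$ has a symmetric square root $\Sigma^{1/2}$, setting $Y:=\Sigma^{-1/2}(X-\mu)$ gives $Y\sim\mathcal{N}(0,I_d)$. Defining $h(y):=f(\mu+\Sigma^{1/2}y)$, the chain rule gives $\nabla_y h(y)=\Sigma^{1/2}\nabla_x f(\mu+\Sigma^{1/2}y)$, and the hypotheses on $f$ transfer to $h$. Applying the standard-normal identity to $h,Y$ and using $X-\mu=\Sigma^{1/2}Y$ gives
\[
\mathbb{E}[(X-\mu)f(X)^{T}]=\Sigma^{1/2}\mathbb{E}[Yh(Y)^{T}]=\Sigma^{1/2}\mathbb{E}[\nabla h(Y)]=\Sigma^{1/2}\cdot\Sigma^{1/2}\mathbb{E}[\nabla f(X)]=\Sigma\,\mathbb{E}[\nabla f(X)],
\]
which is the desired identity. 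The main obstacle is the boundary argument in the scalar step, where the hypotheses are only $L^{1}$-type; a clean workaround, if the subsequence argument feels delicate, is to prove the identity first for $f$ smooth with polynomial growth (boundary terms vanish trivially against Gaussian weight) and then extend to the general hypothesis class by mollification plus a standard truncation/density argument, using the integrability bounds as the dominating envelope.
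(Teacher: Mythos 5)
Your proof is correct, but there is nothing in the paper to compare it against: the paper states this as an unproved auxiliary lemma (it is invoked once, in Step 1 of the proof of Theorem 4, and treated as standard). Your three-stage reduction --- scalar integration by parts, coordinatewise extension by conditioning on $Y_{-i}$, then whitening via $\Sigma^{1/2}$ --- is the standard textbook derivation and is sound. Two minor remarks. First, the lemma as stated in the paper has a dimension mismatch: the left-hand side is $d\times k$ while $\Sigma\,\mathbb{E}[\nabla f(X)]$ is $(d\times d)(k\times d)$ under the declared ``$k\times d$ Jacobian'' convention; you correctly resolve this by fixing the convention that makes the shapes agree (i.e.\ reading $\nabla f$ as the $d\times k$ matrix of partials $\partial_{x_i} f_j$), and indeed the paper itself uses the identity in Appendix B in the form $\mathbb{E}[\nabla\qv_j]\Sigmav$, consistent with that reading. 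Second, your boundary-term argument in the scalar step (extracting a subsequence $R_n$ with $|g(\pm R_n)|\varphi(\pm R_n)\to 0$ from $\mathbb{E}|g(Z)|<\infty$) is valid but slightly delicate, and by itself does not show that $\mathbb{E}[Zg(Z)]$ is absolutely convergent; the cleaner route you gesture at can be made fully elementary by writing $g(z)=g(0)+\int_0^z g'(u)\,du$ and applying Fubini with the identity $\int_u^\infty z\varphi(z)\,dz=\varphi(u)$, which simultaneously proves $\mathbb{E}|Zg(Z)|<\infty$ and the identity, with no subsequence or mollification needed.
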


\end{document}